\newtheorem{theorem}{Theorem}
\newtheorem{lemma}{Lemma}
\newtheorem{proposition}{Proposition}
\newtheorem{corollary}{Corollary}
\renewcommand{\hat}{\widehat}
\renewcommand{\hat}{\widehat}
\newcommand{\bfm}[1]{\ensuremath{\mathbf{#1}}}
   \def\bA{\bfm A}  
   \def\bB{\bfm B}  
   \def\bC{\bfm C}  
   \def\bD{\bfm D}  
\def\be{\bfm e}     \def\EE{\mathbb{E}}
   \def\bM{\bfm M}  
   \def\bN{\bfm N}  
   \def\bP{\bfm P}  \def\PP{\mathbb{P}}
   \def\bR{\bfm R}  \def\RR{\mathbb{R}}
   \def\bS{\bfm S}  
   \def\bT{\bfm T}
   \def\bX{\bfm X}  
   \def\bY{\bfm Y}  
   \def\bZ{\bfm Z}  
\def\calA{{\cal  A}} 
\def\calB{{\cal  B}}
\def\calE{{\cal  E}}
\def\calM{{\cal  M}} 
\def\calN{{\cal  N}} 
\def\calP{{\cal  P}}
\def\calT{{\cal  T}}
\newcommand{\bfsym}[1]{\ensuremath{\boldsymbol{#1}}}
            \def\bDelta {\bfsym {\Delta}}
\DeclareMathOperator{\sgn}{sgn}
\DeclareMathOperator{\tr}{tr}
\def\newpage{\vfill\eject}
\newdimen\biblioindent    \biblioindent=30pt
\def\sgn{\mbox{sgn}}
\newcommand{\beq}{\begin{equation}}
  \newcommand{\eeq}{\end{equation}}
\newcommand{\beqn}{\begin{eqnarray}}
  \newcommand{\eeqn}{\end{eqnarray}}
\newcommand{\beqnn}{\begin{eqnarray*}}
  \newcommand{\eeqnn}{\end{eqnarray*}}
\newcounter{CondCounter}
\DeclarePairedDelimiter\ceil{\lceil}{\rceil}
\DeclarePairedDelimiter\floor{\lfloor}{\rfloor}
\begin{document}
\title{Statistically Optimal and Computationally Efficient Low Rank Tensor Completion from Noisy Entries}
\date{(\today)}

\author{Dong Xia$^\ast$, Ming Yuan$^\ast$ and Cun-Hui Zhang$^\dag$\\
$^{\ast}$Columbia University and $^\dag$Rutgers University}

\footnotetext[1]{Department of Statistics, Columbia University, 1255 Amsterdam Avenue, New York, NY 10027. The research of Dong Xia and Ming Yuan was supported in part by NSF Grants DMS-1265202 and DMS-1721584.}
\footnotetext[2]{Department of Statistics and Biostatistics, Rutgers University, Piscataway, NJ 08854. The research of Cun-Hui Zhang was supported in part by NSF Grants DMS-1129626, DMS-1209014 and DMS-1721495.}

\maketitle

\begin{abstract}
In this article, we develop methods for estimating a low rank tensor from noisy observations on a subset of its entries to achieve both statistical and computational efficiencies. There have been a lot of recent interests in this problem of noisy tensor completion. Much of the attention has been focused on the fundamental computational challenges often associated with problems involving higher order tensors, yet very little is known about their statistical performance. To fill in this void, in this article, we characterize the fundamental statistical limits of noisy tensor completion by establishing minimax optimal rates of convergence for estimating a $k$th order low rank tensor under the general $\ell_p$ ($1\le p\le 2$) norm which suggest significant room for improvement over the existing approaches. Furthermore, we propose a polynomial-time computable estimating procedure based upon power iteration and a second-order spectral initialization that achieves the optimal rates of convergence. Our method is fairly easy to implement and numerical experiments are presented to further demonstrate the practical merits of our estimator.
\end{abstract}

\newpage
\section{Introduction}
Let $\bT\in\mathbb{R}^{d_1\times\cdots\times d_k}$ be a $k$th order tensor, or multilinear array. In the noisy tensor completion problem, we are interested in recovering $\bT$ from observations of a subset of its entries. More specifically, our sample consists of $n$ independent copies $\{(Y_i, \omega_i): 1\le i\le n\}$ of a random pair $(Y, \omega)$ obeying
\begin{equation}
\label{eq:model}
Y=T(\omega)+\xi,
\end{equation}
where $\omega$ is uniformly sampled from $[d_1]\times\ldots\times[d_k]$ where $[d]=\{1,2,\ldots,d\}$, and independent of the measurement error $\xi$ that is assumed to be a centered subgaussian random variable. Of particular interest here is the high dimensional settings where the sample size $n$ may be much smaller than the ambient dimenision $d_1\cdots d_k$. In this case, it may not be possible to estimate an arbitrary $k$th order tensor well but it is possible to do so if we focus on tensors that resides in a manifold of lower dimension in $\RR^{d_1\times\cdots\times d_k}$. A fairly general and practically appropriate example is the class of tensors of low rank. Problems of this type arise naturally in a wide range of applications including imaging and computer vision \citep[e.g.,][]{LiLi10,liu2013tensor,xu2013block}, signal processing \citep[e.g.,][]{lim2010multiarray,SidNion10, kreimer2013tensor,Semerci14}, latent variable modeling \citep[e.g.,][]{CohenCollins12, chaganty2013spectral, anandkumar2014tensor, xie2016topicsketch}, to name a few. Although many statistical methods and algorithms have been proposed for these problems, very little is known about their theoretical properties and to what extent they work and may not work. %

An exception is the special case of matrices, that is $k=2$, for which low rank completion from noisy entries is well-understood. See, e.g., \cite{candes2010matrix, keshavan2010matrix, koltchinskii2011nuclear, rohde2011estimation, klopp2014noisy} and references therein. In particular, as shown by \cite{koltchinskii2011nuclear}, an estimator based on nuclear norm regularization, denoted by $\hat{\bT}^{\rm KLT}$, converges to $\bT$ at the rate of
\begin{equation}\label{eq:matrixcompletion}
\frac{\|\hat{\bT}^{\rm KLT}-\bT\|_{\ell_2}}{(d_1d_2)^{1/2}}=O_p\left((\|\bT\|_{\ell_\infty}\vee \sigma_\xi)\sqrt{\frac{r(d_1\vee d_2)\log(d_1\vee d_2)}{n}}\right),
\end{equation}
where $a\vee b=\max\{a,b\}$, and $\|\cdot\|_{\ell_p}$ ($p\ge 1$) denotes the vectorized $\ell_p$ norm. Hereafter, all limits are taken as the dimension $d_j$s tend to infinity. Note that the dimension of the manifold of rank $r$ matrices is of the order $r(d_1+d_2)$, the aforementioned convergence rate is therefore expected to be optimal, up to the logarithmic factor. Indeed a rigorous argument was given by \cite{koltchinskii2011nuclear} to show that it is optimal, up to the logarithmic factor, in the minimax sense. In contrast to the matrix case, our understanding of higher order tensors ($k\ge 3$) is fairly limited. The main goal of the current work is to fill in this void by establishing minimax lower bounds for estimating $\bT$, and developing computationally efficient methods that attain the optimal statistical performance.

Treatment of higher order tensors poses several fundamental challenges. On the one hand, many of the basic tools and properties for matrices, particularly those pertaining to low rank approximation, are no longer valid for higher order tensors. For example, many of the aforementioned estimating procedures developed for matrices are based on singular value decompositions whose generalization to tensors, however, is rather delicate. As a result, although many of these approaches have been extended to higher order tensors in recent years, their theoretical properties remain largely unclear. And recent studies on a related problem, namely nuclear norm minimization for exact tensor completion without noise, point to many fundamental differences between matrices and higher order tensors despite their superficial similarities. See, e.g., \cite{yuan2015tensor,yuan2016incoherent}. On the other hand, as pointed out by \cite{hillarlim13}, most computational problems related to higher order tensors, including the simple task of evaluating tensor spectral and nuclear norms, are typically NP-hard. This dictates that it is essential to take computational efficiency into account in devising estimating procedures for $\bT$.

Because of these difficulties, results for higher order noisy tensor completion comparable to \eqref{eq:matrixcompletion} are scarce. The strongest result to date is due to \cite{barak2016noisy}. They focused on the case of third order tensors, that is $k=3$, and proved that, under suitable conditions which we shall discuss in details later on, there is a polynomial-time computable estimator, denoted by $\hat{\bT}^{\rm BM}$, such that
\begin{equation}
\label{eq:sosrates}
\frac{1}{d_1d_2d_3}\|\hat{\bT}^{\rm BM}-\bT\|_{\ell_1}=O_p \left(\frac{\|\bT\|_{\ast}(d_{\min}d_{\max}^2)^{1/4}\log^2(d_{\max})}{\sqrt{n}}+{\|\Xi\|_{\ell_1}\over d_1d_2d_3}+{\|\Xi\|_{\ell_\infty}\over \sqrt{n}}\right),
\end{equation}
where $d_{\min}=d_1\wedge d_2\wedge d_3$, $d_{\max}=d_1\vee d_2\vee d_3$, $\|\cdot\|_\ast$ stands for the tensor nuclear norm, and $\Xi$ is a $d_1\times d_2\times d_3$ random tensor whose entries are independent copies of $\xi$. More recently, \cite{montanari2016spectral} considered approximating a general $k$th order $d\times \cdots\times d$ cubic tensor in the absence of noise, that is $\Xi=0$, and proposed a spectral method that yield an estimator $\hat{\bT}^{\rm MS}$ obeying
\begin{equation}
\label{eq:spectralrates}
\|\hat{\bT}^{\rm MS}-\bT\|_{\ell_2}=O_p \left(\frac{\|\bT\|_{\ell_2}r^{1/3}d^{k/6}\log^3(d)}{n^{1/3}}\right),
\end{equation}
under more restrictive assumptions, where $r$ is the rank of $\bT$. However, it remains unknown to what extent these bounds \eqref{eq:sosrates} and \eqref{eq:spectralrates} can be improved, especially if we take computational efficiency into account. The present article addresses this question specifically, and provides an affirmative answer.

In particular, we investigate the minimax optimal estimates for a low rank tensor under the general $\ell_p$ ($1\le p\le 2$) loss. We propose a computational efficient procedure based on low rank projection of an unbiased estimate of $\bT$, and show that, if $\bT$ is well conditioned, then the estimation error of the resulting estimate, denoted by $\hat{\bT}$, satisfies
\begin{equation}
\label{eq:rates}
\left({1\over d_1\cdots d_k}\|\hat{\bT}-\bT\|_{\ell_p}^p\right)^{1/p}=O_p\left((\|\bT\|_{\ell_\infty}\vee \sigma_\xi)\sqrt{\frac{rd_{\max}\log(d_{\max})}{n}}\right),
\end{equation}
provided that
\begin{equation}
\label{eq:sample}
n\ge C\left(r^{(k-1)/2}(d_1\cdots d_k)^{1/2}\log^{k+2}(d_{\max})+r^{k-1}d_{\max}\log^{2(k+2)}(d_{\max})\right),
\end{equation}
for a suitable constant $C>0$. Here $d_{\max}=d_1\vee \cdots\vee d_k$ as before. The above result continues to hold if $r$ represents the maximum multilinear rank of $\bT$. Note that under the typical incoherent assumptions for $\bT$, $\|\bT\|_{\ast}\ge \|\bT\|_{\ell_2}\asymp (d_1 \cdots d_k)^{1/2}\|\bT\|_{\ell_\infty}$ as we shall discuss in further details later. Therefore $\hat{\bT}$ converges to $\bT$ at a much faster rate than both $\hat{\bT}^{\rm BM}$ and $\hat{\bT}^{\rm MS}$. Furthermore, we show that the rates given by \eqref{eq:rates} are indeed minimax optimal, up to the logarithmic factor, over all incoherent tensors of low multilinear rank.

Our estimator also has its practical appeal when compared with earlier proposals. In general, computing the best low rank approximation to a large tensor is rather difficult. See, e.g., \cite{silvalim08} and \cite{hillarlim13}. The root cause of the problem is the highly nonconvex nature of the underlying optimization problem. As a result, there could be exponentially many local optima \citep[see, e.g.,][]{auffinger2013random, auffinger2013complexity}. To address this challenge, we devise a strategy that first narrows down the search area for best low rank approximation using a novel spectral method and then applies power iteration to identify a local optimum within the search area. The high level idea of combining spectral method and power iterations to yield improved estimate is similar in spirit to the classical one-step MLE. Existing polynomial-time computable estimators such as $\hat{\bT}^{\rm BM}$ often involve solving huge semidefinite programs which are known not to scale well for large problems. In contrast, our approach is not only polynomial-time computable but also very easy to implement.

It is worth pointing out that, in order to achieve the minimax optimal rate of convergence given by \eqref{eq:rates}, a sample size requirement \eqref{eq:sample} is imposed. This differs from the matrix case and appears to be inherent to tensor related problems. More specifically, \cite{barak2016noisy} argued that, if there is no polynomial-time algorithm for refuting random 3-SAT of $d$ variables with $d^{3/2-\epsilon}$ clauses for any $\epsilon>0$, then any polynomial-time computable estimator of a $d\times d\times d$ tensor $\bT$ is inconsistent whenever $n=O(d^{3/2-\epsilon})$. This suggests that the sample size requirement of the form \eqref{eq:sample} is likely necessary for any polynomial-time computable estimator because thus far, indeed there is no polynomial-time algorithm for refuting random 3-SAT of $d$ variables with $o(d^{3/2})$ clauses in spite of decades of pursuit.

Our work here is also related to a fast growing literature on exact low rank tensor completion where we observe the entries without noise, that is $\xi=0$ in \eqref{eq:model}, and aim to recover $\bT$ perfectly. See, e.g., \cite{jain2014provable, yuan2015tensor, yuan2016incoherent, xia2017polynomial} and references therein. The two types of problems, albeit connected, have many fundamental differences which manifest prominently in their respective treatment. On the one hand, the noisy completion considered here is technically more involved because of the presence of measurement error $\xi$. In fact, much of our analysis is devoted to carefully control the adverse effect of $\xi$. On the other hand, our interest in the noisy case is in seeking a good estimate or approximation of $\bT$, which is to be contrast with the noiseless case where the goal is for exact recovery and therefore more difficult to achieve. As we shall demonstrate later, this distinction allows for simpler algorithms and sharper analysis in the noisy setting.

The rest of the paper is organized as follows. We first describe the proposed estimation method in Section \ref{sec:meth}. Some useful spectral bounds are given in Section \ref{sec:prelim} which we shall use to establish theoretical properties for our estimator in Section \ref{sec:main}.  Numerical experiments are presented in Section~\ref{sec:simulation} to complement our theoretical results. Proofs of the main results are presented in Section~\ref{sec:proofs}.

\section{Methodology}
\label{sec:meth}
We are interested in estimating a tensor $\bT\in \RR^{d_1\times\cdots\times d_k}$ based on observations $\{(Y_i,\omega_i): 1\le i\le n\}$ that follow
$$
Y_i=T(\omega_i)+\xi_i,\qquad i=1,\ldots,n,
$$
assuming that $\bT$ is of low rank. To this end, we first review some basic notions and facts about tensors.

\subsection{Background and notation}
Recall that the mode-$j$ fibers of a $k$th order tensor $\bA\in \RR^{d_1\times \cdots\times d_k}$ are the $d_j$ dimensional vectors
$$
\{A(i_1,\ldots,i_{j-1},\cdot,i_{j+1},\ldots,i_k): i_1\in [d_1],\ldots, i_k\in [d_k]\},
$$
that is, vectors obtained by fixing all but the $j$th indices of $\bA$. Let $\calM_j(\bA)$ be the mode-$j$ flattening of $\bA$ so that $\calM_j(\bA)$ is a $d_j\times (d_1\cdots d_{j-1}d_{j+1}\cdots d_k)$ matrix whose column vectors are the mode-$j$ fiber of $\bA$. For example, for third order tensor
$\bA\in \RR^{d_1\times d_2\times d_3}$, define the matrix $\calM_1(\bA)\in \RR^{d_1\times (d_2d_3)}$ by the entries
$$
\calM_1(\bA)(i_1,(i_2-1)d_3+i_3)=A(i_1,i_2,i_3),\qquad \forall i_1\in[d_1], i_2\in[d_2], i_3\in[d_3].
$$
Denote by $r_j(\bA)$ the rank of $\calM_j(\bA)$. The tuple $(r_1(\bA),\ldots,r_k(\bA))$ is often referred to the multilinear ranks of $\bA$. Denote by $r_{\max}(\bA):=\max\big\{r_1(\bA),\ldots,r_k(\bA)\big\}$. 

Another common notion of tensor rank is the so-called canonical polyadic (CP) rank. Recall that a rank-one tensor can be expressed as
$$
\bA=u_1\otimes\cdots\otimes u_k,
$$
for some $u_j\in \RR^{d_j}$. Here the $\otimes$ stands for the outer product so that
$$
(u_1\otimes\cdots\otimes u_k)(i_1,\ldots, i_k)=u_{1}(i_1)\cdot\cdots\cdot u_k(i_k).
$$
The CP rank, or sometimes rank for short, of a tensor $\bA$, denoted by $R(\bA)$, is defined as the smallest number of rank-one tensors needed to sum up to $\bA$. It is common in the literature to refer to a tensor as low rank if its CP rank is small. It is clear that $r_{\max}(\bA)\le R(\bA)\le r_1(\bA)\cdots r_k(\bA)/r_{\max}(\bA)$ so that a tensor of low CP rank necessarily has small multilinear ranks. We shall focus on tensors with low multilinear ranks in the rest of the paper because of this connection between the two notions of tensor ranks.

Suppose that we know a priori that $\bT$ is of low rank. A natural starting point for estimating $\bT$ is the least squares estimate:
$$
\min_{\bA\in \Theta(r_1,\ldots, r_k)}\left\{{1\over n}\sum_{i=1}^{n} \left(Y_i-A(\omega_i)\right)^2\right\}=\min_{\bA\in \Theta(r_1,\ldots, r_k)}\left\{{1\over n}\sum_{i=1}^{n} A^2(\omega_i)-{2\over n}\sum_{i=1}^{n} Y_iA(\omega_i)\right\},
$$
where, with slight abuse of notation,
$$
\Theta(r_1,\ldots, r_k):=\left\{\bA\in \RR^{d_1\times\cdots\times d_k}: r_k(\bA)\le r_k\right\}
$$
is the collection of $k$th order tensors whose multilinear ranks are upper bounded by $r_1, \ldots, r_k$ respectively. Note that similarly defined least squares estimator for tensors with small CP rank may not be well defined \citep[see, e.g.,][]{silvalim08}. Noting that $\omega_i$s are uniformly sampled from $[d_1]\times\cdots [d_k]$, we shall replace the first term on the right hand side simply by its population counterpart $\|\bA\|_{\ell_2}^2$, leading to
$$
\min_{\bA\in \Theta(r_1,\ldots, r_k)}\left\{(d_1\cdots d_k)^{-1}\|\bA\|_{\ell_2}^2-{2\over n}\sum_{i=1}^{n} Y_iA(\omega_i)\right\}=\min_{\bA\in \Theta(r_1,\ldots, r_k)}\left\{\left\|{d_1\cdots d_k\over n}\sum_{i=1}^n Y_i \be_{\omega_i}-\bA\right\|_{\ell_2}^2\right\}.
$$
Here $\be_\omega$ is a tensor whose entries are zero except that its $\omega$th entry is one. In other words, we can estimate $\bT$ by the best multilinear ranks-$(r_1,\ldots, r_k)$ approximation of
\begin{equation}
\label{eq:defTinit}
\hat{\bT}^{\rm init}:={d_1\cdots d_k\over n}\sum_{i=1}^n Y_i \be_{\omega_i}.
\end{equation}

Similar approach can also be applied to more general sampling schemes, and was first introduced by \cite{koltchinskii2011nuclear} in the matrix setting. However, there is a major challenge when doing so for higher order tensors: computing low rank approximations to a higher order  ($k\ge 3$) tensor is NP-hard in general \citep[see, e.g.,][]{hillarlim13}. This makes it practically less meaningful to study the properties of the exact projection of $\hat{\bT}^{\rm init}$ onto $\Theta(r_1,\ldots, r_k)$. To overcome this hurdle, we adapted the power iteration as a way to compute an ``approximate'' projection. We shall show in later sections that, even though it may not produce the true projection, running the algorithm for a sufficiently large but finite number of iterations is guaranteed to yield an estimate that attains the minimax optimal rate of convergence.

\subsection{Estimation via power iterations}
\label{subsec:alg}

Recall that we are interested in solving
\begin{equation}
\label{eq:obj}
\min_{\bA\in \Theta(r_1,\ldots, r_k)}\left\{\left\|\hat{\bT}^{\rm init}-\bA\right\|_{\ell_2}^2\right\}
\end{equation}
The objective function is smooth so that many smooth optimization algorithms might be employed. In particular, we shall consider using power iterations, one of the most common methods for low rank approximation. See, e.g., \cite{kolda2009tensor}.

For a tensor $\bA\in \RR^{d_1\times\cdots\times d_k}$, denote by $U_j$ the left singular vectors of $\calM_j(\bA)$. Then we can find a tensor $\bC\in \RR^{r_1(\bA)\times\cdots\times r_k(\bA)}$ such that
$$
\bA=\bC\times_1 U_1^\top\times_2\cdots\times_k U_k^\top.
$$
Here the marginal multiplication $\times_j$ between a tensor $\bA$ and a matrix $B$ of conformable dimension results in a tensor whose entries are defined as
$$
(\bA\times_j B)(i_1,\ldots, i_{j-1},i_j,i_{j+1},\ldots,i_k)=\sum_{i'_j} A(i_1,\ldots, i_{j-1},i'_j,i_{j+1},\ldots,i_k)B(i'_j,i_j).
$$
In particular, it can be derived that, if $\bA$ is the solution to \eqref{eq:obj}, then
$$
\bC=\hat{\bT}^{\rm init}\times_{j=1}^k U_j,
$$
and $U_j$ is a $d_j\times r_j$ matrix whose columns are the leading singular vectors of
$$
\calM_j\big(\hat{\bT}^{\rm init}\times_{j'\neq j} U_{j'}\big).
$$
This naturally leads to Algorithm \ref{alg:power} which updates $\bC$ and $U_j$s in an iterative fashion.

\begin{algorithm}
 \caption{{\bf Power Iterations}}\label{alg:power}
  \begin{algorithmic}[2]
  \State {\bf Input:} $\hat{\bT}^{\rm init}$, $U_j^{(0)}$ for $j=1,2,\ldots,k$, and parameter ${\rm iter}_{\max}$.
  \State {\bf Output:} $\hat{\bT}$.
  
  \State Set counter ${\rm iter}=0$.  
   \While{${\rm iter}<{\rm iter}_{\max}$}
   \State Set ${\rm iter}={\rm iter}+1$ and $j=0$.
   \While{$j<k$}
   \State Set $j=j+1$.
   \State Set $U^{({\rm iter})}_j$ to be the first $r_j$ left singular vectors of
$$
\calM_j\left(\hat{\bT}^{\rm init}\times_{j'<j} U^{({\rm iter})}_{j'}\times_{j'>j}U_{j'}^{({\rm iter}-1)}\right).
$$
   \EndWhile
   \EndWhile
  \State Return $\hat{\bT}=\hat{\bT}^{\rm init}\times_{j=1}^k U_j^{({\rm iter})}\left(U_j^{({\rm iter})}\right)^\top$.
  \end{algorithmic}
\end{algorithm}

The power iteration as described above is guaranteed to converge for any given initial value $U_j^{(0)}$s. But it is only guaranteed to converge to a local optimum of \eqref{eq:obj}. See, e.g., \cite{uschmajew2012local} and references therein for further discussion about the convergence of power method.

\subsection{Spectral initialization}
\label{sec:spectral}
Because of the highly nonconvex nature of the space of low rank tensors, the local convergence of Algorithm \ref{alg:power} may not ensure that it yields a good estimate. For example, as shown by \cite{auffinger2013random}, there could be exponentially many (in $d$s) local optima and vast majority of these local optima are far from the best low rank approximation. See also \cite{auffinger2013complexity}. An observation key to our development is that if we start from an initial value not too far from the global optimum, then a local optimum reached by these locally convergent algorithms may be as good an estimator as the global optimum. In fact, as we shall show later, if we start from an appropriate initial value, then even running Algorithm \ref{alg:power} for a finite number of iteration could yield a high quality estimate of $\bT$.

It turns out, however, that the construction of an initial value for $U_j$s that are both close to the truth, i.e., the leading left singular vectors of $\calM_j(\bT)$, and polynomial-time computable is a fairly challenging task. An obvious choice is to start with higher order singular value decomposition \citep[HOSVD; see, e.g.,][]{de2000multilinear}, and initialize $U_j$ as the left singular vectors of $\calM_j(\hat{\bT}^{\rm init})$. It is clear that how close such an initialization is to the truth is determined by the difference $\calM_j(\hat{\bT}^{\rm init})-\calM_j(\bT)$. This approach, however, neglects the fact that we are only interested in the left singular vectors of a potentially very ``fat'' ($d_j\ll (d_1\cdots d_k)/d_j$) matrix. As a result, it can be shown that an unnecessarily large amount of samples are needed to ensure that such an initialization is sufficiently ``close'' to the truth.

To overcome this difficulty, we adopt a second order spectral method developed by \cite{xia2017polynomial}. Note that the column vectors of left singular vectors of $\calM_j(\bT)$ are also the leading eigenvectors of
$$
\bN_j:=\calM_j(\bT)\calM_j(\bT)^\top.
$$
Therefore, we could consider estimating the eigenvectors of $\bN_j$ instead. Specifically, we first estimate $\bN_j$ by the following $U$-statistic:
\begin{equation}\label{eq:U-statistics}
\hat{\bN}_j:=\frac{(d_1\cdots d_k)^2}{n(n-1)}\sum_{1\leq i\neq i'\leq n}Y_iY_{i'}\calM_j(\be_{\omega_i})\calM_j(\be_{\omega_{i'}})^\top.
\end{equation}
We then initialize $U_j$ as the collection of eigenvectors of $\hat{\bN}_j$ with eigenvalues greater than a threshold $\lambda$ to be specified later.
We shall show later that $\hat{\bN}_j$ concentrates around $\bN_j$ than $\calM_j(\hat{\bT}^{\rm init})$ around $\calM_j(\bT)$ and therefore allows for better initialization of $U_j$s. In summary, our estimating procedure can be described by Algorithm \ref{alg:hosvd}.

\begin{algorithm}[htbp]
 \caption{{\bf Compute Estimate of $\bT$ from $\{(Y_i,\omega_i): 1\le i\le n\}$}}\label{alg:hosvd}
  \begin{algorithmic}[2]
  \State {\bf Input:} Observations $\{(Y_i,\omega_i): 1\le i\le n\}$, threshold $\lambda$, and parameter ${\rm iter}_{\max}$.
  \State {\bf Output:} $\hat{\bT}$.

  \State Compute $\hat{\bT}^{\rm init}$ as given by \eqref{eq:defTinit}.  
  \State Initialize $U_j$s:
   \For{$j=1,\ldots k$}
   \State Compute $\hat{\bN}_j$ as given by \eqref{eq:U-statistics}.
   \State Compute the eigenvectors, denoted by $U_j^{(0)}$, of $\hat{\bN}_j$ with eigenvalue greater than $\lambda^2$.
   \EndFor
  \State Run Algorithm \ref{alg:power} with inputs $\hat{\bT}^{\rm init}$, $U_j^{(0)}$s and ${\rm iter}_{\max}$ to get $\hat{\bT}$.
  \State Return $\hat{\bT}$.
  \end{algorithmic}
\end{algorithm}

We now turn our attention to the theoretical properties of the proposed estimating procedure, and show that with appropriately chosen threshold $\lambda$, we can ensure that the estimate produced by Algorithm \ref{alg:hosvd} is minimax optimal under suitable conditions. Before proceeding, we need a couple of probabilistic bounds regarding the quality of $\hat{\bT}^{\rm init}$ and $\hat{\bN}_j$, respectively.

\section{Preliminary Bounds}
\label{sec:prelim}
It is clear that the success of our estimating procedure hinges upon how close $\hat{\bT}^{\rm init}$ is to $\bT$, and $\hat{\bN}_j$ to $\bN_j$. We shall begin by establishing spectral bounds on them, which might also be of independent interest.

\subsection{Bounding the spectral norm of $\hat{\bT}^{\rm init}-\bT$}
\label{sec:general}

We first consider bounding the spectral norm of $\hat{\bT}^{\rm init}-\bT$. Write, for two tensors $\bA, \bB\in \RR^{d_1\times \ldots\times d_k}$,
$$
\langle \bA, \bB\rangle =\sum_{\omega\in [d_1]\times\cdots\times [d_k]} A(\omega)B(\omega)
$$
as their inner product. The spectral norm of a tensor $\bA$ is given by
$$
\|\bA\|=\max_{\substack{u_j\in \RR^{d_j}: \|u_1\|_{\ell_2},\ldots,\|u_k\|_{\ell_2}\le1}}\langle\bA, u_1\otimes\cdots\otimes u_k\rangle.
$$

The following theorem provides a probabilistic bound on the spectral norm of the difference $\hat{\bT}^{\rm init}-\bT$. \begin{theorem}
\label{th:init}
Assume that $\xi$ is subgassian in that there exits a $\sigma_\xi>0$ such that for all $s\in\mathbb{R}$,
$$\mathbb{E}\left(\exp\left\{s\xi\right\}\right)\leq \exp\left(s^2\sigma_{\xi}^2/2\right).$$
There exists a numerical constant $C>0$ such that, for any $\alpha\geq 1$, 
$$
\|\hat{\bT}^{\rm init}-\bT\|\le Ck^{k+3}\alpha\big(\|\bT\|_{\ell_\infty}\vee \sigma_{\xi}\big)\max\bigg\{\sqrt{\frac{kd_{\max}d_1\ldots d_k}{n}},\quad \frac{kd_1\ldots d_k}{n} \bigg\}\log^{k+2}d_{\max},
$$
with probability at least $1-d_{\max}^{-\alpha}$.
\end{theorem}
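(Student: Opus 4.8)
The plan is to write $\hat{\bT}^{\rm init}-\bT$ as a normalized sum of i.i.d.\ centered tensors, reduce the tensor spectral norm to a maximum over a finite family of rank-one test tensors, and control each term by a Bernstein-type tail bound; the delicate part will be sharpening the resulting linear term. Since $\mathbb{E}[Y_i\be_{\omega_i}]=(d_1\cdots d_k)^{-1}\bT$, we have $\hat{\bT}^{\rm init}-\bT=(d_1\cdots d_k)n^{-1}\sum_{i=1}^n\bX_i$ with $\bX_i:=Y_i\be_{\omega_i}-(d_1\cdots d_k)^{-1}\bT$ i.i.d.\ and mean-zero. To handle the unbounded noise I would truncate: for $\tau$ of order $\sigma_\xi$ times a suitable power of $\log(nd_{\max})$, on the event $\{\max_i|\xi_i|\le\tau\}$ (whose complement has probability at most $d_{\max}^{-\alpha}$) the sum is unchanged after replacing $\xi_i$ by $\xi_i\mathbbm{1}\{|\xi_i|\le\tau\}$, and the bias this introduces, which equals $\mathbb{E}[\xi\mathbbm{1}\{|\xi|\le\tau\}]$ times the all-ones tensor divided by $d_1\cdots d_k$, has spectral norm at most $\sigma_\xi e^{-c\tau^2/\sigma_\xi^2}\sqrt{d_1\cdots d_k}$, negligible for a suitable implied constant in $\tau$. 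It therefore suffices to bound $\|(d_1\cdots d_k)n^{-1}\sum_i\bX_i^\tau\|$ where the $\bX_i^\tau$ are i.i.d.\ centered tensors with entries bounded by $\|\bT\|_{\ell_\infty}+\tau$ and with the same second moments as $\bX_i$ up to constants.

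Next I would discretize the variational formula for the tensor spectral norm, either by fixing a $\rho$-net $\mathcal{N}_j$ of $S^{d_j-1}$ in every mode (with $\rho\asymp1/k$, so that $\|\bA\|\le2\max\{\langle\bA,u_1\otimes\cdots\otimes u_k\rangle:u_j\in\mathcal{N}_j\}$ and $\log\prod_j|\mathcal{N}_j|\lesssim(\sum_jd_j)\log k$), or, equivalently, by recursively matricizing one mode at a time and invoking a matrix Bernstein inequality at the base case. For a fixed rank-one test tensor $\bu=u_1\otimes\cdots\otimes u_k$ with unit factors, $\langle\hat{\bT}^{\rm init}-\bT,\bu\rangle=(d_1\cdots d_k)n^{-1}\sum_i\big(Y_i^\tau\,\bu(\omega_i)-(d_1\cdots d_k)^{-1}\langle\bT,\bu\rangle\big)$ is a sum of i.i.d.\ bounded centered scalars. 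Using $\mathbb{E}[\bu(\omega_1)^2]=(d_1\cdots d_k)^{-1}$ and $\mathbb{E}[(Y_1^\tau)^2]\lesssim\|\bT\|_{\ell_\infty}^2\vee\sigma_\xi^2$, the sum has variance at most $\sigma^2:=(d_1\cdots d_k)n^{-1}(\|\bT\|_{\ell_\infty}\vee\sigma_\xi)^2$, and each summand is bounded in absolute value by $b:=(d_1\cdots d_k)n^{-1}(\|\bT\|_{\ell_\infty}+\tau)$. Bernstein's inequality together with a union bound over the at most $\exp(c\sum_jd_j)$ test tensors, with the deviation level chosen so the exponent beats $c\sum_jd_j+\alpha\log d_{\max}$, then gives after undoing the discretization a bound of order $\alpha(\|\bT\|_{\ell_\infty}\vee\sigma_\xi)\,\sqrt{kd_{\max}(d_1\cdots d_k)/n}$ (the claimed sub-Gaussian term, up to logarithmic and $k$-dependent factors) plus a linear term of order $(\sum_jd_j)\,b\asymp kd_{\max}(d_1\cdots d_k)/n$.

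The main obstacle is precisely this linear term, which the crude argument above delivers a factor $d_{\max}$ too large — the gap is real, being active in the low-sample regime $n\ll (d_1\cdots d_k)/d_{\max}$, and closing it is the technical heart of the theorem. The slack arises from charging each of the $\exp(c\sum_jd_j)$ test directions with the worst-case single summand, i.e.\ from the estimate $|\bu(\omega_i)|\le1$, whereas for a typical sampled cell $\omega_i$ one has $|\bu(\omega_i)|=\prod_j|u_j(i_j)|$ of order $(d_1\cdots d_k)^{-1/2}$. I would remedy this by a dyadic stratification of the coordinates of the net vectors (equivalently of the sampled indices) according to the magnitude of the $|u_j(i_j)|$, carried out mode by mode: on each stratum the effective per-summand bound is reduced by essentially the full $(d_1\cdots d_k)^{-1/2}$, while the rare ``aligned'' configurations — where several factors $|u_j(i_j)|$ are simultaneously large — are sparse enough to be absorbed by a much smaller union bound. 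Iterating this peeling across the $k$ modes is what produces the $\log^{k+2}d_{\max}$ factor (roughly one logarithm per mode, plus the dyadic-level count) and the $k^{k+3}$ dependence, and it simultaneously tightens the constants in the sub-Gaussian term. Finally, the contribution of the noiseless part $(d_1\cdots d_k)n^{-1}\sum_iT(\omega_i)\be_{\omega_i}-\bT$ is handled by the same argument with $\sigma_\xi$ replaced by $\|\bT\|_{\ell_\infty}$, and the two estimates are combined using $\|\bT\|_{\ell_\infty}\vee\sigma_\xi$.
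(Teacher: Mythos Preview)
Your outline correctly isolates the obstacle---the naive Bernstein linear term overshoots by a factor $d_{\max}$---and correctly points toward the remedy, a dyadic stratification of the rank-one test tensors by the magnitude of their entries. This is indeed the route the paper takes, following \cite{yuan2015tensor}. However, your description of \emph{why} the stratification closes the gap is too vague to constitute a proof, and it omits the one ingredient without which the stratification still fails: a bound on the \emph{aspect ratio} of the sampled set $\Omega=\{\omega_i:1\le i\le n\}$.

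Concretely, write $A_{b_j}=\{a:|u_j(a)|=2^{-b_j/2}\}$ and set $s=\sum_j b_j$. On such a block the per-summand bound drops to $2^{-s/2}$, but the entropy of choosing the $A_{b_j}$'s is governed by $\sum_j|A_{b_j}|\log d_{\max}$, and the $\ell_2$ constraint on each $u_j$ together with $\prod_j|A_{b_j}|\le 2^q$ does \emph{not} control $\max_j|A_{b_j}|$: one can have $|A_{b_1}|=2^q$ and all other $|A_{b_{j'}}|=1$, giving entropy $\sim 2^q\log d_{\max}$ and a linear-term contribution $\sim 2^{(q-s)/2}\cdot 2^{q/2}(d_1\cdots d_k)n^{-1}\log d_{\max}$, which for $q=s$ is $2^{s/2}(d_1\cdots d_k)n^{-1}\log d_{\max}$---no gain over the crude bound once $s$ ranges up to $m^\star\asymp k\log d_{\max}$. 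What rescues the argument is that $\langle\bD_s(u_1\otimes\cdots\otimes u_k),\hat\bT^{\rm init}-\bT\rangle$ only feels entries in $\Omega$, and with high probability every fiber of $\Omega$ has at most $\nu\asymp\max\{nd_{\max}/(d_1\cdots d_k),\,k\log d_{\max}\}$ points. This allows one to \emph{thin} each $A_{b_j}$ to $\tilde A_{s,b_j}$ satisfying $|\tilde A_{s,b_j}|^2\le\nu\prod_{j'}|\tilde A_{s,b_{j'}}|$ without altering the inner product, and it is precisely this geometric constraint that forces $\sum_j|\tilde A_{s,b_j}|\lesssim k\sqrt{\nu\,2^q}$ and hence the entropy down to order $s^k\sqrt{\nu\,2^q}\log d_{\max}$ (the paper's Lemma~\ref{lemma:Dvjk}). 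The $\sqrt{\nu}$, which is polylogarithmic in the interesting regime, is what eats the spurious $d_{\max}$. Your phrase ``the rare aligned configurations are sparse enough'' is gesturing in the right direction, but the sparsity you need is a property of $\Omega$, not of the net, and it has to be made quantitative through $\nu_\Omega$.

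Two minor points. First, the paper handles the unbounded noise by Rademacher symmetrization rather than truncation; either works, but symmetrization avoids the bias bookkeeping. Second, your claim that the per-stratum summand is ``reduced by essentially the full $(d_1\cdots d_k)^{-1/2}$'' is not accurate---on stratum $s$ it is $2^{-s/2}$, and one must sum over all $s$ up to $m^\star$ with the entropy tracked carefully at each level; the final $\log^{k+2}d_{\max}$ and $k^{k+3}$ come from the number of strata ($\lesssim k\log d_{\max}$) and the $s^k$ in the entropy bound, not from a mode-by-mode peeling as you suggest.
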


In the matrix case, that is $k=2$, the bound given by Theorem \ref{th:init} is essentially the same as those from \cite{koltchinskii2011nuclear}. More importantly, Theorem \ref{th:init} also highlights a key difference between matrices ($k=2$) and higher order tensors ($k\ge 3$). To fix ideas, consider, for example, the case when $\|\bT\|_{\ell_\infty}, \sigma_{\xi}\asymp (d_1\cdots d_k)^{-1/2}$. Theorem \ref{th:init} implies that
\begin{equation}
\label{eq:initbd}
\|\hat{\bT}^{\rm init}-\bT\|=O_p\left(\max\left\{\sqrt{\frac{kd_{\max}}{n}},\frac{ k(d_1\cdots d_k)^{1/2}}{n}\right\}{\rm polylog}(d_{\max})\right),
\end{equation}
where ${\rm polylog}(\cdot)$ is a certain polynomial of the logarithmic function.  The first term on the right hand side of \eqref{eq:initbd} can be shown to be essentially optimal. In the matrix case, this is indeed the dominating term, the very reason why the best low rank approximation of $\hat{\bT}^{\rm init}$ is a good estimate of $\bT$. For higher order tensors, however, this is no longer the case, and two different rates of convergence emerge depending on the sample size. The first term is the leading term only if
$$
n\gg d_{\max}^{-1}(d_1\cdots d_k),
$$
Yet, for smaller sample sizes, the second term dominates so that
$$
\|\hat{\bT}^{\rm init}-\bT\|=O_p\left(\frac{(d_1\cdots d_k)^{1/2}{\rm polylog}(d_{\max})}{n}\right).
$$
In particular, $\hat{\bT}^{\rm init}$ is consistent in terms of spectral norm if
$$
n\gg \max\{d_{\max},(d_1\cdots d_k)^{1/2}\}\cdot{\rm polylog}(d_{\max}).
$$
In a way, this is why we need the sample size requirement such as \eqref{eq:sample}. It is in place to ensure that $\hat{\bT}^{\rm init}$ is an consistent estimate of $\bT$ in the sense of tensor spectral norm.

\subsection{Bounding the spectral norm of $\hat{\bN}_j-\bN_j$}
Now consider bounding $\|\hat{\bN}_j-\bN_j\|$. To this end, write
$$
\Lambda_{\min}(\bA)=\min\big\{\sigma_{\min}\big(\calM_1(\bA)\big),\ldots,\sigma_{\min}\big(\calM_k(\bA)\big)\big\}
$$
and
$$
\Lambda_{\max}(\bA)=\max\big\{\sigma_{\max}\big(\calM_1(\bA)\big),\ldots,\sigma_{\max}\big(\calM_k(\bA)\big)\big\}
$$
where $\sigma_{\min}(\cdot)$ and $\sigma_{\max}(\cdot)$ denote the smallest and largest nonzero singular values of a matrix. Then the condition number of $\bA$ is defined as
$$
\kappa(\bA):=\frac{\Lambda_{\max}(\bA)}{\Lambda_{\min}(\bA)}.
$$
Recall that
$$
\bN_j:=\calM_j(\bT)\calM_j(\bT)^\top.
$$
and
$$
\hat{\bN}_j:=\frac{(d_1\cdots d_k)^2}{n(n-1)}\sum_{1\leq i\neq i'\leq n}Y_iY_{i'}\calM_j(\be_{\omega_i})\calM_j(\be_{\omega_{i'}})^\top.
$$
\cite{xia2017polynomial} proved that when $k=3$ and $\sigma_\xi=0$, $\hat{\bN}_j$ is a good estimate of $\bN_j$. Our next result shows that this is also true for more general situations.

\begin{theorem}\label{thm:N1bound}
There exist absolute constants $C_1, C_2>0$ such that, for any $\alpha\geq 1$, if
$$
n\geq C_1\alpha\left(\sqrt{d_1\cdots d_k}\log d_{\max}+d_{\max}\log^2 d_{\max}\right),
$$
then, with probability at least $1-d_{\max}^{-\alpha}$,
\begin{eqnarray*}
\|\hat{\bN}_j-\bN_j\|&\leq& C_2\bigg(\left(\sigma_{\xi}+\|\bT\|_{\ell_\infty}\right)\|\calM_j(\bT)\|\sqrt{\frac{\alpha kd_jd_1\cdots d_k\log d_{\max}}{n}}+\\
&&+ \alpha^3\left(\|\bT\|_{\ell_\infty}^2+\sigma_{\xi}^2\log^2 d_{\max}\right)\frac{(kd_1\cdots d_k)^{3/2}\log^3 d_{\max}}{n}\left(1+\sqrt{{d_j^2}/(d_1\cdots d_k)}\right)\bigg).
\end{eqnarray*}
\end{theorem}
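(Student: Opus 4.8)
The plan is to view $\hat{\bN}_j$ as a second-order $U$-statistic with mean $\bN_j$ and to control its deviation by a Hoeffding (Hájek) decomposition combined with matrix Bernstein-type concentration, treating the linear and the completely degenerate parts separately. First I would expand the kernel using $Y_iY_{i'}=(T(\omega_i)+\xi_i)(T(\omega_{i'})+\xi_{i'})$ into a signal part $T(\omega_i)T(\omega_{i'})$, a cross part $T(\omega_i)\xi_{i'}+T(\omega_{i'})\xi_i$, and a pure-noise part $\xi_i\xi_{i'}$, writing $\hat{\bN}_j=\hat{\bN}_j^{(S)}+\hat{\bN}_j^{(X)}+\hat{\bN}_j^{(N)}$. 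Using that $\calM_j(\be_\omega)\calM_j(\be_{\omega'})^\top$ equals $e_{\omega_j}e_{\omega'_j}^\top$ when the remaining indices agree and is zero otherwise, together with $\E\xi=0$, a short computation gives $\E\hat{\bN}_j^{(S)}=\bN_j$ and $\E\hat{\bN}_j^{(X)}=\E\hat{\bN}_j^{(N)}=0$, so it remains to bound each centered piece.

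\textbf{Linear part.} For each piece I would compute the first-order Hájek projection, writing the $U$-statistic as an i.i.d.\ average of rank-one-type matrices plus a completely degenerate remainder. For $\hat{\bN}_j^{(S)}-\bN_j$ the projection summands are $(d_1\cdots d_k)T(\omega_i)e_{\omega_{i,j}}\rho_{\omega_{i,-j}}^\top-\bN_j$, where $\rho_{\omega_{i,-j}}$ denotes the column of $\calM_j(\bT)$ indexed by the remaining coordinates; for $\hat{\bN}_j^{(X)}$ they are $(d_1\cdots d_k)\xi_i e_{\omega_{i,j}}\rho_{\omega_{i,-j}}^\top$. In both cases the a.s.\ bound on a summand is $\lesssim (d_1\cdots d_k)(\|\bT\|_{\ell_\infty}\vee|\xi_i|)\|\calM_j(\bT)\|$, and a direct second-moment computation shows the matrix variance proxy is $\lesssim (d_1\cdots d_k)(\|\bT\|_{\ell_\infty}^2\vee\sigma_\xi^2)\,\|\calM_j(\bT)\|_F^2$, where one uses $\E[\,\|\rho_{\omega_{-j}}\|^2 e_{\omega_j}e_{\omega_j}^\top\,]=(d_1\cdots d_k)^{-1}\|\calM_j(\bT)\|_F^2\, I_{d_j}$ and then $\|\calM_j(\bT)\|_F^2\le d_j\|\calM_j(\bT)\|^2$ (since $\mathrm{rank}\le d_j$). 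Matrix Bernstein applied to these $d_j\times d_j$ sums then produces exactly the first term $(\sigma_\xi+\|\bT\|_{\ell_\infty})\|\calM_j(\bT)\|\sqrt{\alpha k d_j d_1\cdots d_k\log d_{\max}/n}$, the $\log d_{\max}$ and $\alpha$ coming from the deviation level and the factor $k$ from carrying the $k$-dependent constants through the combinatorics; the Bernstein ``large-deviation'' correction is of order $(d_1\cdots d_k)^{3/2}\|\bT\|_{\ell_\infty}^2/n$ and is absorbed into the second term.

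\textbf{Degenerate part and main obstacle.} The remaining and heaviest work is to bound the completely degenerate second-order remainders of all three pieces, which produce the second term. Here I would decouple ($\sum_{i\ne i'}\to\sum_i\sum_{i'}$ over independent copies), condition on one index set, and bound the conditional operator norm by a matrix Bernstein / matrix moment estimate; the factor $1+\sqrt{d_j^2/(d_1\cdots d_k)}$ records the two regimes in which either the ambient $d_j\times d_j$ dimension or the number $(d_1\cdots d_k)/d_j$ of column slots of $\calM_j(\cdot)$ dominates the variance computation. Because each summand carries $(d_1\cdots d_k)^2$ while $\be_{\omega_i}$ makes the sums extremely sparse, the conditional per-row/column scales are of order $(d_1\cdots d_k)(\|\bT\|_{\ell_\infty}^2\vee\xi^2)$, which after summing $n(n-1)$ terms and normalizing gives the $(d_1\cdots d_k)^{3/2}/n$ scaling. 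The genuinely delicate point is $\hat{\bN}_j^{(N)}$: the products $\xi_i\xi_{i'}$ are only subexponential, so I would truncate each $\xi_i$ at level $\sim\sigma_\xi\sqrt{\log(nd_{\max})}$ (the complement event costing $d_{\max}^{-\alpha}$ by a union bound), which is why $\sigma_\xi^2$ is replaced by $\sigma_\xi^2\log^2 d_{\max}$ in the statement, and then apply the same decoupling-plus-Bernstein scheme to the bounded variables; the $\alpha^3$ and $\log^3 d_{\max}$ arise from stacking the truncation union bound, the decoupling, and the Bernstein deviation term. Throughout, the hypothesis $n\gtrsim\alpha(\sqrt{d_1\cdots d_k}\log d_{\max}+d_{\max}\log^2 d_{\max})$ is precisely what keeps all the Bernstein bounds in their sub-Gaussian (rather than Poissonian) regime, so that the two-term bound as stated is what comes out. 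I expect the bookkeeping for this degenerate noise remainder---aligning the sparsity, the decoupling, the truncation, and the matrix dimensions simultaneously for general $k$---to be the main obstacle; the case $k=3,\ \sigma_\xi=0$ in \cite{xia2017polynomial} supplies the template for the signal part, and the new effort is the extension to general $k$ and to nonzero subgaussian noise.
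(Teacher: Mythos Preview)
Your proposal is essentially the paper's own argument, organized slightly differently. The paper decouples the whole $U$-statistic first and then writes
\[
\widetilde{\bN}_j-\bN_j=\tfrac{n}{2(n-1)}\bigl(\bS_1\bS_2^\top+\bS_2\bS_1^\top\bigr)
+\tfrac{n}{2(n-1)}\bigl((\bS_1+\bS_2)\bM^\top+\bM(\bS_1+\bS_2)^\top\bigr)+\text{(diagonal remainder)},
\]
with $\bS_\ell=\frac{d_1\cdots d_k}{n}\sum_i Y_i^{(\ell)}\calM_j(\be_{\omega_i}^{(\ell)})-\calM_j(\bT)$; this is exactly the H\'ajek split you propose, the $(\bS_1+\bS_2)\bM^\top$ terms being your linear projection and $\bS_1\bS_2^\top$ your completely degenerate part. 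The degenerate part is then handled by conditioning on one copy and applying matrix Bernstein to the other, and the noise is tamed by working on the event $\max_i|\xi_i|\lesssim\sigma_\xi\sqrt{\log n}$, which is your truncation; this is precisely the mechanism generating the extra $\log^2 d_{\max}$ on $\sigma_\xi^2$.

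One concrete detail you should not gloss over: when you condition on the second sample and apply Bernstein to $\sum_i \xi_i\,\bS_2\,\calM_j(\be_{\omega_i})^\top$ (and its signal analogue), the a.s.\ bound you need is $\|\bS_2\calM_j(\be_\omega)^\top\|$, which equals the $\ell_2$ norm of a single column of $\bS_2$. The paper controls this via $\|\bS_2\|_{1,\infty}:=\max_c\sum_a |(\bS_2)_{ac}|$ together with a Chernoff bound on the maximum number of observed entries falling in any one mode-$j$ fiber; this is where the sample-size hypothesis $n\gtrsim\alpha(\sqrt{d_1\cdots d_k}\log d_{\max}+d_j\log^2 d_{\max})$ is actually used to keep that column norm at the right scale. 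Your sentence about ``sparsity'' and ``per-row/column scales'' is pointing at this step, but when you fill in the details that column-norm control is the piece that requires the most care.
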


Let $U_j$ and $\hat{U}_j$ be the top-$r_j$ left singular vectors of $\bN_j$ and $\hat{\bN}_j$ respectively.  Applying Wedin's $\sin\Theta$ theorem \citep{wedin1972perturbation}, Theorem \ref{thm:N1bound} immediately implies that
\begin{eqnarray*}
\|\hat{U}_j\hat{U}_j^{\top}-U_jU_j^{\top}\|&\leq& C_2\left(\sigma_{\xi}+\|\bT\|_{\ell_\infty}\right)\frac{\|\calM_j(\bT)\|}{\sigma_{\min}^2(\calM_j(\bT))}\sqrt{\frac{\alpha kd_jd_1\cdots d_k\log d_{\max}}{n}}\\
&&+ C_2\alpha^3\frac{\left(\|\bT\|_{\ell_\infty}^2+\sigma_{\xi}^2\log^2 d_{\max}\right)}{\sigma_{\min}^2(\calM_j(\bT))}\frac{(kd_1\cdots d_k)^{3/2}\log^3 d_{\max}}{n}\left(1+\sqrt{\frac{d_j^2}{d_1\cdots d_k}}\right).
\end{eqnarray*}
In other words, $\hat{U}_j$s are consistent estimates of $U_j$s if
\begin{align*}
n\gtrsim \Lambda_{\min}^{-2}(\bT)\max\Big\{&\kappa(\bT)^2(\|\bT\|_{\ell_\infty}\vee \sigma_{\xi})^2d_{\max}d_1\ldots d_k\log d_{\max},\\
&(\|\bT\|_{\ell_\infty}\vee \sigma_{\xi}\log d_{\max})^2(d_1\ldots d_k)^{3/2}\log^3 (d_{\max})\left(1+\sqrt{\frac{d_j^2}{d_1\cdots d_k}}\right)\Big\}.
\end{align*}
In particular, to fix ideas, if we look at the case when $d_1=\cdots=d_k=:d$ and $\bT$ is well behaved in that $\kappa(\bT)$ and $\Lambda_{\min}(\bT)^{-1}$ are bounded from above, then this bound can be simplified as
$$
n\gtrsim (\|\bT\|_{\ell_\infty}\vee \sigma_{\xi})^2d^{3k/2}\cdot{\rm polylog}(d).
$$
This is to be contrasted with the na\"ive HOSVD for which we have
\begin{proposition}
\label{pr:hosvd}
Let $U_j$ and $\hat{U}_j^{\rm HOSVD}$ be the top $r_j$ singular vectors of $\calM_j(\bT)$ and $\calM_j(\hat{\bT}^{\rm init})$ respectively. Then there exists a universal constant $C>0$ such that, for any $\alpha\geq 1$, the following bound holds with probability at least $1-d_{\max}^{-\alpha}$,
\begin{eqnarray*}
\|\hat{U}_j^{\rm HOSVD}(\hat{U}_j^{\rm HOSVD})^\top-U_jU_j^\top\|_{\ell_2}\\
\le C\frac{(\|\bT\|_{\ell_\infty}\vee \sigma_{\xi})}{\sigma_{\min}(\calM_j(\bT))}\times \max\left\{\sqrt{\Big(d_j\vee \frac{d_1\ldots d_k}{d_j}\Big)\frac{\alpha kd_1\ldots d_k\log(d_{\max})}{n}}
, \frac{\alpha kd_1\ldots d_k\log(d_{\max})}{n}\right\}
\end{eqnarray*}
\end{proposition}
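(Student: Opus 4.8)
The plan is to combine a (rectangular) matrix Bernstein inequality for $\calM_j(\hat{\bT}^{\rm init})-\calM_j(\bT)$ with Wedin's $\sin\Theta$ theorem. Write $D=d_1\cdots d_k$, $m_j=D/d_j$, and $\bDelta=\hat{\bT}^{\rm init}-\bT$. Since $\omega_i$ is uniform on $[d_1]\times\cdots\times[d_k]$ and $\E\xi=0$, we have $\E\hat{\bT}^{\rm init}=\bT$, so
$$
\calM_j(\bDelta)=\sum_{i=1}^n X_i,\qquad X_i:=\frac{D}{n}\,Y_i\,\calM_j(\be_{\omega_i})-\frac1n\,\calM_j(\bT),
$$
is a sum of $n$ i.i.d.\ centered $d_j\times m_j$ random matrices, each of which is a rank-one matrix of operator norm $O\!\big(\tfrac{D}{n}(\|\bT\|_{\ell_\infty}+|\xi_i|)\big)$ plus a deterministic shift.

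Second, I would bound the two matrix variance proxies. The matrix $\calM_j(\be_\omega)\calM_j(\be_\omega)^\top$ equals $e_ae_a^\top$ with $a\in[d_j]$ the mode-$j$ index of $\omega$; averaging over the uniform $\omega$ and using $\E[\xi^2]\le\sigma_\xi^2$ gives $\big\|\E\sum_i X_iX_i^\top\big\|\lesssim \tfrac{D^2}{n\,d_j}(\|\bT\|_{\ell_\infty}\vee\sigma_\xi)^2=\tfrac{D\,m_j}{n}(\|\bT\|_{\ell_\infty}\vee\sigma_\xi)^2$, and symmetrically $\big\|\E\sum_i X_i^\top X_i\big\|\lesssim \tfrac{D\,d_j}{n}(\|\bT\|_{\ell_\infty}\vee\sigma_\xi)^2$ (the cross terms and the $\tfrac1n\calM_j(\bT)$ contribution are of lower order because $\|\calM_j(\bT)\|_F^2\le D\|\bT\|_{\ell_\infty}^2$). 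Hence the matrix variance is of order $\tfrac{D\,(d_j\vee m_j)}{n}(\|\bT\|_{\ell_\infty}\vee\sigma_\xi)^2$. Because $\xi$ is only subgaussian, I would truncate each $\xi_i$ at level $\tau\asymp\sigma_\xi\sqrt{\alpha k\log d_{\max}}$: on an event of probability $\ge1-d_{\max}^{-\alpha}$ no $\xi_i$ exceeds $\tau$, and the bias introduced by truncation is negligible. On that event the summands are bounded by $L\asymp\tfrac{D}{n}\big(\|\bT\|_{\ell_\infty}+\sigma_\xi\sqrt{\alpha k\log d_{\max}}\big)$, and the rectangular matrix Bernstein inequality, with ambient dimension $d_j+m_j\le D$ so that $\log(d_j+m_j)\lesssim k\log d_{\max}$, yields that with probability $\ge1-d_{\max}^{-\alpha}$,
$$
\|\calM_j(\hat{\bT}^{\rm init})-\calM_j(\bT)\|\lesssim(\|\bT\|_{\ell_\infty}\vee\sigma_\xi)\max\!\left\{\sqrt{\Big(d_j\vee\tfrac{D}{d_j}\Big)\tfrac{\alpha kD\log d_{\max}}{n}},\ \tfrac{\alpha kD\log d_{\max}}{n}\right\}.
$$

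Third, since $\calM_j(\bT)$ has rank exactly $r_j$, its $(r_j{+}1)$th singular value is $0$ and the spectral gap at level $r_j$ equals $\sigma_{\min}(\calM_j(\bT))$. Wedin's $\sin\Theta$ theorem then gives $\|\hat{U}_j^{\rm HOSVD}(\hat{U}_j^{\rm HOSVD})^\top-U_jU_j^\top\|_{\ell_2}\le 2\|\calM_j(\bDelta)\|/\sigma_{\min}(\calM_j(\bT))$ on the event where the right-hand side is below a small constant, and the claimed inequality holds trivially (after enlarging $C$, since the left side is at most $\sqrt2$) otherwise. Substituting the Bernstein bound proves the proposition. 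The main technical point is the treatment of the subgaussian tails of $\xi$ together with the bookkeeping of the two variance proxies for the ``fat'' flattening $\calM_j$; note that, unlike Theorem~\ref{th:init}, no $\varepsilon$-net over a product of spheres is needed here because we are bounding the operator norm of a matrix rather than the spectral norm of a tensor, and unlike the corollary of Theorem~\ref{thm:N1bound} we apply Wedin directly to $\calM_j$ rather than to $\bN_j$, which is why only one power of $\sigma_{\min}(\calM_j(\bT))$ appears.
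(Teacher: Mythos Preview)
Your proposal is correct and follows essentially the same route as the paper: bound $\|\calM_j(\hat{\bT}^{\rm init})-\calM_j(\bT)\|$ via a matrix Bernstein inequality with the two variance proxies $\tfrac{D d_j}{n}$ and $\tfrac{D m_j}{n}$, then invoke Wedin's $\sin\Theta$ theorem against the gap $\sigma_{\min}(\calM_j(\bT))$. The only cosmetic differences are that the paper splits the sum into the signal part $\tfrac{D}{n}\sum_i\langle\bT,\be_{\omega_i}\rangle\calM_j(\be_{\omega_i})-\calM_j(\bT)$ and the noise part $\tfrac{D}{n}\sum_i\xi_i\calM_j(\be_{\omega_i})$, applying the bounded matrix Bernstein (Lemma~\ref{lemma:matBern-bounded}) to the former and Minsker's unbounded version (Lemma~\ref{lemma:matBern-unbound}) directly to the latter, whereas you keep the two together and handle the subgaussian tail by an explicit truncation of the $\xi_i$'s. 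These are equivalent devices; your truncation at level $\tau\asymp\sigma_\xi\sqrt{\alpha k\log d_{\max}}$ produces at worst an extra $\sqrt{\log d_{\max}}$ in the second (deviation) term of the Bernstein bound, which is the same order of slack implicit in the paper's use of the Minsker log factor and is harmless for the purposes of the proposition.
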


By Proposition \ref{pr:hosvd}, in the case of well conditioned cubic tensors, to ensure that $\hat{U}_j^{\rm HOSVD}$s are consistent, we need a sample size
$$
n\gtrsim \max\left\{(\|\bT\|_{\ell_\infty}\vee \sigma_{\xi})d^k\log d,(\|\bT\|_{\ell_\infty}\vee \sigma_{\xi})^2d^{2k-1}\log d\right\},
$$
which is much more stringent than that for $\hat{U}_j$.

\section{Performance Bounds for $\hat{\bT}$}
\label{sec:main}
We are now in position to study the performance of our estimate $\hat{\bT}$, as the output from Algorithm \ref{alg:hosvd}. Our risk bound can be characterized by the incoherence of $\bT$ which we shall first describe. Coherence of a tensor can be defined through the singular space of its flattening. Let $U$ be a $d\times r$ matrix with orthonormal columns. Its coherence is given by as
$$
\mu(U)=\frac{d}{r}\max_{1\leq i\leq d}\left\|U_{i\cdot}\right\|^2_{\ell_2},
$$
where $U_{i\cdot}$ is the $i$th row vector of $U$. Now for a tensor $\bA\in \RR^{d_1\times \cdots\times d_k}$ such that $\calM_j(\bA)=U_j\Sigma_jV_j^\top$ is its thinned singular value decomposition, we can define its coherence by
$$
\mu(\bA)=\max\left\{\mu(U_1), \ldots,\mu(U_k)\right\}.
$$
Coherence of a tensor can also be measured by its spikiness:
$$
\beta(\bA):=(d_1\ldots d_k)^{1/2}\frac{\|\bA\|_{\ell_\infty}}{\|\bA\|_{\ell_2}}.
$$
The spikiness of a tensor is closely related to its coherence. 
\begin{proposition}
\label{pr:coherence}
For any $\bA\in \RR^{d_1\times\cdots\times d_k}$,
$$
\beta(\bA)\le r_1^{1/2}(\bA)\cdots r_k^{1/2}(\bA)\mu^{k/2}(\bA).
$$
Conversely,
$$
\mu(\bA)\le \beta^2(\bA)\kappa^2(\bA).
$$
\end{proposition}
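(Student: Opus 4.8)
The plan is to prove the two inequalities separately, both by elementary arguments built on the higher-order (Tucker) SVD $\bA=\bS\times_1 U_1\times_2\cdots\times_k U_k$, where $U_j\in\RR^{d_j\times r_j(\bA)}$ collects the left singular vectors of $\calM_j(\bA)$ and $\bS\in\RR^{r_1(\bA)\times\cdots\times r_k(\bA)}$ is the core tensor. Since each $U_j$ has orthonormal columns, marginal multiplication by the $U_j$'s preserves the Frobenius norm of every flattening, so $\|\bA\|_{\ell_2}=\|\bS\|_{\ell_2}$; I would record this identity at the outset, as it is used in both directions. Throughout I write $r_j$ for $r_j(\bA)$.

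For the first inequality I would fix an entry index $(i_1,\ldots,i_k)$ and expand $A(i_1,\ldots,i_k)=\sum_{l_1,\ldots,l_k}S(l_1,\ldots,l_k)\prod_{j=1}^k U_j(i_j,l_j)$. Applying Cauchy--Schwarz in the summation variables $(l_1,\ldots,l_k)$ and using the factorization $\sum_{l_1,\ldots,l_k}\prod_j U_j(i_j,l_j)^2=\prod_j\|(U_j)_{i_j\cdot}\|_{\ell_2}^2$ gives $|A(i_1,\ldots,i_k)|\le\|\bS\|_{\ell_2}\prod_{j=1}^k\|(U_j)_{i_j\cdot}\|_{\ell_2}$. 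By the definition of coherence, $\|(U_j)_{i_j\cdot}\|_{\ell_2}^2\le (r_j/d_j)\mu(U_j)\le (r_j/d_j)\mu(\bA)$; substituting this and $\|\bS\|_{\ell_2}=\|\bA\|_{\ell_2}$ yields $\|\bA\|_{\ell_\infty}\le\|\bA\|_{\ell_2}(d_1\cdots d_k)^{-1/2}\prod_j r_j^{1/2}\,\mu^{k/2}(\bA)$, which rearranges to the stated bound on $\beta(\bA)$.

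For the converse I would fix $j$ and a row index $i$. The $i$th row of $\calM_j(\bA)$ equals $(U_j)_{i\cdot}\Sigma_j V_j^\top$, so $\|(\calM_j(\bA))_{i\cdot}\|_{\ell_2}^2=(U_j)_{i\cdot}\Sigma_j^2(U_j)_{i\cdot}^\top\ge\sigma_{\min}^2(\calM_j(\bA))\,\|(U_j)_{i\cdot}\|_{\ell_2}^2\ge\Lambda_{\min}^2(\bA)\,\|(U_j)_{i\cdot}\|_{\ell_2}^2$. Since that row has $d_1\cdots d_k/d_j$ entries, each of absolute value at most $\|\bA\|_{\ell_\infty}$, its squared norm is at most $(d_1\cdots d_k/d_j)\|\bA\|_{\ell_\infty}^2$. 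Combining the two bounds gives $\|(U_j)_{i\cdot}\|_{\ell_2}^2\le (d_1\cdots d_k/d_j)\|\bA\|_{\ell_\infty}^2/\Lambda_{\min}^2(\bA)$, hence $\mu(U_j)\le (d_1\cdots d_k/r_j)\|\bA\|_{\ell_\infty}^2/\Lambda_{\min}^2(\bA)$. To absorb the factor $1/r_j$ I would use $\|\bA\|_{\ell_2}^2=\|\calM_j(\bA)\|_F^2=\sum_{l\le r_j}\sigma_l^2(\calM_j(\bA))\le r_j\Lambda_{\max}^2(\bA)$, so $1/r_j\le\Lambda_{\max}^2(\bA)/\|\bA\|_{\ell_2}^2$; plugging in gives $\mu(U_j)\le d_1\cdots d_k\,(\|\bA\|_{\ell_\infty}^2/\|\bA\|_{\ell_2}^2)\,(\Lambda_{\max}^2(\bA)/\Lambda_{\min}^2(\bA))=\beta^2(\bA)\kappa^2(\bA)$, and taking the maximum over $j$ finishes the proof.

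Neither direction presents a genuine obstacle; the only point requiring care is the bookkeeping in the converse — keeping the mode-$j$ singular values $\sigma_{\min}(\calM_j(\bA)),\sigma_{\max}(\calM_j(\bA))$ distinct from the global quantities $\Lambda_{\min}(\bA),\Lambda_{\max}(\bA)$, and recognizing that the crude estimate $\|\bA\|_{\ell_2}^2\le r_j\Lambda_{\max}^2(\bA)$ is precisely what converts the naive bound into the stated $\beta^2\kappa^2$ form.
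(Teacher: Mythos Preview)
Your proposal is correct and follows essentially the same approach as the paper's proof: for the first inequality the paper writes $\langle\bA,e_{i_1}\otimes\cdots\otimes e_{i_k}\rangle=\langle\bA,(U_1U_1^\top e_{i_1})\otimes\cdots\otimes(U_kU_k^\top e_{i_k})\rangle$ and applies Cauchy--Schwarz, which is exactly your entrywise Tucker expansion; for the converse the paper bounds the $i$th row of $\calM_j(\bA)$ above by $(d_1\cdots d_k/d_j)^{1/2}\|\bA\|_{\ell_\infty}$ and below by $\sigma_{\min}(\calM_j(\bA))\|(U_j)_{i\cdot}\|_{\ell_2}$, then absorbs $1/r_j$ via $\|\bA\|_{\ell_2}^2\le r_j\sigma_{\max}^2(\calM_j(\bA))$, identically to your argument.
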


\subsection{General risk bound}
We first provide a general risk bound for $\hat{\bT}$ when the sample size is sufficiently large.

\begin{theorem}\label{th:general}
Assume that $\xi$ is subgassian in that there exits a $\sigma_\xi>0$ such that for all $s\in\mathbb{R}$,
$$\mathbb{E}\left(\exp\left\{s\xi\right\}\right)\leq \exp\left(s^2\sigma_{\xi}^2/2\right).$$
There exist constants $C_1, C_2, C_3, C_4>0$ depending on $k$ only such that for any fixed $\alpha\ge 1$ and $\gamma\ge C_1$, if
$$
n\geq C_2\alpha \left[\kappa(\bT)\beta(\bT)\right]^{2(k-1)}d_{\max}\log d_{\max},
$$
then with probability at least $1-d_{\max}^{-\alpha}$,
\begin{align*}
\frac{1}{(d_1\ldots d_k)^{1/p}}\|\hat\bT-\bT\|_{\ell_p}\leq C_3\gamma^2\alpha^{3/2}\kappa(\bT)&\big(\|\bT\|_{\ell_\infty}\vee \sigma_{\xi}\big)\log^{k+2}d_{\max}\times\\
&\times\bigg(\kappa(\bT)r_{\max}(\bT)^{(k-1)/2}\sqrt{\frac{d_{\max}}{n}}+r_{\max}(\bT)^{1/2}\frac{(d_1\ldots d_k)^{1/4}}{n^{1/2}}+\\
&\hskip 30pt+r_{\max}(\bT)^{(k-1)/2}\frac{(d_1\ldots d_k)^{1/2}}{n}\bigg),
\end{align*}
for all $1\leq p\leq 2$ where $\hat{\bT}$ is the output from Algorithm \ref{alg:hosvd} with ${\rm iter}_{\max}>C_4 \log d_{\max}$, and
\begin{align*}
\lambda=& \gamma\alpha^{3/2}\big(\|\bT\|_{\ell_\infty}\vee \sigma_{\xi}\big)\log^{k+2}d_{\max}\times\\
&\times\bigg(\kappa(\bT)r_{\max}(\bT)^{(k-2)/2}\sqrt{\frac{d_{\max}d_1\ldots d_k}{n}}+\frac{(d_1\ldots d_k)^{3/4}}{n^{1/2}}+r_{\max}(\bT)^{(k-2)/2}\frac{d_1\ldots d_k}{n}\bigg).
\end{align*}
\end{theorem}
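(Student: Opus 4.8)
The plan is to combine the two spectral estimates already in hand --- Theorem~\ref{th:init} for $\|\hat\bT^{\rm init}-\bT\|$ and Theorem~\ref{thm:N1bound} for $\|\hat\bN_j-\bN_j\|$ --- with a deterministic analysis of Algorithm~\ref{alg:hosvd}: first show the spectral initialization lands in a small neighbourhood of the true singular subspaces, then show the power iteration contracts geometrically toward a point whose distance from the truth is governed by $\|\hat\bT^{\rm init}-\bT\|$, and finally convert this subspace error into the claimed $\ell_p$ bound. All of this is carried out on the intersection of the high-probability events of the two theorems (a union bound over $j=1,\dots,k$ and an adjustment of constants keeps the failure probability at $d_{\max}^{-\alpha}$), so everything after that is deterministic. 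For Step~1, recall that the eigenvalues of $\bN_j=\calM_j(\bT)\calM_j(\bT)^\top$ are $\sigma_i^2(\calM_j(\bT))$, the smallest nonzero one being $\ge\Lambda_{\min}^2(\bT)$. Choosing $\gamma$ large enough makes the stated $\lambda^2$, up to the absolute constant, an upper bound for the right-hand side of Theorem~\ref{thm:N1bound}, so $\|\hat\bN_j-\bN_j\|\le\lambda^2$; meanwhile the hypothesis $n\gtrsim[\kappa(\bT)\beta(\bT)]^{2(k-1)}d_{\max}\log d_{\max}$, together with the incoherence lower bound $\Lambda_{\min}(\bT)\gtrsim(d_1\cdots d_k)^{1/2}\|\bT\|_{\ell_\infty}/\big(r_{\max}^{1/2}\kappa(\bT)\beta(\bT)\big)$ (Proposition~\ref{pr:coherence} and the definition of $\beta$), forces $\|\hat\bN_j-\bN_j\|\le c\,\sigma_{\min}^2(\calM_j(\bT))$ for a small constant $c$. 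Weyl's inequality then shows exactly the $r_j$ signal eigenvalues of $\hat\bN_j$ exceed $\lambda^2$, so $U_j^{(0)}$ has $r_j$ columns, and Wedin's $\sin\Theta$ theorem gives $\max_j\|U_j^{(0)}(U_j^{(0)})^\top-U_jU_j^\top\|\le\delta_0$ for a small absolute constant $\delta_0$ (fixed compatibly with Step~2).

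Step~2 is the deterministic contraction of the power iteration. Write $P_j=U_jU_j^\top$, $P_j^{(t)}=U_j^{(t)}(U_j^{(t)})^\top$, $\delta_t=\max_j\|P_j^{(t)}-P_j\|$. The target is a recursion $\delta_{t+1}\le q\,\delta_t+\eta$ with $q$ bounded away from $1$ once $\delta_t\le\delta_0$, and an error floor $\eta$ which, after substituting Theorem~\ref{th:init}, equals the bracketed factor in the displayed expression for $\lambda$ divided by $\Lambda_{\min}(\bT)$, up to polylog, $\alpha$ and $\gamma$ factors. To derive it, decompose the mode-$j$ update matrix $\calM_j\big(\hat\bT^{\rm init}\times_{j'<j}U_{j'}^{(t)}\times_{j'>j}U_{j'}^{(t-1)}\big)$ as a signal part of size $\Omega(\Lambda_{\min}(\bT))$ plus a perturbation; expanding each $U_{j'}^{(\cdot)}$ as $P_{j'}U_{j'}^{(\cdot)}+(I-P_{j'})U_{j'}^{(\cdot)}$ splits the perturbation into a ``fully projected'' piece $\calM_j\big((\hat\bT^{\rm init}-\bT)\times_{j'\ne j}U_{j'}\big)$ times an orthogonal factor, plus cross terms carrying factors of $\delta_t$. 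The fully projected piece is handled by noting that each of its columns is a mode-$j$ fibre obtained by contracting $\hat\bT^{\rm init}-\bT$ against unit rank-one tensors, hence dominated by the tensor spectral norm, so $\|\calM_j\big((\hat\bT^{\rm init}-\bT)\times_{j'\ne j}U_{j'}\big)\|\le\big(\prod_{j'\ne j}r_{j'}\big)^{1/2}\|\hat\bT^{\rm init}-\bT\|\le r_{\max}^{(k-1)/2}\|\hat\bT^{\rm init}-\bT\|$; this is the source of the $r_{\max}^{(k-1)/2}$ factor and, crucially, avoids the far larger $\|\calM_j(\hat\bT^{\rm init}-\bT)\|$ that naive HOSVD would pay. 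A Wedin/Davis--Kahan step applied to the update matrix turns these bounds into the bound on $\delta_{t+1}$; the sample-size hypothesis guarantees $q\delta_0+\eta\le\delta_0$, so the iterates stay in the basin, and after ${\rm iter}_{\max}\gtrsim\log d_{\max}$ steps $\delta_{{\rm iter}_{\max}}\le q^{{\rm iter}_{\max}}\delta_0+\eta/(1-q)\lesssim\eta$.

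Step~3 converts this into the $\ell_p$ bound. Since $\hat\bT=\hat\bT^{\rm init}\times_{j=1}^k\hat P_j$ with $\hat P_j:=P_j^{({\rm iter}_{\max})}$, first reduce to $p=2$: for $1\le p\le 2$, Hölder's inequality over the $d_1\cdots d_k$ entries gives $(d_1\cdots d_k)^{-1/p}\|\hat\bT-\bT\|_{\ell_p}\le(d_1\cdots d_k)^{-1/2}\|\hat\bT-\bT\|_{\ell_2}$, and the right-hand side of the theorem is $p$-free. Then write $\hat\bT-\bT=(\hat\bT^{\rm init}-\bT)\times_{j=1}^k\hat P_j+\big(\bT\times_{j=1}^k\hat P_j-\bT\big)$. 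The first (projected-noise) term has multilinear ranks $\le(r_1,\dots,r_k)$, so its Frobenius norm is at most a power of $r_{\max}$ times the compressed spectral norm $\lesssim r_{\max}^{(k-1)/2}\|\hat\bT^{\rm init}-\bT\|$ from Step~2; the second (bias) term telescopes over the modes, each increment being $\lesssim\|\hat P_j-P_j\|\,\|\bT\|_{\ell_2}\le\delta_{{\rm iter}_{\max}}\|\bT\|_{\ell_2}\le\delta_{{\rm iter}_{\max}}(d_1\cdots d_k)^{1/2}\|\bT\|_{\ell_\infty}$. Dividing by $(d_1\cdots d_k)^{1/2}$, inserting the two-regime bound of Theorem~\ref{th:init} for $\|\hat\bT^{\rm init}-\bT\|$ (which produces the first and third terms) and $\delta_{{\rm iter}_{\max}}\lesssim\eta$ from Step~2 (which produces the middle term), and absorbing the $\kappa(\bT)$, $\alpha$, $\gamma$ and polylog factors into the stated constants, yields the asserted inequality.

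The main obstacle is Step~2: establishing the one-step contraction while simultaneously (i) showing that the measurement error enters the iteration only through the compressed quantity $r_{\max}^{(k-1)/2}\|\hat\bT^{\rm init}-\bT\|$ rather than through $\|\calM_j(\hat\bT^{\rm init}-\bT)\|$, (ii) controlling the cross terms that couple the per-mode subspace errors $\delta_t$ across modes, and (iii) verifying that the iterates never leave the basin of attraction, which is precisely where the sample-size requirement $n\gtrsim[\kappa(\bT)\beta(\bT)]^{2(k-1)}d_{\max}\log d_{\max}$ gets consumed. A secondary technical point is checking, through the incoherence/spikiness relations of Proposition~\ref{pr:coherence}, that the stated $\lambda$ is at once an upper bound for $\|\hat\bN_j-\bN_j\|$ and $o\big(\sigma_{\min}^2(\calM_j(\bT))\big)$ under that sample size, so the eigenvalue thresholding selects the correct ranks.
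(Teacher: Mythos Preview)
Your plan contains the right ingredients for one half of the argument, but it has a genuine gap: you are implicitly assuming that the spectral initialization lands in the basin of attraction (and in particular that thresholding $\hat\bN_j$ at $\lambda^2$ recovers exactly $r_j$ eigenvectors), and this is \emph{not} guaranteed by the sole sample-size hypothesis $n\gtrsim[\kappa(\bT)\beta(\bT)]^{2(k-1)}d_{\max}\log d_{\max}$ of Theorem~\ref{th:general}. Your Step~1 invokes the deterministic bound $\Lambda_{\min}(\bT)\gtrsim (d_1\cdots d_k)^{1/2}\|\bT\|_{\ell_\infty}/\big(r_{\max}^{1/2}\kappa(\bT)\beta(\bT)\big)$ to force $\|\hat\bN_j-\bN_j\|\le c\,\Lambda_{\min}^2(\bT)$, but the right-hand side of Theorem~\ref{thm:N1bound} scales with $(\|\bT\|_{\ell_\infty}\vee\sigma_\xi)$, not with $\|\bT\|_{\ell_\infty}$; when $\sigma_\xi\gg\|\bT\|_{\ell_\infty}$ your lower bound on $\Lambda_{\min}$ is useless for this purpose. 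Even in the noiseless case, the second term of Theorem~\ref{thm:N1bound} being $o(\Lambda_{\min}^2)$ requires $n\gtrsim r_{\max}\kappa^2\beta^2(d_1\cdots d_k)^{1/2}{\rm polylog}(d_{\max})$, which for $k\ge 3$ is not implied by $n\gtrsim[\kappa\beta]^{2(k-1)}d_{\max}\log d_{\max}$. So under the hypotheses of Theorem~\ref{th:general} alone, the algorithm may mis-estimate the ranks and your contraction never starts.

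The paper's proof sidesteps this with a case split on $\Lambda_{\min}(\bT)$ against the threshold $\lambda$ itself. If $\Lambda_{\min}(\bT)\gtrsim\lambda$ (``well-conditioned'' case), then one checks that the sample-size conditions of Corollary~\ref{co:main} are all satisfied, the thresholding picks out exactly $r_j$ eigenvectors, and one gets the \emph{sharper} rate of Corollary~\ref{co:main}, which in particular dominates the bound stated in Theorem~\ref{th:general}. If instead $\Lambda_{\min}(\bT)\lesssim\lambda$ (``ill-conditioned'' case), one abandons the contraction entirely and uses the crude decomposition
\[
\|\hat\bT-\bT\|_{\ell_2}\le\big\|(\hat\bT^{\rm init}-\bT)\times_{j=1}^k P_{\hat U_j}\big\|_{\ell_2}+\big\|\bT\times_{j=1}^k P_{\hat U_j}-\bT\big\|_{\ell_2}\le r_{\max}^{(k-1)/2}\|\hat\bT^{\rm init}-\bT\|+2r_{\max}^{1/2}\kappa(\bT)\Lambda_{\min}(\bT),
\]
the second inequality coming from the rank bound on the projected-noise piece together with the \emph{trivial} bound $\|\bT\times_j P_{\hat U_j}-\bT\|_{\ell_2}\le 2\|\bT\|_{\ell_2}\le 2r_{\max}^{1/2}\Lambda_{\max}(\bT)=2r_{\max}^{1/2}\kappa(\bT)\Lambda_{\min}(\bT)$. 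Now plug in Theorem~\ref{th:init} for the first term and the assumed \emph{upper} bound $\Lambda_{\min}(\bT)\lesssim\lambda$ for the second: this is exactly what produces the three-term rate in the statement (the middle $(d_1\cdots d_k)^{1/4}/\sqrt n$ term is coming from $r_{\max}^{1/2}\kappa\Lambda_{\min}\lesssim r_{\max}^{1/2}\kappa\lambda$, not from any subspace error floor $\eta$ as in your Step~3). The point is that when $\Lambda_{\min}$ is small you cannot control $\hat P_j-P_j$, but you do not need to, because $\bT$ itself is small in Frobenius norm. This dichotomy is the missing idea in your proposal.
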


We emphasize that Theorem \ref{th:general} applies to any estimate produced by power iteration after an $O(\log d_{\max})$ number of iterations. In other words, it applies beyond the best low rank approximation to $\hat{\bT}^{\rm init}$. We can further simplify the risk bound in Theorem \ref{th:general} when the rank $r_{\max}(\bT)$ is not too big. More specifically,
\begin{corollary}
\label{cor:suboptimal}
Under the assumptions of Theorem \ref{th:general}, if, in addition,
$$n\geq r_{\max}(\bT)^{k-2}(d_1\ldots d_k)^{1/2}\qquad {\rm and}\qquad \kappa(\bT)^2r_{\max}(\bT)^{k-2}d_{\max}\leq (d_1\ldots d_k)^{1/2},$$
then with probability at least $1-d_{\max}^{-\alpha}$,
$$
\frac{1}{(d_1\ldots d_k)^{1/p}}\|\hat\bT-\bT\|_{\ell_p}^p\\
\leq C\gamma^2\alpha^{3/2}\kappa(\bT)\big(\|\bT\|_{\ell_\infty}\vee \sigma_{\xi}\big)r_{\max}(\bT)^{1/2}\frac{(d_1\ldots d_k)^{1/4}}{n^{1/2}}\log^{k+2}d_{\max}
$$
for all $1\leq p\leq 2$, and some constant $C>0$ depending on $k$ only.
\end{corollary}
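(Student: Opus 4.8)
The plan is to derive Corollary \ref{cor:suboptimal} directly from the bound in Theorem \ref{th:general} by showing that, under the two additional hypotheses, the first and third summands inside the large parenthesis are each bounded by a constant multiple of the middle summand $r_{\max}(\bT)^{1/2}(d_1\cdots d_k)^{1/4}/n^{1/2}$. Since the prefactor $C_3\gamma^2\alpha^{3/2}\kappa(\bT)\big(\|\bT\|_{\ell_\infty}\vee\sigma_\xi\big)\log^{k+2}d_{\max}$, the requirement ${\rm iter}_{\max}>C_4\log d_{\max}$, the choice of threshold $\lambda$, and the event of probability at least $1-d_{\max}^{-\alpha}$ are all common to the three terms and are simply inherited from Theorem \ref{th:general}, it suffices to control the ratios of the two offending terms to the middle one. (The $p$-th power on the left-hand side of the displayed bound should be read in conjunction with the normalization in Theorem \ref{th:general}; the argument below produces the bound on $(d_1\cdots d_k)^{-1/p}\|\hat\bT-\bT\|_{\ell_p}$ exactly as stated there.)

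For the first term I would form the quotient
$$
\frac{\kappa(\bT)r_{\max}(\bT)^{(k-1)/2}\sqrt{d_{\max}/n}}{r_{\max}(\bT)^{1/2}(d_1\cdots d_k)^{1/4}/n^{1/2}}=\frac{\kappa(\bT)r_{\max}(\bT)^{(k-2)/2}\sqrt{d_{\max}}}{(d_1\cdots d_k)^{1/4}},
$$
whose square equals $\kappa(\bT)^2 r_{\max}(\bT)^{k-2}d_{\max}/(d_1\cdots d_k)^{1/2}$; the second hypothesis $\kappa(\bT)^2 r_{\max}(\bT)^{k-2}d_{\max}\le(d_1\cdots d_k)^{1/2}$ makes this at most $1$, so the first term is dominated by the middle term. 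For the third term, the quotient with the middle term is
$$
\frac{r_{\max}(\bT)^{(k-1)/2}(d_1\cdots d_k)^{1/2}/n}{r_{\max}(\bT)^{1/2}(d_1\cdots d_k)^{1/4}/n^{1/2}}=\frac{r_{\max}(\bT)^{(k-2)/2}(d_1\cdots d_k)^{1/4}}{n^{1/2}},
$$
whose square is $r_{\max}(\bT)^{k-2}(d_1\cdots d_k)^{1/2}/n\le 1$ by the first hypothesis $n\ge r_{\max}(\bT)^{k-2}(d_1\cdots d_k)^{1/2}$. Hence the sum of the three terms is at most $3$ times the middle term, and absorbing the factor $3$ into a new constant $C$ (still depending only on $k$) yields the claimed bound.

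I do not anticipate a genuine obstacle here: once Theorem \ref{th:general} is in hand, the corollary is a purely arithmetic simplification, and the only bookkeeping to watch is that $\kappa(\bT)\ge 1$ and $r_{\max}(\bT)\ge 1$ (so no hidden cancellations or sign issues arise) and that the sample-size requirement of Theorem \ref{th:general}, namely $n\ge C_2\alpha[\kappa(\bT)\beta(\bT)]^{2(k-1)}d_{\max}\log d_{\max}$, remains in force. The substance of the result lies entirely in Theorem \ref{th:general}.
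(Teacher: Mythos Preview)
Your proposal is correct and is precisely the intended argument: the paper presents Corollary~\ref{cor:suboptimal} as an immediate simplification of Theorem~\ref{th:general} without a separate proof, and the two ratio computations you carry out are exactly what is needed to see that the middle term dominates under the two additional hypotheses. Your remark about the stray exponent $p$ on the left-hand side is also apt; the bound is on $(d_1\cdots d_k)^{-1/p}\|\hat\bT-\bT\|_{\ell_p}$ as in Theorem~\ref{th:general}.
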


To gain further insights into the risk bound in Theorem \ref{th:general}, it is instructive to consider the case of cubic tensors, that is $d_1=\ldots=d_k=:d$. By Theorem \ref{th:general}, we have
%To compare with \cite{montanari2016spectral}, we consider $\sigma_\xi=0$ and $d_1=\ldots=d_k=d$. We get from Corollary~\ref{cor:suboptimal},
\begin{align*}
d^{-k/p}\|\hat\bT-\bT\|_{\ell_p}\le C\kappa(\bT)&(\|\bT\|_{\ell_\infty}\vee \sigma_\xi)\log^{k+2}d\times\\
&\times\bigg(\kappa(\bT)r_{\max}(\bT)^{(k-1)/2}\sqrt{\frac{d}{n}}+r_{\max}(\bT)^{1/2}\frac{d^{k/4}}{n^{1/2}}+r_{\max}(\bT)^{(k-1)/2}\frac{d^{k/2}}{n}\bigg).
\end{align*}
This rate of convergence improves those obtained earlier by \cite{barak2016noisy} and \cite{montanari2016spectral} even though their results are obtained under more restrictive conditions. Indeed, we shall now show that, if the tensor $\bT$ is well conditioned, much sharper performance bounds can be established for our estimate.

\subsection{Minimax optimality}
The following result shows that when the sample size is sufficiently large, power iterations starting with a good initial value indeed produces an estimate with the optimal rate of convergence, within a finite number of iterations.

\begin{theorem}
\label{th:power}
Let $\xi$ be subgaussian in that there exits a $\sigma_\xi>0$ such that for all $s\in\mathbb{R}$,
$$\mathbb{E}\left(\exp\left\{s\xi\right\}\right)\leq \exp\left(s^2\sigma_{\xi}^2/2\right).$$ 
There are constants $C_1,C_2,C_3>0$ depending on $k$ only such that the following holds. Let $\breve{\bT}$ be the output from Algorithm \ref{alg:power} with the number of iterations
$$
{\rm iter}_{\max}>C_1 \log d_{\max},
$$
and initial value such that
\begin{equation}
\label{eq:initcond}
\max_{1\le j\le k} \|U_j^{(0)}(U_j^{(0)})^\top -U_jU_j^\top\|\le {1\over 2}.
\end{equation}
For any fixed $\alpha>1$, if
\begin{eqnarray}
\nonumber
n\ge C_2\max\bigg\{\alpha^2r_{\max}(\bT)^{k-2}\Lambda_{\min}^{-2}(\bT)\left(\|\bT\|_{\ell_\infty}\vee \sigma_{\xi}\right)^2d_{\max}(d_1\cdots d_k)\log^{2(k+2)}d_{\max},\nonumber\\
\nonumber
\alpha\big(\beta(\bT)\kappa(\bT)\big)^{2(k-1)}d_{\max}\log (d_{\max}),\\
 \alpha r_{\max}(\bT)^{(k-2)/2}\Lambda_{\min}^{-1}(\bT)\left(\|\bT\|_{\ell_\infty}\vee \sigma_{\xi}\right)d_1\cdots d_k\log^{k+2}d_{\max},\\
 \alpha \kappa(\bT)^2\Lambda_{\min}^{-2}(\bT)\big(\|\bT\|_{\ell_\infty}\vee \sigma_{\xi}\big)^2\big(d_{\max}\vee r_{\max}(\bT)^{k-1}\big)d_1\ldots d_k\log d_{\max}\bigg\},\label{eq:powernreq}
\end{eqnarray}
then, with probability at least $1-d_{\max}^{-\alpha}$,
$$
\frac{1}{(d_1\cdots d_k)^{1/p}}\|\breve{\bT}-\bT\|_{\ell_p}\leq C_3\left(\|\bT\|_{\ell_\infty}\vee \sigma_{\xi}\right)\sqrt{\frac{\alpha( r_{\max}(\bT)d_{\max}\vee r_{\max}(\bT)^k)\log (d_{\max})}{n}},
$$
for all $1\leq p\leq 2$.
\end{theorem}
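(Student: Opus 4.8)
The plan is to regard $\hat{\bT}^{\rm init}=\bT+\bE$ with $\bE:=\hat{\bT}^{\rm init}-\bT$ and to run a perturbation analysis of the power iteration (Algorithm~\ref{alg:power}) in the spirit of the noiseless treatment of \cite{xia2017polynomial}, tracking at every step the extra fluctuations caused by $\xi$. (The initialization hypothesis \eqref{eq:initcond} is taken as given here; supplying it is exactly the job of the second-order spectral step.) First I would collect the probabilistic inputs, all valid on one event of probability at least $1-d_{\max}^{-\alpha}$ under \eqref{eq:powernreq}: the spectral bound on $\|\bE\|$ from Theorem~\ref{th:init}, together with the ``compressed'' deviation bounds
$$
Z_j^\ast:=\big\|\calM_j(\bE\times_{j'\neq j}U_{j'})\big\|,\qquad Z_j:=\sup\Big\{\big\|\calM_j(\bE\times_{j'\neq j}V_{j'})\big\|:\ V_{j'}^\top V_{j'}=I,\ \#\text{cols}(V_{j'})\le 2r_{j'}\Big\},
$$
with $Z_j^\ast\lesssim(\|\bT\|_{\ell_\infty}\vee\sigma_\xi)\sqrt{\alpha\,(d_{\max}\vee r_{\max}(\bT)^{k-1})(d_1\cdots d_k)\log d_{\max}/n}\cdot{\rm polylog}(d_{\max})$, and $Z_j$ no larger up to ${\rm polylog}$ factors; in particular \eqref{eq:powernreq} forces $Z_j/\Lambda_{\min}(\bT)\le 1/(4k)$. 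These compressed bounds come from an $\varepsilon$-net over the product of Grassmannians $\prod_{j'\neq j}{\rm Gr}(r_{j'},d_{j'})$ combined with the concentration estimates that drive Theorem~\ref{th:init}.

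Next comes the deterministic heart of the argument: a contraction estimate for one sweep of Algorithm~\ref{alg:power}. Write $P_j^{(\cdot)}=U_j^{(\cdot)}(U_j^{(\cdot)})^\top$ and $\delta_j^{(\cdot)}=\|P_j^{(\cdot)}-U_jU_j^\top\|$. The structural observation that makes everything work is that the signal part $\calM_j(\bT\times_{j'\neq j}P_{j'}^{(\cdot)})=\calM_j(\bT)\bigotimes_{j'\neq j}P_{j'}^{(\cdot)}$ has column space \emph{exactly} $\mathrm{range}(U_j)$ and $r_j$-th singular value at least $c_k\Lambda_{\min}(\bT)$ as long as $\max_{j'}\delta_{j'}^{(\cdot)}\le 1/2$; hence errors in the other factors do not feed linearly into the update of $U_j$, only $\bE$ does. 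Applying Wedin's $\sin\Theta$ theorem \citep{wedin1972perturbation} to the splitting $\calM_j(\hat{\bT}^{\rm init}\times_{j'\neq j}P_{j'}^{(\cdot)})=\calM_j(\bT\times_{j'\neq j}P_{j'}^{(\cdot)})+\calM_j(\bE\times_{j'\neq j}P_{j'}^{(\cdot)})$ and expanding $P_{j'}^{(\cdot)}=U_{j'}U_{j'}^\top+\Delta_{j'}^{(\cdot)}$ inside the second term yields
$$
\delta_j^{(t)}\ \le\ \frac{c_k}{\Lambda_{\min}(\bT)}\Big(Z_j^\ast+Z_j\max_{j'\neq j}\delta_{j'}^{(\cdot)}\Big).
$$
Since $Z_j/\Lambda_{\min}(\bT)\le 1/(4k)$, an induction over the iteration index $t$ and, within a sweep, over $j$ (the only place the asymmetric use of $P_{j'}^{(t)}$ for $j'<j$ and $P_{j'}^{(t-1)}$ for $j'>j$ enters) shows that the invariant $\max_j\delta_j^{(t)}\le 1/2$ is preserved and that $\max_j\delta_j^{(t)}\le C\,Z^\ast/\Lambda_{\min}(\bT)+2^{-t}$ with $Z^\ast:=\max_jZ_j^\ast$. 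Choosing ${\rm iter}_{\max}>C_1\log d_{\max}$ then forces the final projections to satisfy $\max_j\delta_j\lesssim Z^\ast/\Lambda_{\min}(\bT)\vee d_{\max}^{-C_1}$.

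Finally I would convert this subspace accuracy into the tensor error. Because $\breve{\bT}=\hat{\bT}^{\rm init}\times_{j=1}^kP_j$ is, by the power update, close to a stationary point of the least-squares objective, a first-order expansion in $\bE$ and the $\Delta_j$ gives $\breve{\bT}-\bT=P_{\mathbb{T}}(\bE)+\mathcal{R}$, where $\mathbb{T}=\mathbb{T}_{\bT}$ is the tangent space of the multilinear-rank-$(r_1,\dots,r_k)$ variety at $\bT$, of dimension $\prod_jr_j+\sum_jr_j(d_j-r_j)\lesssim r_{\max}^k+kr_{\max}d_{\max}$, and $\|\mathcal R\|_{\ell_2}=O\big(\sqrt{r_{\max}^k}\,(\|\bE\|\max_j\delta_j+\Lambda_{\max}(\bT)(\max_j\delta_j)^2)\big)$. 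The leading term is handled by a single covering bound: for a fixed unit tensor $\bA$, $\langle\bE,\bA\rangle$ concentrates at scale $(\|\bT\|_{\ell_\infty}\vee\sigma_\xi)\sqrt{(d_1\cdots d_k)/n}$ (sub-Gaussian/Bernstein tail), so covering $\mathbb{T}$ gives $\|P_{\mathbb{T}}(\bE)\|_{\ell_2}\lesssim(\|\bT\|_{\ell_\infty}\vee\sigma_\xi)\sqrt{(d_1\cdots d_k)\,\alpha\,(r_{\max}^k\vee r_{\max}d_{\max})\log d_{\max}/n}$; dividing by $(d_1\cdots d_k)^{1/2}$ is exactly the claimed rate. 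Substituting $\max_j\delta_j\lesssim Z^\ast/\Lambda_{\min}(\bT)$ into the remainder and invoking \eqref{eq:powernreq} — this is where the $\Lambda_{\min}^{-2}(\bT)$ and $\kappa(\bT)^2$ factors in \eqref{eq:powernreq} get consumed — shows $\|\mathcal{R}\|_{\ell_2}/(d_1\cdots d_k)^{1/2}$ is of smaller order, and the estimate for every $1\le p\le 2$ follows from the $p=2$ case since $(d_1\cdots d_k)^{-1/p}\|\bA\|_{\ell_p}\le(d_1\cdots d_k)^{-1/2}\|\bA\|_{\ell_2}$ by the power-mean inequality. The \textbf{main obstacle} is the first step: proving the compressed bounds $Z_j$ with the sharp dimension $d_{\max}\vee r_{\max}^{k-1}$ and uniformly over orthonormal $V_{j'}$ near the truth. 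Because the entries of $\bE$ carry the large scaling $d_1\cdots d_k/n$ from \eqref{eq:defTinit} and are poorly controlled individually, this forces a delicate interplay among the sub-Gaussian tail of $\xi$, the concentration of the uniform sampling, and a covering of the exponentially large set $\prod_{j'\neq j}{\rm Gr}(r_{j'},d_{j'})$, and it is precisely this interplay that produces the three regimes in \eqref{eq:powernreq}; by comparison the deterministic contraction is routine once the column-space identity above is in hand.
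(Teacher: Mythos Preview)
Your two–step architecture (contraction of the subspace errors via Wedin, then conversion to a tensor $\ell_2$ bound) is exactly the paper's, and your recursion $\delta_j^{(t)}\le c_k\Lambda_{\min}(\bT)^{-1}\bigl(Z_j^\ast+Z_j\max_{j'}\delta_{j'}^{(\cdot)}\bigr)$ together with the observation that the signal part $\calM_j\bigl(\bT\times_{j'\neq j}P_{j'}^{(\cdot)}\bigr)$ always has column space $\mathrm{range}(U_j)$ is precisely what drives the paper's Step~1. The difference is in how the cross term is handled.

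You propose to control $Z_j=\sup_{V_{j'}}\bigl\|\calM_j(\bE\times_{j'\neq j}V_{j'})\bigr\|$ by covering $\prod_{j'\neq j}\mathrm{Gr}(r_{j'},d_{j'})$ and call this ``the main obstacle''. The paper bypasses it entirely. After expanding $P_{j'}^{(\cdot)}=P_{U_{j'}}P_{j'}^{(\cdot)}+P_{U_{j'}}^\perp P_{j'}^{(\cdot)}$ exactly as you do, the signal-projected piece reduces to the \emph{fixed} true frames $U_{j'}$, so a single matrix Bernstein application (using the incoherence of $U_{j'}$ via Proposition~\ref{pr:coherence}) gives the $Z_j^\ast$ bound; this is Lemma~\ref{lemma:hatR}, first claim, and is where the $(\beta(\bT)\kappa(\bT))^{2(k-1)}d_{\max}\log d_{\max}$ condition in \eqref{eq:powernreq} enters. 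For the off-signal remainder $\hat{\bR}$ the paper does \emph{not} take a supremum over Grassmannians; instead it uses the elementary inequality $\|\calM_j(\bA)\|\le r_{\max}(\bA)^{(k-2)/2}\|\bA\|$ (Lemma~\ref{lemma:tensornorm}) to get $\|\hat{\bR}\|\lesssim E_{\rm iter}\,r_{\max}(\bT)^{(k-2)/2}\|\hat{\bT}^{\rm init}-\bT\|$ and then plugs in Theorem~\ref{th:init}. This is what produces the $r_{\max}(\bT)^{(k-2)/2}$ factors in \eqref{eq:powernreq}. A direct covering of the Grassmannian, by contrast, incurs metric entropy $\asymp k r_{\max}d_{\max}$ and the resulting Bernstein ``bias'' term would force $n\gtrsim\Lambda_{\min}^{-1}(\|\bT\|_{\ell_\infty}\vee\sigma_\xi)\,r_{\max}d_{\max}\,d_1\cdots d_k$, which does not match \eqref{eq:powernreq}; your assertion that $Z_j$ is ``no larger up to polylog factors'' than $Z_j^\ast$ is therefore not what a covering argument delivers.

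For Step~2 you propose $\breve{\bT}-\bT=P_{\mathbb T}(\bE)+\mathcal R$ via near-stationarity and a covering of the tangent space $\mathbb T_{\bT}$. The paper's route (Lemma~\ref{lemma:TUiter-U}) is different in detail though close in spirit: it telescopes $\breve{\bT}-\bT$ directly, and the crucial point---that the first-order-in-$\delta_j$ contribution from $\bT$ does \emph{not} cost a factor $\kappa(\bT)$---is obtained by exploiting that $U_j^{(\rm iter)}$ are exact leading singular vectors of the HOOI update matrix, so $(I-P_{U_j^{(\rm iter)}})\calM_j(\hat{\bT}^{\rm init})\bigotimes_{j'\neq j}P_{U_{j'}^{(\rm iter)}}$ is controlled by the $(r_j{+}1)$-th singular value rather than by $\Lambda_{\max}(\bT)\delta_j$. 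Your tangent-space formulation can be made to work, but you will need this same stationarity observation to justify that the $\Lambda_{\max}(\bT)\delta_j$ term is absent from $\mathcal R$; otherwise the final bound would carry an unwanted $\kappa(\bT)$.
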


As an immediate consequence of Theorems \ref{thm:N1bound} and \ref{th:power}, we have
\begin{corollary}
\label{co:main}
Suppose that $\xi$ is subgaussian as in Theorem \ref{th:power}. There exist constants $C_1, C_2, C_3, C_4>0$ depending on $k$ only such that the following holds for any $\alpha>1$ and $\gamma\ge C_1$. Assume that
$$
{\rm iter}_{\max}>C_2\log d_{\max},
$$
and
\begin{align*}
\lambda=& \gamma\alpha^{3/2}\big(\|\bT\|_{\ell_\infty}\vee \sigma_{\xi}\big)\log^{k+2}d_{\max}\times\\
&\times\bigg(\kappa(\bT)r_{\max}(\bT)^{(k-2)/2}\sqrt{\frac{d_{\max}d_1\ldots d_k}{n}}+\frac{(d_1\ldots d_k)^{3/4}}{n^{1/2}}+r_{\max}(\bT)^{(k-2)/2}\frac{d_1\ldots d_k}{n}\bigg).
\end{align*}
If
\begin{eqnarray*}
n\geq C_3\gamma^2\max\bigg\{\alpha^3\big(\kappa^2(\bT)\vee r_{\max}(\bT)^{k-2}\big)\Lambda_{\min}^{-2}(\bT)\big(\|\bT\|_{\ell_\infty}\vee \sigma_{\xi}\big)^2d_{\max}d_1\ldots d_k\log^{2(k+2)}d_{\max},\\
\alpha^3\Lambda_{\min}^{-2}(\bT)(\|\bT\|_{\ell_\infty}\vee \sigma_{\xi})^2(d_1\ldots d_k)^{3/2}\log^{(k+2)}d_{\max},\\
\alpha^{3/2} r_{\max}(\bT)^{(k-2)/2}\Lambda_{\min}^{-1}(\bT)(\|\bT\|_{\ell_\infty}\vee\sigma_{\xi})d_1\ldots d_k\log^{(k+2)}d_{\max},\\
\alpha\big(\beta(\bT)\kappa(\bT)\big)^{2(k-1)}d_{\max}\log d_{\max}\bigg\}\label{eq:maincor_n}
\end{eqnarray*}
then, with probability at least $1-d_{\max}^{-\alpha}$,
$$
\frac{\|\hat{\bT}-\bT\|_{\ell_p}}{(d_1\ldots d_k)^{1/p}}\leq C_4\big(\sigma_{\xi}\vee \|\bT\|_{\ell_\infty}\big)\sqrt{\frac{\alpha(r_{\max}(\bT)d_{\max}\vee r_{\max}(\bT)^k)\log d_{\max}}{n}}
$$
for all $1\leq p\leq 2$.
\end{corollary}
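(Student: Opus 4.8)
The statement is the composition of two earlier results: Theorem \ref{thm:N1bound} guarantees that the spectral pre-processing step of Algorithm \ref{alg:hosvd} produces an accurate estimate $\hat{\bN}_j$ of $\bN_j$, and Theorem \ref{th:power} guarantees that Algorithm \ref{alg:power}, started from a warm initialization in the sense of \eqref{eq:initcond}, contracts to the minimax rate after $O(\log d_{\max})$ iterations. The plan is therefore: (i) show that, under the prescribed $\lambda$ and the stated lower bound on $n$, the eigenvectors of $\hat{\bN}_j$ exceeding the threshold $\lambda^2$ form an admissible warm start, namely a $d_j\times r_j$ matrix $U_j^{(0)}$ with $\|U_j^{(0)}(U_j^{(0)})^\top-U_jU_j^\top\|\le\tfrac12$; (ii) check that the stated lower bound on $n$ dominates the requirement \eqref{eq:powernreq}; and (iii) invoke Theorem \ref{th:power} and assemble the high-probability events.

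For (i), I would apply Theorem \ref{thm:N1bound} once per mode $j\in\{1,\dots,k\}$ and union bound (the first two summands of the stated lower bound on $n$ already dominate $C_1\alpha(\sqrt{d_1\cdots d_k}\log d_{\max}+d_{\max}\log^2 d_{\max})$, so its hypothesis holds), getting an event $\mathcal E_1$ with $\mathbb{P}(\mathcal E_1)\ge 1-kd_{\max}^{-\alpha}$ on which the Theorem \ref{thm:N1bound} bound on $\|\hat{\bN}_j-\bN_j\|$ holds for all $j$. On $\mathcal E_1$ one checks, using $\|\calM_j(\bT)\|\le\Lambda_{\max}(\bT)=\kappa(\bT)\Lambda_{\min}(\bT)$, $d_j\le d_{\max}$, $r_{\max}(\bT)\le d_{\max}$, $\gamma\ge C_1$ and the first three summands of the sample-size condition, that the prescribed $\lambda$ satisfies $\lambda^2\le\tfrac14\sigma_{r_j}^2(\calM_j(\bT))$ while still exceeding the relevant part of the perturbation; combined with Weyl's inequality and the spectral control discussed in the last paragraph, this places exactly the $r_j$ perturbed signal eigenvalues of $\hat{\bN}_j$ above the threshold, so $U_j^{(0)}\in\mathbb{R}^{d_j\times r_j}$, and Wedin's $\sin\Theta$ theorem---used exactly as in the display following Theorem \ref{thm:N1bound}, with eigengap $\sigma_{r_j}^2(\calM_j(\bT))\ge\Lambda_{\min}^2(\bT)$---then gives $\|U_j^{(0)}(U_j^{(0)})^\top-U_jU_j^\top\|\le 2\|\hat{\bN}_j-\bN_j\|/\Lambda_{\min}^2(\bT)\le\tfrac12$ for all $j$, which is \eqref{eq:initcond}.

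For (ii), I would match the four summands of \eqref{eq:powernreq} against the four summands assumed here: the first summand of \eqref{eq:powernreq} is dominated by the first here (replace $\alpha^2 r_{\max}(\bT)^{k-2}$ by $\alpha^3(\kappa(\bT)^2\vee r_{\max}(\bT)^{k-2})$, and note the extra $\gamma^2$ and $\log$ factors carried here); the $(\beta(\bT)\kappa(\bT))^{2(k-1)}d_{\max}\log d_{\max}$ summands coincide; the $r_{\max}(\bT)^{(k-2)/2}\Lambda_{\min}^{-1}(\bT)(\|\bT\|_{\ell_\infty}\vee\sigma_\xi)d_1\cdots d_k\log^{k+2}d_{\max}$ summand of \eqref{eq:powernreq} is dominated by the third summand here; and the last summand of \eqref{eq:powernreq} is dominated by the first here via $d_{\max}\vee r_{\max}(\bT)^{k-1}\le d_{\max}(\kappa(\bT)^2\vee r_{\max}(\bT)^{k-2})$, which follows from $r_{\max}(\bT)\le d_{\max}$. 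The one genuine bookkeeping item is that Theorem \ref{thm:N1bound} and the formula for $\lambda$ are phrased through $\|\calM_j(\bT)\|,\|\bT\|_{\ell_\infty},\sigma_\xi$ while \eqref{eq:powernreq} is phrased through $\beta(\bT),\kappa(\bT),\Lambda_{\min}(\bT),r_{\max}(\bT)$; these are reconciled using $\|\calM_j(\bT)\|\le\Lambda_{\max}(\bT)$, the identity $\|\bT\|_{\ell_2}^2=\sum_i\sigma_i^2(\calM_j(\bT))$, and Proposition \ref{pr:coherence}. With ${\rm iter}_{\max}>C_2\log d_{\max}>C_1\log d_{\max}$, Theorem \ref{th:power} then applies on an event $\mathcal E_2$ with $\mathbb{P}(\mathcal E_2)\ge 1-d_{\max}^{-\alpha}$ and yields the asserted $\ell_p$ bound with $\breve\bT=\hat\bT$, the output of Algorithm \ref{alg:hosvd}. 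Intersecting with $\mathcal E_1$ and absorbing the factor $k+1$ into a cosmetic inflation of $\alpha$ (legitimate since the constants depend only on $k$) upgrades the probability to $1-d_{\max}^{-\alpha}$, finishing the proof.

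I expect the delicate point to be the first half of (i): showing that the prescribed threshold $\lambda^2$ genuinely separates the signal eigenvalues of $\hat{\bN}_j$ from the spurious ones. The operator-norm deviation $\|\hat{\bN}_j-\bN_j\|$ controlled by Theorem \ref{thm:N1bound} is of order $n^{-1/2}$, whereas $\lambda^2$ is only of order $n^{-1}$, so a crude application of Weyl's inequality to the stated bound does not rule out a noise direction crossing the threshold. What is actually needed is that the $n^{-1/2}$ part of $\hat{\bN}_j-\bN_j$ is a ``signal$\,\times\,$noise'' cross term, which maps the $r_j$-dimensional signal subspace into its orthogonal complement and hence does not inflate the eigenvalues of $\hat{\bN}_j$ restricted to the noise block, so that block is controlled at the $n^{-1}$ scale matching $\lambda^2$; this refinement comes from the decomposition underlying the proof of Theorem \ref{thm:N1bound}. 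Everything else is the routine algebra indicated above.
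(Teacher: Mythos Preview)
Your proposal is correct and follows exactly the route the paper intends: the corollary is stated as ``an immediate consequence of Theorems~\ref{thm:N1bound} and~\ref{th:power}'' with no separate proof, and your plan---verify that Theorem~\ref{thm:N1bound} plus the sample-size conditions force \eqref{eq:initcond}, then invoke Theorem~\ref{th:power}---is precisely that combination.

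Your flagged subtlety in step (i) is real, and your resolution is the right one. The paper does not spell this out, but the decomposition in the proof of Theorem~\ref{thm:N1bound} shows that the $n^{-1/2}$ contribution to $\hat\bN_j-\bN_j$ comes from the cross terms $\bS_i\bM^\top+\bM\bS_i^\top$ with $\bM=\calM_j(\bT)$; since $P_{U_j}^\perp\bM=0$, these terms vanish under $P_{U_j}^\perp(\cdot)P_{U_j}^\perp$, so by the min--max principle the $(r_j{+}1)$-th eigenvalue of $\hat\bN_j$ is bounded by $\|P_{U_j}^\perp(\hat\bN_j-\bN_j)P_{U_j}^\perp\|$, which is controlled at the $n^{-1}$ scale matching $\lambda^2$. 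This is exactly the refinement you describe, and it is what makes the thresholding step in Algorithm~\ref{alg:hosvd} select the correct $r_j$ eigenvectors. The rest of your matching of sample-size conditions is routine bookkeeping.
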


It is instructive to consider the case when $d_1=\cdots=d_k=d$, and $\big(\|\bT\|_{\ell_\infty}\vee \sigma_{\xi}\big)=O\big(\|\bT\|_{\ell_2}(d_1\ldots d_k)^{-1/2}\big)$, then Corollary \ref{co:main} implies that
$$
d^{-k/2}\|\hat{\bT}-\bT\|_{\ell_2}= O\left(\left(\|\bT\|_{\ell_\infty}\vee \sigma_{\xi}\right)\sqrt{\frac{(r_{\max}(\bT)d\vee r_{\max}(\bT)^{k})\log (d)}{n}}\right),
$$
given that
$$
n\gg r_{\max}(\bT)^{(k-1)/2}(d_1\ldots d_k)^{1/2}{\rm polylog}(d).
$$
In particular, when $k=2$, this matches the optimal bounds for noisy matrix completion. See, e.g., \cite{keshavan2010matrix, koltchinskii2011nuclear} and references therein. Indeed, as the next theorem shows that the rate of convergence achieved by $\hat{\bT}$ is indeed minimax optimal up to the logarithmic factor. Let $\PP_{\bT}$ denote the joint distribution of $\{(Y_i,\omega_i): i=1,\ldots,n\}$ with
$$
Y_i=T(\omega_i)+\xi_i,\qquad \xi_i\sim\calN(0,\sigma_{\xi}^2).
$$
Denote by
$$
\Theta(r_0,\beta_0):=\left\{\bA\in \RR^{d_1\times\cdots\times d_k}: r_{\max}(\bA)\le r_0; \beta(\bA)\le \beta_0\right\}.
%; \kappa(\bA)\le \kappa_0\right\}.
$$

\begin{theorem}
\label{thm:minimax}
Let $\beta_0\geq 2$. Then, there exist absolute constants $C_1, C_2>0$ such that for any $M\ge 0$,
$$
\inf_{\tilde{\bT}}\sup_{\bT\in\Theta(r_0,\beta_0): \|\bT\|_{\ell_\infty}\le M}\mathbb{P}_{\bT}\left\{(d_1\cdots d_k)^{-1/p}\|\tilde{\bT}-\bT\|_{\ell_p}\geq C_1\left(M\wedge \sigma_{\xi}\right)\sqrt{\frac{r_0d_{\max}\vee r_0^k}{n}}\right\}\geq C_2,
$$
for all $1\leq p\leq 2$, where the infimum $\tilde{\bT}$ is taken over all the estimators based on $\{(Y_i, \omega_i): 1\le i\le n\}$.
\end{theorem}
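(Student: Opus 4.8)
The plan is to prove the lower bound by the classical reduction to multiple hypothesis testing via Fano's inequality, using two separate packing constructions whose complexities match the two terms $r_0d_{\max}$ and $r_0^k$; the asserted rate is then the larger of the two. In both cases the two ingredients Fano needs are easy to evaluate here: since $\xi_i\sim\calN(0,\sigma_\xi^2)$ and $\omega_i$ is uniform and common to all hypotheses, $\mathrm{KL}(\PP_{\bT_a}\,\|\,\PP_{\bT_b})=\tfrac{n}{2\sigma_\xi^2 d_1\cdots d_k}\|\bT_a-\bT_b\|_{\ell_2}^2$, and any estimator $\tilde\bT$ induces a test by taking the nearest hypothesis in the pseudometric $\rho_p(\cdot)=(d_1\cdots d_k)^{-1/p}\|\cdot\|_{\ell_p}$, so that $\rho_p(\tilde\bT,\bT_a)\ge s$ whenever the test errs. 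Hence it suffices, for each regime, to build $N$ tensors in $\Theta(r_0,\beta_0)\cap\{\|\cdot\|_{\ell_\infty}\le M\}$ that are pairwise $2s$-separated in $\rho_p$ for every $p\in[1,2]$ simultaneously, with $\max_{a,b}\mathrm{KL}(\PP_{\bT_a}\|\PP_{\bT_b})\le\tfrac18\log N$ and $s\asymp(M\wedge\sigma_\xi)\sqrt{(r_0d_{\max}\vee r_0^k)/n}$. (We may assume $r_0d_{\max}\vee r_0^k\lesssim n$, the regime in which the stated rate is below the trivially attainable bound $\rho_p(0,\bT)\le 2M$.)

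For the $r_0d_{\max}$ term, assume WLOG $d_1=d_{\max}$ and take tensors of multilinear rank $(r_0,r_0,1,\dots,1)$, namely $\bT_B=\delta\sum_{l=1}^{r_0}B_{\cdot l}\otimes \bv_{2,l}\otimes\bw$ with $B\in\{-1,1\}^{d_1\times r_0}$, $\bv_{2,l}\in\RR^{d_2}$ supported on disjoint blocks of size $d_2/r_0$ with entries $\pm\sqrt{r_0/d_2}$, and $\bw$ a fixed flat unit tensor in $\RR^{d_3\times\cdots\times d_k}$. Then every entry of $\bT_B$ has magnitude exactly $\delta\sqrt{r_0/(d_2\cdots d_k)}$, so $\|\bT_B\|_{\ell_2}^2=\delta^2d_1r_0$, $\beta(\bT_B)=1\le\beta_0$, and $\|\bT_B\|_{\ell_\infty}\le M$ amounts to $\delta\le M\sqrt{(d_2\cdots d_k)/r_0}$. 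A Gilbert--Varshamov code gives $N\ge 2^{cd_1r_0}$ choices of $B$ with pairwise Hamming distance $\ge d_1r_0/8$, which forces at least a $1/8$ fraction of the entries of $\bT_B-\bT_{B'}$ to equal $\pm2\delta\sqrt{r_0/(d_2\cdots d_k)}$; hence $\rho_p(\bT_B,\bT_{B'})\asymp\delta\sqrt{r_0/(d_2\cdots d_k)}$ for all $p\in[1,2]$, while $\mathrm{KL}\lesssim n\delta^2d_1r_0/(\sigma_\xi^2d_1\cdots d_k)$. Balancing $\log N\asymp d_1r_0$ against this KL bound gives $\delta\asymp\sigma_\xi\sqrt{d_1\cdots d_k/n}$, which together with the $\ell_\infty$ constraint yields $s\asymp\min\{M,\sigma_\xi\sqrt{r_0d_{\max}/n}\}\gtrsim(M\wedge\sigma_\xi)\sqrt{r_0d_{\max}/n}$, and Fano concludes.

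For the $r_0^k$ term, fix matrices $V_j\in\RR^{d_j\times r_0}$ with orthonormal, ``flat'' columns (the first column being $\mathbf 1_{d_j}/\sqrt{d_j}$), let $\bT_0$ be the flat rank-one anchor with all entries equal to $M/2$, and set $\bT_{\bH}=\bT_0+\delta\,\bH\times_1V_1\times_2\cdots\times_kV_k$ for $\bH\in\{0,1\}^{r_0^k}$ supported off the anchor's core coordinate. Each $\bT_{\bH}$ has multilinear rank $\le r_0$, and, crucially, $\|\bT_{\bH}\|_{\ell_2}\ge\|\bT_0\|_{\ell_2}=(M/2)(d_1\cdots d_k)^{1/2}$ by orthogonality, so once $\|\bT_{\bH}\|_{\ell_\infty}\le M$ we get $\beta(\bT_{\bH})\le 2\le\beta_0$ --- this is exactly where the hypothesis $\beta_0\ge2$ is used. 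The $\ell_\infty$ constraint holds provided $\|\delta\,\bH\times_jV_j\|_{\ell_\infty}\le M/2$, and if the perturbation is ``spread out'', i.e. $\|\bH\times_jV_j\|_{\ell_\infty}\lesssim\|\bH\|_0^{1/2}(d_1\cdots d_k)^{-1/2}$ uniformly over ternary combinations $\bH$, this reads $\delta\lesssim M(d_1\cdots d_k)^{1/2}r_0^{-k/2}$; the same spread-out bound together with Cauchy--Schwarz, $\|\bF\|_{\ell_1}\ge\|\bF\|_{\ell_2}^2/\|\bF\|_{\ell_\infty}$, upgrades the $\ell_2$ separation of a Gilbert--Varshamov code ($N\ge2^{cr_0^k}$, pairwise core-Hamming distance $\gtrsim r_0^k$) to a matching $\rho_p$ separation for every $p\in[1,2]$. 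Balancing $\log N\asymp r_0^k$ against $\mathrm{KL}\lesssim n\delta^2r_0^k/(\sigma_\xi^2d_1\cdots d_k)$ gives $\delta\asymp\sigma_\xi(d_1\cdots d_k/n)^{1/2}$, hence $s\asymp\min\{M,\sigma_\xi\sqrt{r_0^k/n}\}\gtrsim(M\wedge\sigma_\xi)\sqrt{r_0^k/n}$, and Fano again closes the argument.

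The routine parts are the KL identity, the Gilbert--Varshamov packings, and the $\delta$-balancing. The one genuinely delicate point, and what I expect to be the main obstacle, is the spread-out (flatness) property of the core perturbations in the second construction: a naive random-sign core makes $\|\bH\times_jV_j\|_{\ell_\infty}$ of order $\|\bH\|_0^{1/2}(d_1\cdots d_k)^{-1/2}\sqrt{\log d_{\max}}$, which would cost a spurious $\sqrt{\log d_{\max}}$ factor in the rate and force $\beta_0$ above the constant $2$. Removing this logarithm --- producing an exponentially large family of low-rank tensors whose core perturbations are uniformly flat without a log factor --- requires a careful, essentially algebraic choice of the $V_j$ (of Rudin--Shapiro / Hadamard type), with everything else being bookkeeping.
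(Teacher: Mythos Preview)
Your overall architecture---Fano's inequality applied to two packing families, one calibrated to $r_0d_{\max}$ and one to $r_0^k$, with the KL identity $\mathrm{KL}(\PP_{\bT_a}\|\PP_{\bT_b})=\tfrac{n}{2\sigma_\xi^2 d_1\cdots d_k}\|\bT_a-\bT_b\|_{\ell_2}^2$---is exactly the paper's, and your first construction is essentially the same block-type family the paper uses (the paper takes $\{0,1\}$-valued matrices tiled as $(A|\cdots|A|\mathbf 0)\otimes 1_{d_3}\otimes\cdots\otimes 1_{d_k}$; your $\pm1$ variant with disjoint-supported $\bv_{2,l}$ achieves the same effect).

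Where you diverge is the $r_0^k$ term. You embed the core through general orthonormal $V_j$, add a rank-one anchor to force $\beta\le 2$, and then flag the $\ell_\infty$ control of $\bH\times_jV_j$ as the genuine obstacle, to be resolved by a Rudin--Shapiro/Hadamard construction. The paper sidesteps this entirely: it takes $\{0,v\}$-valued cores $\bB\in\RR^{r_0\times\cdots\times r_0}$ from a Varshamov--Gilbert family with $\|\bB\|_{\ell_0}=r_0^k/2$, and embeds them into $\RR^{d_1\times\cdots\times d_k}$ by \emph{block repetition} (tiling $\bB$ roughly $\prod_j\lfloor d_j/r_0\rfloor$ times), exactly as in the first construction. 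Every entry of the resulting tensor is then either $0$ or $v$, with about half of each, so $\beta\approx\sqrt 2\le 2$ automatically and the $\ell_p$ separation for all $1\le p\le 2$ is immediate---no anchor, no flatness estimate, no logarithm to remove. In your own language, this amounts to choosing the columns of $V_j$ to have \emph{disjoint} constant-block supports rather than full-support Hadamard columns; with that choice each entry of $\bH\times_jV_j$ equals a single scaled entry of $\bH$, and your ``spread-out'' condition holds trivially with constant $1$.

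So your plan is not wrong, but the obstacle you single out as the crux is self-inflicted by the choice of embedding. Replacing the orthogonal-matrix embedding by block tiling (equivalently, disjoint-block $V_j$) collapses the delicate part of your second construction to a one-line verification, and also removes the need for the anchor $\bT_0$.
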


\subsection{Random tensor model}
To better appreciate the above risk bounds, we now consider a more specific random tensor model previously studied by \cite{montanari2016spectral}. Let $\bT$ be a symmetric tensor with rank $r$ such that
$$
\bT=\sum_{i=1}^r u_i\otimes\ldots\otimes u_i,
$$
where $u_i$s are independent and identically distributed subgaussian random vector in $\RR^d$ with mean $0$ and $\EE( u_i\otimes u_i)=I_d$. It is not hard to see that
$$\big\|\calM_j(\bT)\big\|\asymp_p d^{k/2}.$$ 
See, e.g, \cite{montanari2016spectral}. Here $\asymp_p$ means $\asymp$ with high probability. Meanwhile, it is clear that
\begin{eqnarray*}
\|\bT\|_{\ell_2}^2=\sum_{i=1}^r\big(\|u_i\|_{\ell_2}^2\big)^k+2\sum_{1\leq i<i'\leq r} \langle u_i, u_{i'}\rangle^k\asymp_p rd^{k},
\end{eqnarray*}
so that
$$\sigma_{\min}\big(\calM_j(\bT\big)\asymp_p d^{k/2}.$$
Therefore,
$$
{\Lambda}_{\min}\asymp_p d^{k/2}\qquad {\rm and}\qquad \kappa(\bT)\asymp_p 1.
$$
Moreover, we have $\|\bT\|_{\ell_\infty}\asymp_p r^{1/2}\log^{k/2} d$. If $\sigma_\xi=O(1)$, then Corollary \ref{co:main} implies that, by taking
$$
\lambda=\gamma\left(r^{(k-1)/2}\sqrt{d^{k+1}\over n}+r^{1/2}{d^{3k/4}\over n}+r^{(k-1)/2}{d^k\over n}\right){\rm polylog}(d),
$$
we get
$$
d^{-k/p}\|\hat\bT-\bT\|_{\ell_p}=O_p\left(\sqrt{\frac{dr\log^2 d}{n}}\right)
$$
if
$$
n\geq C\gamma^2\left(dr^{k-1}+r^{(k-1)/2}d^{k/2}\right){\rm polylog}(d),
$$
for some absolute constant $C>0$.
%
%In contrast, \cite{montanari2016spectral} showed that in this setting%Montanari's result is equivalent to, by replace $\|\bA^{\star}\|_{\ell_2}\asymp (rd^k)^{1/2}$ written as
%$$
%\frac{1}{d^{k/2}}\big\|\hat\bT^{\rm MS}-\bT\big\|_{\ell_2}=O_p\bigg(\frac{r^{5/6}d^{k/6}\log^3(d)}{n^{1/3}}\bigg)
%$$
%provided that
%$$
%C_1rd^{k/2}{\rm polylog}(d)\leq n\leq C_2rd^{\ceil{k/2}}{\rm polylog}(d). 
%$$

\section{Numerical Experiments}\label{sec:simulation}
To complement our theoretical development, we present in this section results from several sets of numerical experiments. We begin with simulated third order tensors where the underlying tensor $\bT$ is generated from the following random tensor model
$$
\bT=\sum_{k=1}^r \lambda (u_k\otimes v_k\otimes w_k)\in\mathbb{R}^{d\times d\times d}
$$
with $\lambda=d^{3/2}$ and $U=[u_1;\ldots;u_r]\in\mathbb{R}^{d\times r}$ (also $V, W$) being randomly generated orthonormal columns from the eigenspace of a standard Gaussian random matrix. It is well known that $\|\lambda (u_k\otimes v_k\otimes w_k)\|_{\ell_\infty}=O(\log^{3/2}d)$ with high probability under such construction. In addition to our proposed estimator, we shall also consider the following estimator:
$$
\hat{\bT}^{(0)}=\hat{\bT}^{\rm init}\times_1 \bP_{U^{(0)}}\times_2 \bP_{V^{(0)}}\times_3 \bP_{W^{(0)}}.
$$
where $U^{(0)}$ ($V^{(0)}, W^{(0)}$ resp.) denotes the spectral initialization from U-statistics in (\ref{eq:U-statistics}). Note that the proposed estimator after the power iteration is given by
$$
\hat{\bT}^{\rm(iter_{\max})}=\hat{\bT}^{\rm init}\times_1 \bP_{U^{\rm(iter_{\max})}}\times_2 \bP_{V^{\rm(iter_{\max})}}\times_3 \bP_{W^{(\rm iter_{\max})}}
$$
where $U^{(\rm iter_{\max})}$ ($V^{\rm (iter_{\max})}, W^{\rm (iter_{\max})}$ resp.) denote the refined estimation after ${\rm iter}_{\max}=10$ power iterations. By including the estimator without power iteration, we can better appreciate the quality of spectral initialization and the effect of power iteration.

To further appreciate the merits of our approach, we also included an HOSVD based estimator:
$$
\hat{\bT}^{\rm (HOSVD)}=\hat{\bT}^{\rm init}\times_1 \bP_{\hat{U}^{\rm HOSVD}}\times_2 \bP_{\hat{V}^{\rm HOSVD}}\times_3 \bP_{\hat{W}^{\rm HOSVD}}.
$$
as well as the estimate proposed by \cite{montanari2016spectral}. We note that even though \cite{montanari2016spectral} considered only the noiseless case ($\sigma_\xi=0$), their estimator can nonetheless be applied to the noisy situations.

In our simulations, we set the sample size $n= r d^{\alpha}$ with various choices of $\alpha\in[0,3]$ and each observed entry is  perturbed with i.i.d. Gaussian noise $\xi\sim \mathcal{N}(0,\sigma_\xi^2)$. 
We set $d=50, 100, r=5$ and $\sigma_\xi=0.2$. For each $d, r, n$, all four estimates were evaluated based upon $30$ random realizations and the average error in estimating $\bT$:
$$
\varepsilon(\hat{\bT}):=\|\hat{\bT}-\bT\|_{\ell_2}/\|\bT\|_{\ell_2},
$$
and in estimating $U$
$$
\varepsilon(\hat U):=\|\hat U \hat U^{\top}-U U^\top\|
$$
are recorded. The results are summarized by Figures \ref{fig:spectral_third} and \ref{fig:fro_third}, where ``Naive'' represents HOSVD based estimator; ``MS'' stands for the estimator from \cite{montanari2016spectral}; ``U'' corresponds to $\hat{\bT}^{(0)}$ and ``U+Power'' our proposed estimator. The plots clearly show that $\hat{\bT}^{\rm (HOSVD)}$ requires a much larger sample size than the other estimates. It also suggests that $\hat{\bT}^{(0)}$ is more accurate than $\hat{\bT}^{\rm MS}$. Moreover, it shows that power iterations significantly improves the spectral estimation especially for larger $d$. Note that $\hat{\bT}^{\rm MS}$ can only be applied to $n\leq d^3$.

\begin{figure}
\centering
\subfigure[Comparison of spectral estimation between different approaches for $d=50,r=5$.]{\label{fig:spectral1}
  \includegraphics[height=3.3in,width=5in]{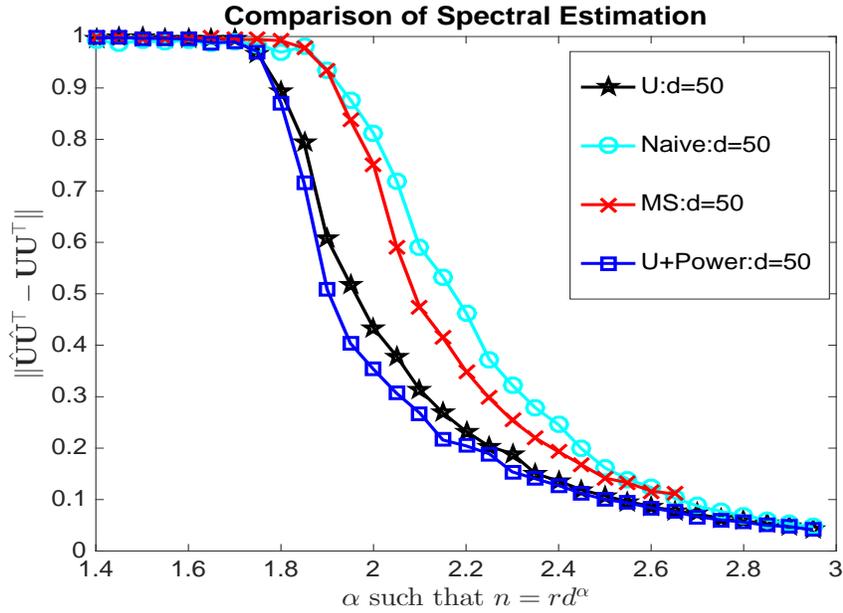}
}
\subfigure[Comparison of spectral estimation between different approaches for $d=100,r=5$.]{\label{fig:spectral2}
  \includegraphics[height=3.3in,width=5in]{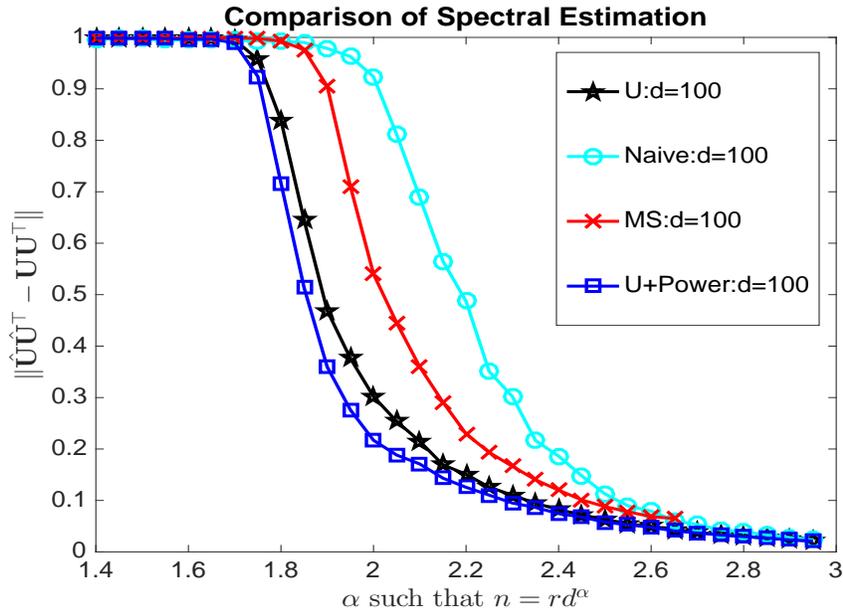}
  }
\caption{Comparison of spectral estimation among four different approaches. Note that ``MS'' method of \cite{montanari2016spectral} only applies to $n\leq d^3$.
}
\label{fig:spectral_third}
\end{figure}

\begin{figure}
\centering
\subfigure[Comparison of tensor recovery between different approaches for $d=50,r=5$.]{\label{fig:fro1}
  \includegraphics[height=3.3in,width=5in]{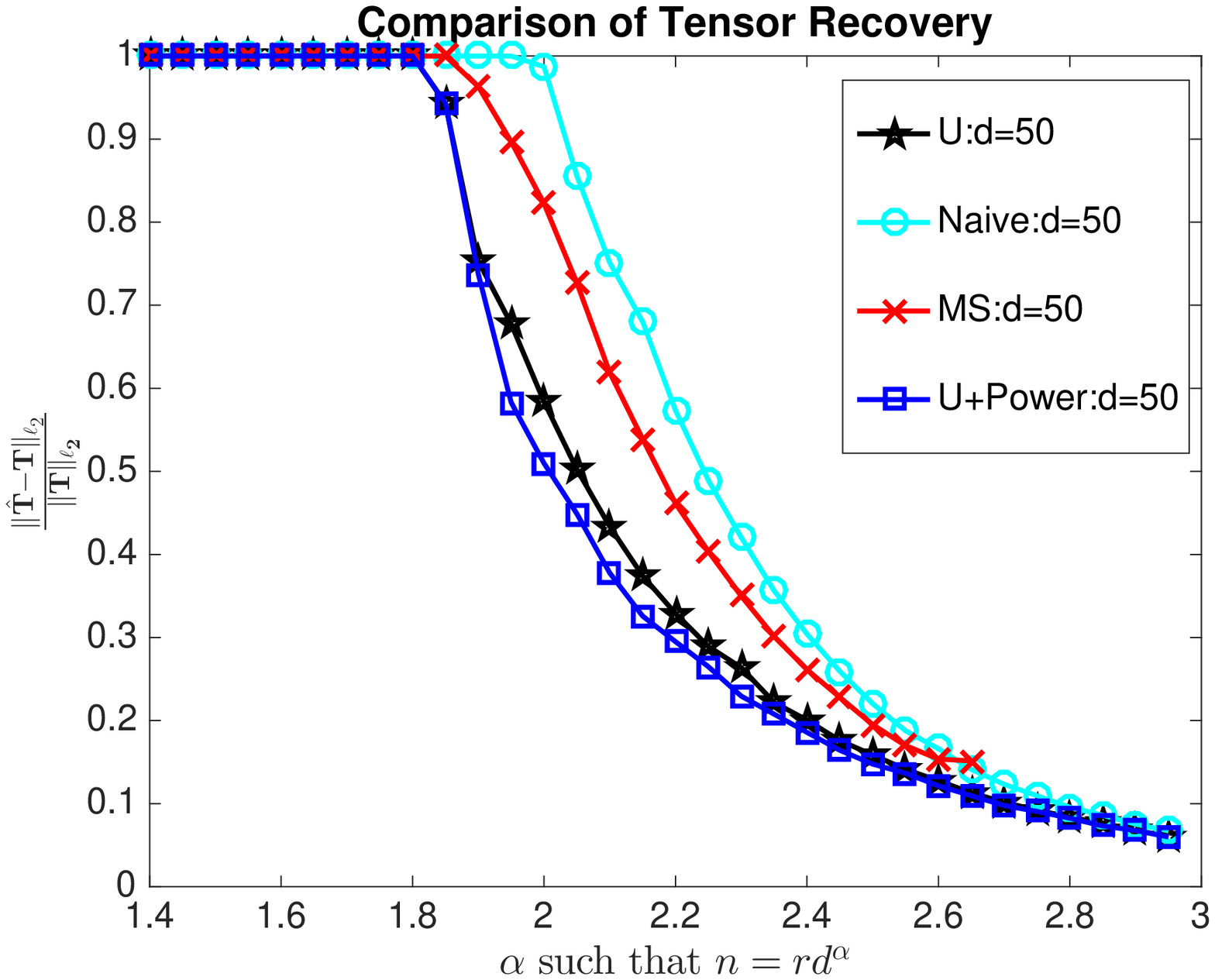}
}
\subfigure[Comparison of tensor recovery between different approaches for $d=100,r=5$.]{\label{fig:fro2}
  \includegraphics[height=3.3in,width=5in]{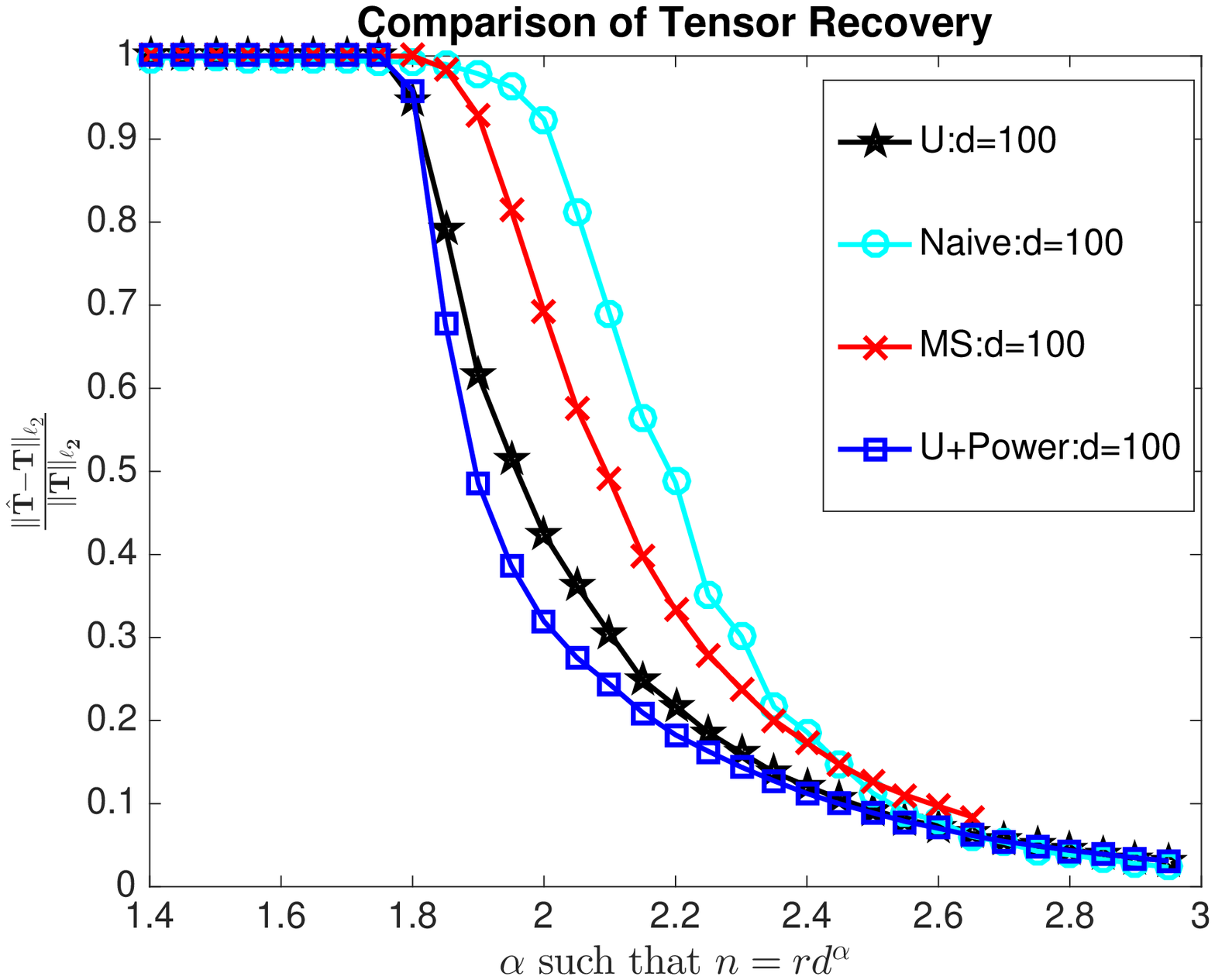}
  }
\caption{Comparison of tensor recovery among different approaches. Note that ``MS'' method of \cite{montanari2016spectral} only applies to $n\leq d^3$.
}
\label{fig:fro_third}
\end{figure}

Next, we apply our method to a simulated MRI brain image dataset to show the merits of our methods for denoising. The dataset can be accessed from McGill University Neurology Institute\footnote{http://brainweb.bic.mni.mcgill.ca/brainweb/}. See \cite{cocosco1997brainweb} and \cite{kwan1996extensible} for further details. We selected ``T1'' modality, ``1mm'' slice thickness, ``1\%'' noise, ``RF'' 40\% and obtained therefore a third-order tensor with size $217\times 181\times 181$, where each slice represents a $217\times 181$ brain image. The original tensor has full rank and we project it to a tensor with multilinear ranks $(20,20,20)$. In our simulations, we sampled $5\%,10\%,\ldots,100\%$ entries of $\bT$ and added i.i.d. Gaussian noise on each entries obeying distribution $\calN(0,\sigma_\xi^2)$ where
$$
\sigma_\xi=\gamma\cdot\bigg(\frac{\|\bT\|_{\ell_2}^2}{217\times 181\times 181}\bigg)^{1/2}
$$
with noise level $\gamma=0.05,0.10,0.15\ldots,1.0$. We applied our reconstruction scheme to each simulated dataset and recorded the relative error (RE): $\varepsilon(\hat\bT)=\|\hat\bT-\bT\|_{\ell_2}/\|\bT\|_{\ell_2}$. The results are presented in Figure~\ref{fig:MRI} and Figure~\ref{fig:MRI_2}. It again shows that our algorithm is quite stable to noise.% We note that our algorithm is also very efficient to compute. OUr implementation in Matlab took around $4$ seconds on a Dell PC with Duo Core $2.1$GHZ to analyze the dataset.
\begin{figure}
\centering
  \includegraphics[height=7.7in,width=6in]{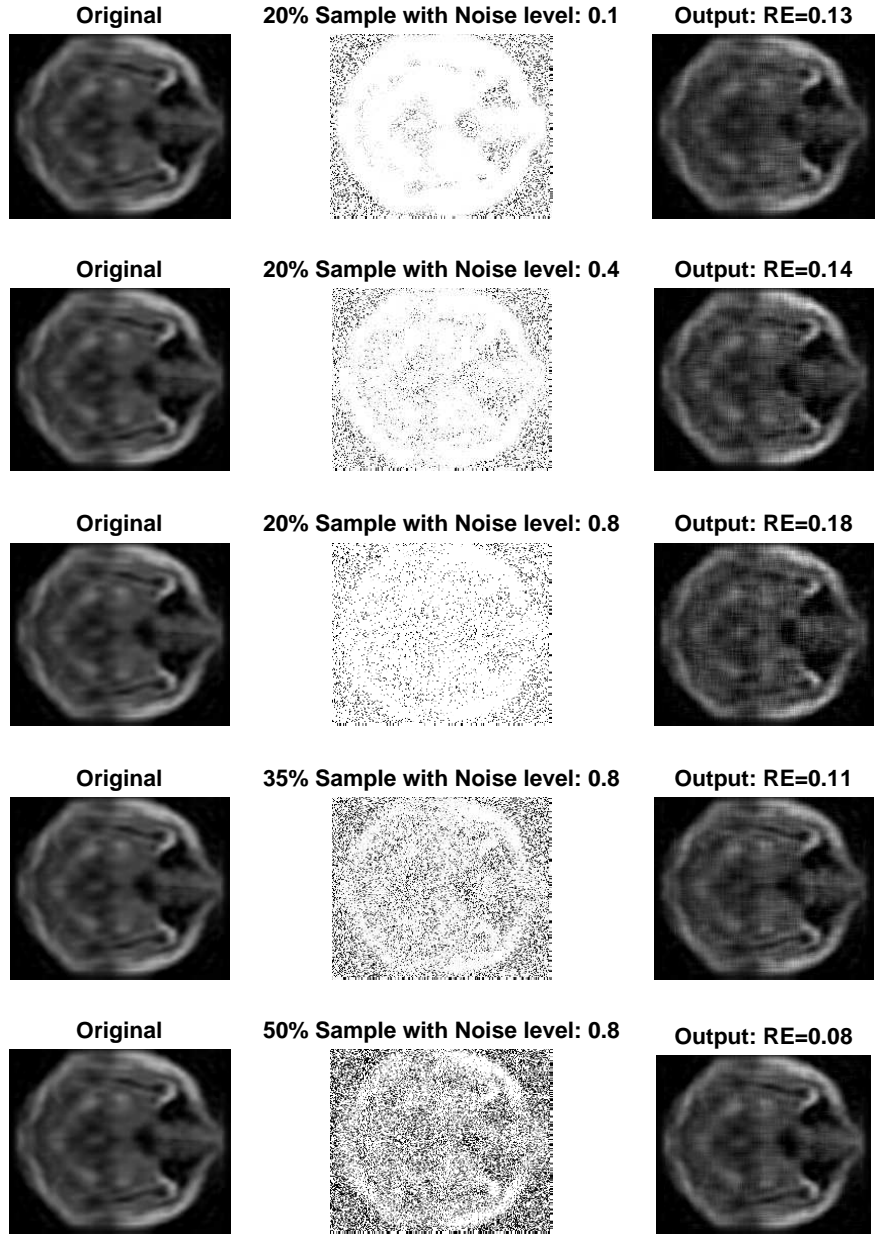}
\caption{Denoising of MRI brain image tensor. Each image is represents one slice of a tensor. The original tensor has size $217\times 181\times 181$ with multilinear ranks $(20,20,20)$. The third column represents the output of our algorithm with relative error (RE) measured as $\|\hat{\bT}-\bT\|_{\ell_2}/\|\bT\|_{\ell_2}$.}
\label{fig:MRI}
\end{figure}

\begin{figure}
\centering
  \includegraphics[height=3in,width=4in]{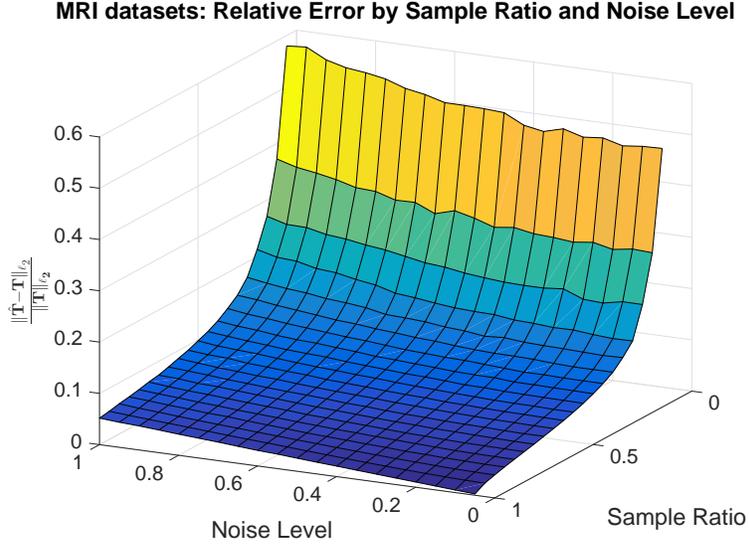}
\caption{Denoising of MRI brain image tensor. The dependence of relative error on the noise level and sample ratio. We observe that our algorithm is very stable to noise level.}
\label{fig:MRI_2}
\end{figure}

\section{Proofs}\label{sec:proofs}

We now present the proofs to our main results. We shall make use the Orlicz $\psi_{\alpha}$-norms ($\alpha\geq 1$) of a random variable $X$ defined as
$$
\|X\|_{\psi_\alpha}:=\inf\left\{u>0: \EE\exp\big(|X|^{\alpha}/u^{\alpha}\big)\leq 2\right\}.
$$
With this notion, the assumption that $\xi$ is subgaussian amounts to assuming that $\|\xi\|_{\psi_2}<+\infty$. A simple property of Orlicz norms that we shall use repeated without mentioning is the following: there exists a numerical constant $C>0$ such that for any random variables $X$ and $Y$, $\|XY\|_{\psi_1}\leq C\|X\|_{\psi_2}\|Y\|_{\psi_2}$ because
\begin{equation}\label{eq:orlicznorm}
\mathbb{E}\exp\left(\frac{|XY|}{ab}\right)\leq \mathbb{E}\exp\left(\frac{X^2}{2a^2}\right)\exp\left(\frac{Y^2}{2b^2}\right)\leq \mathbb{E}^{1/2}\exp\left(\frac{X^2}{a^2}\right)\mathbb{E}^{1/2}\exp\left(\frac{Y^2}{b^2}\right).
\end{equation}

\subsection{Proof of Theorem~\ref{th:init}}
The main architect of the proof follows a strategy developed by \cite{yuan2015tensor} for treating third order tensors. 

\paragraph{Symmetrization and Thinning.}
Let $\{\varepsilon_i\}_{i=1}^n$ denote i.i.d. Rademacher random variables independent with $\{(Y_i, \be_{\omega_i})\}_{i=1}^n$. Define
$$
\bDelta:=\frac{d_1\ldots d_k}{n}\sum_{i=1}^n\varepsilon_i Y_i \be_{\omega_i}.
$$
We begin with symmetrization (see, e.g., \cite{yuan2015tensor}) and obtain for any $t>0$,
\begin{eqnarray*}
\PP\Big(\Big\|\frac{d_1\ldots d_k}{n}\sum_{i=1}^nY_i\be_{\omega_i}-\bT\Big\|\geq t\Big)\leq 4\PP\Big(\|\bDelta\|\geq 2t\Big)\\
+4\exp\bigg(\frac{C_0t^2}{C_1d_1\ldots d_k(\sigma_{\xi}^2+\|\bT\|_{\ell_\infty}^2)/n+C_2t(\sigma_{\xi}+\|\bT\|_{\ell_\infty})d_1\ldots d_k/n}\bigg),
\end{eqnarray*}
for some universal constants $C_0,C_1,C_2>0$ and where we used Bernstein inequality of sum of independent subgaussian random variables. It suffices to prove the upper bound of $\PP\big(\|\bDelta\|\geq 2t\big)$ for any $t>0$.

Define
$$
\mathfrak{B}_{m_j,d_j}=\left\{0,\pm1,\pm2^{-1/2},\ldots,\pm2^{-m_j/2}\right\}^{d_j}\bigcap \left\{u\in\mathbb{R}^{d_j}: \|u\|_{\ell_2}\leq 1\right\},
$$
where $m_j=2\ceil{\log_2d_j}, j=1,\ldots,k$. %Write $\bDelta:=\hat{\bT}^{\rm init}-\bT$ for brevity. 
As shown by \cite{yuan2015tensor},
$$
\|\bDelta\|=\sup_{\|u_j\|_{\ell_2}\le 1, j=1,\ldots,k} \langle \bDelta, u_1\otimes\cdots\otimes u_k\rangle\le 2^k\sup_{u_j\in \mathfrak{B}_{m_j,d_j}}\langle\bDelta, u_1\otimes\cdots\otimes u_k\rangle.
$$
In fact, we can take the supreme over an even smaller set on the rightmost hand side.

To this end, let $\bD_s$ be the operator that zeroes out the entries of tensor $\bA$ whose absolute value is not $2^{-s/2}$, that is
$$
\bD_s(\bA)=\sum_{a_1,\ldots,a_k}{\bf 1}\big\{\big|\langle\bA,e_{a_1}\otimes \ldots\otimes e_{a_k} \rangle\big|=2^{-s/2}\big\}\langle\bA,e_{a_1}\otimes \ldots\otimes e_{a_k} \rangle e_{a_1}\otimes \ldots\otimes e_{a_k},
$$
where, with slight abuse on the notations, we denote by $\{e_{a_j}: 1\le a_j\le d_j\}$ the canonical basis vectors in $\RR^{d_j}$ for $j=1,\ldots,k$. An essential observation is that the aspect ratio of the set $\Omega=\{\omega_i: 1\le i\le n\}$ is typically small. More specifically, write
$$
\nu_{\Omega}:=\max_{\ell=1,\ldots,k}\ \max_{a_j\in [d_j]: j\in [k]\setminus\ell}\  \big|\big\{a_{\ell}: (a_1,\ldots,a_k)\in\Omega\big\}\big|.
$$
It follows from Chernoff bound that there exists a constant $C>0$ such that for all $\alpha\geq 1$,
\begin{equation}\label{eq:nubound}
\nu_{\Omega}\leq C\alpha\max\left\{\frac{nd_{\max}}{d_1d_2\ldots d_k}, k\log d_{\max}\right\}=:\nu,
\end{equation}
with probability at least $1-d_{\max}^{-\alpha}$. See, e.g., \cite{yuan2016incoherent}. We shall now proceed conditional on this event.

Obviously,
$$
\sup_{u_j\in \mathfrak{B}_{m_j,d_j}}\langle\bDelta, u_1\otimes\cdots\otimes u_k\rangle=\sup_{u_j\in \mathfrak{B}_{m_j,d_j}}\langle\bDelta, \calP_{\Omega}\left(u_1\otimes\cdots\otimes u_k\right)\rangle,
$$
where $\calP_\Omega$ is the operator that zeroes all entries of a tensor outside $\Omega$. We shall now characterize $\calP_{\Omega}\bD_s(u_1\otimes \ldots\otimes u_k)$. For fixed $u_j\in\mathfrak{B}_{m_j,d_j}, j=1,\ldots,k$, write $A_{b_j}=\{a: |u_j(a)|=2^{-b_j/2}\}$. As shown in \cite{yuan2015tensor}, there exist sets $\tilde{A}_{s,b_j}\subset A_{b_j}$ such that
$$
|\widetilde{A}_{s,b_j}|^2\leq \nu_{\Omega}\left(\prod_{j=1}^{k}|\widetilde{A}_{s,b_j}|\right),
$$
$$
(A_{b_1}\times \ldots\times A_{b_k})\cap \Omega=(\widetilde{A}_{s,b_1}\times \ldots \times \widetilde{A}_{s,b_k})\cap \Omega,
$$
and
$$
\widetilde{\bD}_s(u_1\otimes \ldots\otimes u_k):=\calP_\Omega\widetilde{\bD}_s(u_1\otimes \ldots\otimes u_k)=\sum_{(b_1,\ldots,b_k):b_1+\ldots+b_k=s}\calP_{\widetilde{A}_{s,b_1}\times\ldots\times \widetilde{A}_{s,b_k}}\bD_s(u_1\otimes \ldots\otimes u_k).
$$
Now define
$$
\mathfrak{B}^{\star}_{\Omega,m_{\star}}:=\Big\{\sum_{0\leq s\leq m_{\star}}\widetilde{\bD}_s(u_1\otimes \ldots\otimes u_k)+\sum_{m_{\star}<s\leq m^{\star}}\bD_s(u_1\otimes \ldots\otimes u_k): u_{j}\in \mathfrak{B}_{m_j,d_j}, j=1,\ldots,k\Big\}
$$
for any $0\leq m_{\star}\leq m^{\star}=\sum_{j=1}^k m_j$. Write
$$\mathfrak{B}^{\star}_{\nu,m_{\star}}=\bigcup_{\nu_{\Omega}\leq \nu}\mathfrak{B}^{\star}_{\Omega,m_{\star}}.$$
Then
$$
\|\bDelta\|\leq 2^{k}\underset{\bY\in\mathfrak{B}^{\star}_{\nu,m_{\star}}}{\max} \big\langle\bY,\bDelta\big\rangle.
$$
It is not hard to see that \citep{yuan2015tensor}
$$
\log{\rm Card}\big(\mathfrak{B}_{\nu,m_{\star}}^{\star}\big)\leq \frac{21}{4}(d_1+d_2+\ldots+d_k).
$$
A refined characterization of the entropy of $\mathfrak{B}_{\nu,m_{\star}}^{\star}$ is also needed. To this end, define for any $0\leq q\leq s\leq m_{\star}$,
$$
\mathfrak{D}_{v,s,q}:=\Big\{\bD_s(\bY): \bY\in\mathfrak{B}^{\star}_{v,m_{\star}}, \|\bD_s(\bY)\|_{\ell_2}^2\leq 2^{q-s}\Big\}.
$$
Following an identical argument to Lemma 12 of \cite{yuan2015tensor}, we have (for readers' convenience, we include its proof in the Appendix for completeness.)
\begin{lemma}\label{lemma:Dvjk}
Let $\nu\geq 1$. For all $0\leq q\leq s\leq m_{\star}$, the following bound holds
$$
\log{\rm Card}(\mathfrak{D}_{\nu,s,q})\leq qs^k\log 2+2k^2s^{k}\sqrt{\nu 2^q} L\big(\sqrt{\nu 2^q}, d_{\max}s^{k/2}\big),
$$
where $L(x,y)=\max\big\{1, \log(ey/x)\big\}$.
\end{lemma}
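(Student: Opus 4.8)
The plan is to treat Lemma~\ref{lemma:Dvjk} as a purely combinatorial metric‑entropy estimate and to follow the proof of Lemma~12 of \cite{yuan2015tensor} essentially verbatim, the only new bookkeeping being the dependence on the order $k$. First I would record the rigid structure of an element $\bZ=\bD_s(\bY)\in\mathfrak{D}_{\nu,s,q}$: every nonzero entry of $\bZ$ equals $\pm 2^{-s/2}$, so the constraint $\|\bZ\|_{\ell_2}^2\le 2^{q-s}$ forces $\bZ$ to have at most $2^q$ nonzero entries. Moreover, by the definition of $\mathfrak{B}^\star_{\nu,m_\star}$ together with the thinning construction of the sets $\widetilde A_{s,b_j}$, the support of $\bZ$ is a disjoint union, over the compositions $b_1+\cdots+b_k=s$ (there are $\binom{s+k-1}{k-1}\le s^k$ of them in the relevant range of $s$), of combinatorial rectangles $\widetilde A_{s,b_1}\times\cdots\times\widetilde A_{s,b_k}$ restricted to $\Omega$, whose sides obey the aspect‑ratio bound $|\widetilde A_{s,b_j}|^2\le\nu\prod_{j'}|\widetilde A_{s,b_{j'}}|$ and hence $|\widetilde A_{s,b_j}|\le\sqrt{\nu 2^q}$; and on each such rectangle the sign pattern of $\bZ$ is an outer product of $\pm1$ vectors.

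Given this structure, an element of $\mathfrak{D}_{\nu,s,q}$ is determined by (i) the $k$ side‑sets $\widetilde A_{s,b_j}\subseteq[d_j]$ for each composition $(b)$, and (ii) the $k$ sign vectors on those sets for each composition. For fixed side‑cardinalities $n^{(b)}_j\le\sqrt{\nu 2^q}$, the number of joint choices of ``set together with sign vector'' in coordinate $j$ of rectangle $(b)$ is at most $\binom{d_j}{n^{(b)}_j}2^{n^{(b)}_j}$. I would then use $\log\!\big(\binom{d}{m}2^m\big)\le\log(m+1)+m\log(2ed/m)$ together with the monotonicity of $m\mapsto m\log(2ed/m)$ on $[0,d]$ to replace $m$ by $\sqrt{\nu 2^q}$ and $d$ by $d_{\max}$. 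Summed over the $\le k$ coordinates and $\le s^k$ compositions, the $\log(m+1)$ contributions produce the $qs^k\log 2$ term (since $\log(m+1)\le\tfrac12 q\log 2+O(1)$ when $m\le\sqrt{\nu 2^q}$), while the $m\log(2ed/m)$ contributions produce a term of order $k^2 s^k\sqrt{\nu 2^q}\,L(\sqrt{\nu 2^q},d_{\max})$. A union bound over the at‑most‑polynomially‑many‑in‑$s$ admissible cardinality profiles $(n^{(b)}_j)$ is then absorbed by inflating the second argument of $L$ from $d_{\max}$ to $d_{\max}s^{k/2}$, which yields exactly $\log{\rm Card}(\mathfrak{D}_{\nu,s,q})\le qs^k\log 2+2k^2 s^k\sqrt{\nu 2^q}\,L(\sqrt{\nu 2^q},d_{\max}s^{k/2})$.

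The crux, and the only place where anything genuinely needs checking, is the first step: the decomposition of $\supp(\bZ)$ into at most $s^k$ combinatorial rectangles each of whose sides has cardinality at most $\sqrt{\nu 2^q}$. This is precisely where the bounded aspect ratio $\nu_\Omega\le\nu$ of the sampling pattern is used, through the ``rectangle covering'' argument of \cite{yuan2015tensor}: a set of at most $2^q$ lattice points whose fibers overlap in at most $\nu$ coordinates can be enclosed in a product set each of whose sides has size at most $\sqrt{\nu 2^q}$. That covering argument never used $k=3$, so it carries over to general $k$ without change, and from that point on the remaining counting is routine, with the $k$‑dependence of the bound merely tracked through the number $\le s^k$ of compositions and the $k$ factors in each outer product — exactly as in Yuan and Zhang's proof.
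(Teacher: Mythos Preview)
Your plan follows the same architecture as the paper's proof: decompose the support of $\bD_s(\bY)$ into at most $\binom{s+k-1}{k-1}\le s^k$ combinatorial rectangles with sides controlled by the aspect ratio, then count set-and-sign choices per side via $\binom{d}{m}2^m\le(2ed/m)^m$. Two bookkeeping points are off, though neither is fatal. First, the $\log(m+1)$ contributions summed over the $k\cdot s^k$ sides give $\tfrac{k}{2}qs^k\log 2$, not $qs^k\log 2$; the excess $(\tfrac{k}{2}-1)qs^k\log 2$ has to be absorbed into the $\sqrt{\nu 2^q}\,L$ term via $q\lesssim\sqrt{2^q}\le\sqrt{\nu 2^q}$, which you should say explicitly. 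Second, the number of cardinality profiles $(n_j^{(b)})$ is not polynomial in $s$ --- each of the $ks^k$ sides ranges over $\{0,\dots,\lfloor\sqrt{\nu 2^q}\rfloor\}$, so there are about $(\sqrt{\nu 2^q})^{ks^k}$ profiles --- but this is exactly what your $\log(m+1)$ terms already pay for, so the separate ``union over profiles'' step is redundant and its stated justification is wrong.

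The paper sidesteps the first slip by a slightly different accounting: it parametrizes by the block \emph{volumes} $\ell_b$ with $\sum_b\ell_b\le 2^q$ (at most $(2^q)^{i^\star}$ such tuples, which is where the clean $qs^k\log 2$ comes from) and then, for a block of fixed volume $\ell$, bounds the number of factorizations $\ell_1\cdots\ell_k=\ell$ via a prime-factorization divisor count; the inflation of the second argument of $L$ from $d_{\max}$ to $d_{\max}s^{k/2}$ arises not from counting profiles but from the concavity/Cauchy--Schwarz step $\sum_b\sqrt{\ell_b}\le\sqrt{i^\star\cdot 2^q}$. Your coarser side-by-side bounding avoids the divisor count and still works, but you should track the constants honestly.
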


We are now in position to bound $\|\bDelta\|$. Observe that
\begin{eqnarray*}
\big\|\bDelta\big\|&\leq& 2^k\underset{\bY\in\mathfrak{B}^{\star}_{\nu,m_{\star}}}{\max} \left\langle\bY,\quad \bDelta\right\rangle\\
&=&2^{k}\underset{\bY\in\mathfrak{B}^{\star}_{v,m_{\star}}}{\max} \bigg(\sum_{0\leq s\leq m_{\star}}\Big<\bD_s(\bY),\quad \bDelta\Big>+\big<\bS_{\star}(\bY), \bDelta\big>\bigg),
\end{eqnarray*}
where $\bS_{\star}(\bY)=\sum_{s>m_{\star}}\bD_s(\bY)$ and $m_{\star}$ is determined by
$$
m_{\star}:=\min\big\{x: x\geq m^{\star}\ {\rm or}\ 2k^2x^{k}\sqrt{\nu 2^x}L\big(\sqrt{\nu 2^x},d_{\max}x^{k/2}\big)\geq d_1+\ldots+d_k\big\}.
$$
Another simple fact is that $m_{\star}\leq m^{\star}\lesssim k\ceil{\log (d_{\max})}$.

\paragraph{Bounding $\big|\langle\bD_s(\bY), \bDelta\rangle\big|$.} For any $\bY\in \mathfrak{B}^{\star}_{v,m_{\star}}$, we have $2^{-s}\leq \|\bD_s(\bY)\|_{\rm F}^2\leq 1$ and thus $\bD_s(\bY)\in \cup_{q=0}^{s}\mathfrak{D}_{v,s,q}$. Denote $\bY_s=\bD_s(\bY)$.
It suffices to develop an upper bound for
$$
\underset{\bY_s\in \mathfrak{D}_{\nu,s,q}\setminus \mathfrak{D}_{\nu,s,q-1}}{\max}\quad \langle\bY_s, \bDelta\rangle=\sum_{i=1}^n\langle\bY_s, \bZ_i\rangle,
$$
for all $0\leq q\leq s$, where
$$\bZ_i:=\frac{d_1\ldots d_k}{n}\varepsilon_iY_i\be_{\omega_i},\qquad i=1,\ldots, n.$$
Observe that, for any fixed $\bY_s\in\mathfrak{D}_{v,s,q}\setminus \mathfrak{D}_{v,s,q-1}$,
\begin{eqnarray*}
\mathbb{E}\langle\bY_s,\bZ_i \rangle^2&\leq& 2\frac{(d_1\ldots d_k)^2}{n^2}\mathbb{E}\langle\bT, \be_{\omega_i}\rangle^2\langle\bY_s,\be_{\omega_i} \rangle^2+2\frac{(d_1\ldots d_k)^2}{n^2}\mathbb{E}\xi^2\langle\be_{\omega_i}, \bY_{s}\rangle^2\\
&\leq& 2\frac{d_1\ldots d_k}{n^2}\big(\|\bT\|_{\ell_\infty}^2+\sigma_{\xi}^2\big)\|\bY_s\|_{\rm \ell_2}^2\leq 2\frac{d_1\ldots d_k}{n^2}\big(\|\bT\|_{\ell_\infty}^2+\sigma_{\xi}^2\big)2^{q-s},
\end{eqnarray*}
and
\begin{eqnarray*}
\big\|\langle\bY_s, \bZ_i\rangle\big\|_{\psi_2}&\leq& \left\|\frac{d_1\ldots d_k}{n}\varepsilon_i\langle\be_{\omega_i}, \bT\rangle\langle\bY_s,\be_{\omega_i} \rangle\right\|_{\psi_2}+\left\|\frac{d_1\ldots d_k}{n}\varepsilon_i\xi_i\langle\be_{\omega_i},\bY_s \rangle\right\|_{\psi_2}\\
&\leq& C\frac{d_1\ldots d_k}{n}\big(\|\bT\|_{\ell_\infty}+\sigma_{\xi}\big)2^{-s/2},
\end{eqnarray*}
for some constant $C>0$, implying that $\langle\bY_s,\bZ_i\rangle$ has a subgaussian tail. By Bernstein inequality for sum of unbounded random variables,
\begin{align*}
&\mathbb{P}\bigg(\Big|\sum_{i=1}^n\langle\bY_s, \bZ_i\rangle\Big|\geq t\bigg)\\
\leq& \exp\bigg(-\frac{C_0t^2}{C_1d_1\ldots d_k(\sigma_{\xi}^2+\|\bT\|_{\ell_\infty}^2)2^{q-s}/n+C_2t(\sigma_{\xi}+\|\bT\|_{\ell_\infty})d_1\ldots d_k2^{-s/2}/n}\bigg),
\end{align*}
for some universal constants $C_0, C_1, C_2>0$. An application of the union bound yields
\begin{eqnarray*}
&&\mathbb{P}\bigg(\max_{\bY_s\in\mathfrak{D}_{v,s,q}\setminus \mathfrak{D}_{v,s,q-1}} \Big|\sum_{i=1}^n\langle\bY_s, \bZ_i\rangle\Big|\geq t\bigg)\\
&\leq& \big|\mathfrak{D}_{v,s,q}\big|\exp\bigg(-\frac{C_0t^2}{C_1d_1\ldots d_k(\|\bT\|_{\ell_\infty}\vee\sigma_{\xi})^22^{q-s}/n+C_2t(\|\bT\|_{\ell_\infty}\vee\sigma_{\xi})d_1\ldots d_k2^{-s/2}/n}\bigg)\\
&\leq& \exp\bigg(\frac{21}{4}(d_1+\ldots+d_k)-\frac{C_0t^2}{C_1d_1\ldots d_k(\|\bT\|_{\ell_\infty}\vee\sigma_{\xi})^22^{q-s}/n}\bigg)\\
&&\qquad +\exp\bigg(\log{\rm Card}(\mathfrak{D}_{v,s,q})-2^{s/2}\frac{C_0t}{C_2(\|\bT\|_{\ell_\infty}\vee\sigma_{\xi})d_1\ldots d_k/n}\bigg).
\end{eqnarray*}
Recall that $m_{\star}\lesssim k\log (d_{\max})$,
$$
\log{\rm Card}(\mathfrak{D}_{v,s,q})\lesssim (k\log d_{\max})^{k+1}+2k^2(k\log d_{\max})^k\sqrt{\nu 2^{q}}L(\sqrt{\nu2^{q}}, d_{\max}s^{k/2}),
$$
and
$$L(\sqrt{\nu 2^q},d_{\max}s^{k/2})\lesssim k\log d_{\max}.$$
By choosing
\begin{align}\nonumber
t\geq C_1(\|\bT\|_{\ell_\infty}\vee \sigma_{\xi})\max\bigg\{2^{(q-s)/2}\sqrt{\frac{kd_{\max}d_1\ldots d_k}{n}},2^{-s/2}(k\log d_{\max})^{k+1}\frac{d_1\ldots d_k}{n},\\ 
k^3(k\log d_{\max})^{k}\sqrt{v}2^{(q-s)/2}\frac{d_1\ldots d_k\log d_{\max}}{n}\bigg\},\label{eq:tbound}
\end{align}
we get
\begin{eqnarray*}
\mathbb{P}\bigg(\max_{\bY_s\in\mathfrak{D}_{v,s,q}\setminus \mathfrak{D}_{v,s,q-1}} \Big|\sum_{i=1}^n\langle\bY_s, \bZ_i\rangle\Big|\geq t\bigg)&\leq& \exp\bigg(-\frac{C_0t^2}{C_1d_1\ldots d_k(\|\bT\|_{\ell_\infty}\vee\sigma_{\xi})^22^{q-s}/n}\bigg)+\\
&&\qquad +\exp\bigg(-2^{s/2}\frac{C_0t}{C_2(\|\bT\|_{\ell_\infty}\vee\sigma_{\xi})d_1\ldots d_k/n}\bigg).
\end{eqnarray*}
By making the above bound uniform over all pairs $0\leq q\leq s\leq m_{\star}$, we obtain that 
\begin{eqnarray*}
&&\mathbb{P}\bigg(\max_{\bY\in\mathfrak{B}^{\star}_{\nu,m_{\star}}} \Big|\sum_{0\leq s\leq m_{\star}}\sum_{i=1}^n\langle\bY_s, \bZ_i\rangle\Big|\geq (m_{\star}+1)t\bigg)\\
&\leq& 1-{m_{\star}+2\choose 2}\exp\bigg(-\frac{C_0t^2}{C_1d_1\ldots d_k(\|\bT\|_{\ell_\infty}\vee\sigma_{\xi})^22^{q-s}/n}\bigg)\\
&&\qquad -{m_{\star}+2\choose 2}\exp\bigg(-2^{s/2}\frac{C_0t}{C_2(\|\bT\|_{\ell_\infty}\vee\sigma_{\xi})d_1\ldots d_k/n}\bigg).
\end{eqnarray*}

\paragraph{Bounding $\max_{\bY\in \mathfrak{B}^{\star}_{\nu, m_{\star}}}\big|\sum_{i=1}^n \langle\bS_{\star}(\bY),\bZ_i \rangle\big|$.}
Observe that
\begin{eqnarray*}
\mathbb{E}\langle\bS_{\star}(\bY),\bZ_i \rangle^2\leq 2\frac{(d_1\ldots d_k)^2}{n^2}\mathbb{E}\langle\bS_{\star}(\bY),\be_{\omega_i} \rangle^2\langle\bT,\be_{\omega_i} \rangle^2+2\frac{(d_1\ldots d_k)^2}{n^2}\mathbb{E}\xi_i^2\langle\bS_{\star}(\bY),\be_{\omega_i} \rangle^2\\
\leq 2\frac{d_1\ldots d_k}{n^2}\|\bS_{\star}(\bY)\|_{\rm F}^2\big(\|\bT\|_{\ell_\infty}^2+\sigma_{\xi}^2\big)\leq 2^{-m_{\star}+1}\frac{d_1\ldots d_k}{n^2}\big(\|\bT\|_{\ell_\infty}^2+\sigma_{\xi}^2\big),
\end{eqnarray*}
and
\begin{align*}
\big\|\langle\bS_{\star}(\bY), \bZ_i\rangle\big\|_{\psi_2}\leq& \big\|\frac{d_1\ldots d_k}{n}\varepsilon_i\langle\bS_{\star}, \be_{\omega_i}\rangle\langle\bT,\be_{\omega_i} \rangle\big\|_{\psi_2}+\big\|\frac{d_1\ldots d_k}{n}\varepsilon_i\xi_i\langle\bS_{\star}(\bY), \be_{\omega_i}\rangle\big\|_{\psi_2}\\
\leq& C\frac{d_1\ldots d_k}{n}2^{-m_{\star}/2}\big(\|\bT\|_{\ell_\infty}+\sigma_{\xi}\big),
\end{align*}
for some constant $C>0$. Again, by Bernstein inequality and the union bound,
\begin{align*}
\mathbb{P}\bigg(\Big|&\max_{\bY\in\mathfrak{B}^{\star}_{v,m_{\star}}}\sum_{i=1}^{n}\langle\bS_{\star}(\bY),\bZ_i \rangle\Big|\geq t\bigg)\leq \exp\Big(21/4(d_1+\ldots+d_k)\Big)\\
 \times&\exp\Big(-\frac{C_0t^2}{C_1d_1\ldots d_k2^{-m_{\star}+1}(\|\bT\|_{\ell_\infty}\vee \sigma_{\xi})^2/n+C_2t(\|\bT\|_{\ell_\infty}\vee\sigma_{\xi})d_1\ldots d_k2^{-m_{\star}/2}/n}\Big).
\end{align*}
By choosing 
$$
t\geq C(\|\bT\|_{\ell_\infty}\vee\sigma_{\xi})\max\bigg\{2^{-(m_{\star}-1)/2}\sqrt{\frac{kd_{\max}d_1\ldots d_k}{n}}, \quad 2^{-m_{\star}/2}\frac{kd_{\max}d_1\ldots d_k}{n}\bigg\},
$$
we get
\begin{eqnarray*}
\mathbb{P}\bigg(\Big|\max_{\bY\in\mathfrak{B}^{\star}_{v,m_{\star}}}\sum_{i=1}^{n}\langle\bS_{\star}(\bY),\bZ_i \rangle\Big|\geq t\bigg)
\leq \exp\bigg(-\frac{C_0t^2}{C_1d_1\ldots d_k2^{-m_{\star}+1}(\|\bT\|_{\ell_\infty}\vee\sigma_{\xi})^2/n}\bigg)\\
+\exp\bigg(-\frac{C_0t}{C_2(\|\bT\|_{\ell_\infty}\vee\sigma_{\xi})d_1\ldots d_k2^{-m_{\star}/2}/n}\bigg).
\end{eqnarray*}

\paragraph{Putting them together.} Combining above bounds, we conclude that if
\begin{eqnarray*}
t\geq C_1(\|\bT\|_{\ell_\infty}\vee \sigma_{\xi})\max\bigg\{\sqrt{\frac{kd_{\max}d_1\ldots d_k}{n}}, \quad (k\log d_{\max})^{k+1}\frac{d_1\ldots d_k}{n},\\
k^3(k\log d_{\max})^{k}\sqrt{v}\frac{d_1\ldots d_k\log d_{\max}}{n},\quad 2^{-m_{\star}/2}\frac{kd_{\max}d_1\ldots d_k}{n}\bigg\},
\end{eqnarray*}
then
\begin{eqnarray*}
\mathbb{P}\bigg(\|\bDelta\|\leq (m_{\star}+2)t\bigg)
\geq1-2{m_{\star}+2\choose 2}\exp\bigg(-\frac{C_0t^2}{C_1d_1\ldots d_k(\|\bT\|_{\ell_\infty}\vee\sigma_{\xi})^2/n}\bigg)\\
-2{m_{\star}+2\choose 2}\exp\bigg(-\frac{C_0t}{C_2(\|\bT\|_{\ell_\infty}\vee\sigma_{\xi})d_1\ldots d_k/n}\bigg).
\end{eqnarray*}
By the definition of $m_{\star}$, we have
$$2^{-m_{\star}/2}\lesssim \frac{\sqrt{\nu}}{d_{\max}}k^{3+k}\log^{k+1} d_{\max}.$$
Therefore, with probability at least $1-d_{\max}^{-\alpha}$ for $\alpha>1$ (by adjusting the constant $C_1$),
\begin{eqnarray*}
\big\|\bDelta\big\|\leq C_1k^{k+3}\alpha\big(\|\bT\|_{\ell_\infty}\vee \sigma_{\xi}\big)\max\bigg\{\sqrt{\frac{kd_{\max}d_1\ldots d_k}{n}},\quad \frac{kd_1\ldots d_k}{n} \bigg\}\log^{k+2}d_{\max}.
\end{eqnarray*}
Similar bounds also hold for $\Big\|n^{-1}d_1\ldots d_k\sum_{i=1}^nY_i\be_{\omega_i}-\bT\Big\|$. 

\subsection{Proof of Theorem~\ref{thm:N1bound}}
Without loss of generality, we only consider $j=1$ and prove the upper bound of $\|\hat{\bN}_1-\bN_1\|$.
Recall that $\hat\bN_1$ can be equivalently written as
$$
\hat\bN_1=\frac{(d_1\ldots d_k)^2}{n(n-1)}\sum_{1\leq i< i'\leq n}Y_iY_{i'}\left(\calM_1(\be_{\omega_i})\calM_1^{\top}(\be_{\omega_{i'}})+\calM_1(\be_{\omega_{i'}})\calM_1^{\top}(\be_{\omega_i})\right).
$$
Note that $\hat\bN_1$ is actually a U-statistics. By a standard decoupling technique \citep[see, e.g.,][]{de1999decoupling}, 
\begin{equation}
\mathbb{P}\big(\|\hat\bN_1-\bN_1\|\geq t\big)\leq 15\mathbb{P}\big(\|\widetilde\bN_1-\bN_1\|\geq 15t\big),
\end{equation}
where
$$
\widetilde\bN_1:=\frac{(d_1\ldots d_k)^2}{2n(n-1)}\sum_{1\leq i\neq i'\leq n}Y_i\widetilde Y_{i'}\left(\calM_1(\be_{\omega_i})\calM_1^{\top}(\widetilde\be_{\omega_{i'}})+\calM_1(\widetilde\be_{\omega_{i'}})\calM_1^{\top}(\be_{\omega_i})\right),
$$
and $\{(\widetilde\be_{\omega_i},\widetilde Y_i): 1\le i\le n\}$ is an independent copy of $\{(\be_{\omega_i},Y_i): 1\le i\le n\}$ such that
$$
\widetilde{Y}_i=\langle \bT,\widetilde{\be}_{\omega_i}\rangle+\widetilde{\xi}_i,\qquad i=1,\ldots,n.
$$
For simplicity, let $m_1=d_1, m_2=\frac{d_1\ldots d_k}{d_1}$ and denote $\bM=\calM_1(\bT)\in\mathbb{R}^{m_1\times m_2}$. With slight abuse of notation, write $\bX_i=\calM_1(\be_{\omega_i})$ and $\widetilde{\bX}_i=\calM_1(\widetilde{\be}_{\omega_i})$. Define
$$
\bS_1:=\bDelta_1+{\bf \Xi}_1:=\Big(\frac{m_1m_2}{n}\sum_{i=1}^n Y_i\bX_i\Big)-\bM\quad {\rm and}\quad
\bS_2:=\bDelta_2+{\bf \Xi}_2:=\Big(\frac{m_1m_2}{n}\sum_{i=1}^n \widetilde{Y}_i\widetilde{\bX}_i\Big)-\bM,
$$
where
$$
\bDelta_1=\Big(\frac{m_1m_2}{n}\sum_{i=1}^n\langle\bM,\bX_i \rangle\bX_i\Big)-\bM\quad {\rm and}\quad \bDelta_2=\Big(\frac{m_1m_2}{n}\sum_{i'=1}^n\langle\bM,\widetilde{\bX}_{i'}\rangle\widetilde{\bX}_{i'}\Big)-\bM,
$$
and
$$
{\bf \Xi}_1=\frac{m_1m_2}{n}\sum_{i=1}^n\xi_i\bX_i\quad {\rm and}\quad {\bf \Xi}_2=\frac{m_1m_2}{n}\sum_{i'=1}^n\widetilde{\xi}_{i'}\widetilde{\bX}_{i'}.
$$
Write
\begin{eqnarray*}
\widetilde{\bN}_1-\bN_1=\frac{n}{2(n-1)}\big(\bS_1\bS_2^{\top}+\bS_2\bS_1^{\top}\big)+\frac{n}{2(n-1)}\big(\bS_1+\bS_2\big)\bM^{\top}+\frac{n}{2(n-1)}\bM(\bS_1^{\top}+\bS_2^{\top})\\
+\frac{1}{n-1}\Big(\frac{(m_1m_2)^2}{2n}\sum_{i=1}^nY_i\widetilde{Y}_i\big(\bX_i\widetilde{\bX}_i^{\top}+\widetilde{\bX}_i\bX_i^{\top}\big)-\bM\bM^{\top}\Big).
\end{eqnarray*}
We now bound the spectral norm of each term on the righthand side separately. We begin with several preliminary facts which can be easily proved by matrix Bernstein inequalities (Lemma~\ref{lemma:matBern-bounded} and Lemma~\ref{lemma:matBern-unbound}).
By Lemma~\ref{lemma:matBern-unbound}, with probability at least $1-m_{\max}^{-\alpha}$ for any $\alpha\geq 1$, 
\begin{eqnarray*}
\max\big\{\|{\bf \Xi}_1\|,\|{\bf \Xi}_2\|\big\}\leq C\sigma_{\xi}\max\bigg\{\sqrt{\frac{\alpha m_1m_2m_{\max}\log m_{\max}}{n}},\frac{\alpha m_1m_2\log^{3/2}m_{\max}}{n}\bigg\},
\end{eqnarray*}
where $m_{\max}=\max\{m_1,m_2\}$, $m_{\min}=\min\{m_1,m_2\}$ and $C>0$ is an absolute constant. Denote this event by $\calE_1$. By Lemma~\ref{lemma:matBern-bounded}, the following bound holds with probability at least $1-m_{\max}^{-\alpha}$ for any $\alpha\geq 1$,
$$
\max\big\{\|\bDelta_1\|,\|\bDelta_2\|\big\}\leq C\|\bM\|_{\ell_\infty}\max\bigg\{\sqrt{\frac{\alpha m_1m_2m_{\max}\log m_{\max}}{n}}+\frac{\alpha m_1m_2\log m_{\max}}{n}\bigg\}.
$$
Denote this event by $\calE_2$. By Chernoff bound \citep[see, e.g.][]{yuan2015tensor}, there exists an event $\calE_3$ with $\mathbb{P}(\calE_3)\geq 1-m_{\max}^{-\alpha}-n^{-\alpha}$ such that on $\calE_3$,
$$
\max\big\{\|\bDelta_1\|_{1,\infty},\|\bDelta_2\|_{1,\infty}\big\}\leq m_1\|\bM\|_{\ell_\infty}+\frac{(3\alpha+7)(m_1m_2)}{n}\|\bM\|_{\ell_\infty}\Big(\frac{n}{m_2}+\log m_{\max}\Big),
$$
and
$$
\max\big\{\|{\bf \Xi}_1\|_{1,\infty},\|{\bf \Xi}_2\|_{1,\infty}\big\}\leq C\frac{\alpha(3\alpha+7)(m_1m_2)}{n}\sigma_{\xi}\Big(\frac{n}{m_2}+\log m_{\max}\Big)\log^{1/2}n,
$$
for any $\alpha\geq 1$, where 
$$
\|\bA\|_{1,\infty}:=\max_{1\leq j\leq m_2}\sum_{i=1}^{m_1}\big|\bA_{ij}\big|,\quad\forall \bA\in\mathbb{R}^{m_1\times m_2}.
$$
Here, we used the fact that \citep[see, e.g.,][]{vaart1997weak}
$$
\mathbb{P}\Big(\max_{1\leq i\leq n}|\xi_i|\geq C\alpha\sigma_{\xi}\log^{1/2}n\Big)\leq n^{-\alpha}.
$$

We first bound $\|\bS_1\bS_2^{\top}\|$ and $\|\bS_2\bS_1^{\top}\|$ which can be treated in an identical fashion. We shall consider $\|\bS_2\bS_1^{\top}\|$ only for brevity. We proceed conditional on the event $\calE_1\cap\calE_2\cap \calE_3$.
$$
\bS_2\bS_1^{\top}=\Big(\frac{m_1m_2}{n}\sum_{i=1}^n\langle\bM,\bX_i\rangle\bS_2\bX_i^{\top}-\bS_2\bM^{\top}\Big)+\Big(\frac{m_1m_2}{n}\sum_{i=1}^n\xi_i\bS_2\bX_i^{\top}\Big).
$$
For any fixed $\bS_2$, we can apply matrix Bernstein inequality to control $\big\|n^{-1}(m_1m_2)\sum_{i=1}^n\xi_i\bS_2\bX_i^{\top}\big\|$. Observe that
$$
\big\|\|\xi\bS_2\bX^{\top}\|\big\|_{\psi_2}\leq \|\bS_2\bX^{\top}\|\|\xi\|_{\psi_2}\lesssim \sigma_{\xi}\|\bS_2\bX^{\top}\|.
$$
We can further bound $\|\bS_2\bX^{\top}\|$ by
\begin{eqnarray*}
\|\bS_2\bX^{\top}\|\leq &\|\bS_2\|_{1,\infty}\leq& \|\bDelta_2\|_{1,\infty}+\|{\bf \Xi}_2\|_{1,\infty}\\
&&\leq m_1\|\bM\|_{\ell_\infty}+\frac{(3\alpha+7)m_1m_2}{n}\|\bM\|_{\ell_\infty}\Big(\frac{n}{m_2}+\log m_{\max}\Big)\\
&&\qquad +C\sigma_{\xi}\frac{\alpha(3\alpha+7)m_1m_2}{n}\Big(\frac{n}{m_2}+\log m_{\max}\Big)\log^{1/2}n.
\end{eqnarray*}
Moreover, 
\begin{eqnarray*}
\sigma_{\xi}^2m_2^{-1}\|\bS_2\|^2&\leq&
\max\Big\{\big\|\mathbb{E}\xi^2\bS_2\bX^{\top}\bX\bS_2^{\top}\big\|,\big\|\mathbb{E}\xi^2\bX\bS_2^{\top}\bS_2\bX^{\top}\big\|\Big\}\\
&\leq& \sigma_{\xi}^2m_2^{-1}\|\bS_2\|^2+\sigma_{\xi}^2(m_1m_2)^{-1}\tr(\bS_2^{\top}\bS_2)\leq 2\sigma_{\xi}^2m_2^{-1}\|\bS_2\|^2.
\end{eqnarray*}
By matrix Bernstein inequality, with probability at least $1-m_{\max}^{-\alpha}$,
\begin{align*}
&\Big\|\frac{m_1m_2}{n}\sum_{i=1}^n\xi_i\bS_2\bX_i^{\top}\Big\|\\
\leq&C\sigma_{\xi}\|\bS_2\|\sqrt{\frac{\alpha m_1^2m_2\log m_{\max}}{n}}+C\sigma_{\xi}\|\bS_2\|_{\ell_\infty}\frac{\alpha m_1m_2\log m_{\max}}{n}\log\Big(\frac{\sqrt{m_2}\|\bS_2\|_{\ell_{\infty}}}{\|\bS_2\|}\Big).
\end{align*}
Denote this event by $\calE_4$. In a similar fashion, we can apply matrix Bernstein inequality to bound
$$
\Big\|\frac{m_1m_2}{n}\sum_{i=1}^n\bS_2\Big(\langle\bM,\bX_i \rangle\bX_i^{\top}\Big)-\bM^{\top}\Big\|.
$$
Clearly,
$$
\Big\|\bS_2\Big(m_1m_2\langle\bM,\bX \rangle\bX^{\top}-\bM^{\top}\Big)\Big\|\leq m_1m_2\|\bM\|_{\ell_\infty}\|\bS_2\|_{\ell_\infty}+\|\bS_2\|\|\bM\|.
$$
Moreover,
\begin{eqnarray*}
\max\Big\{\Big\|\mathbb{E}\bS_2\langle\bM,\bX \rangle^2\bX^{\top}\bX\bS_2^{\top}\Big\|, \Big\|\mathbb{E}\langle\bM,\bX \rangle^2\bX\bS_2^{\top}\bS_2\bX^{\top}\Big\|\Big\}\leq m_2^{-1}\|\bS_2\|^2\|\bM\|_{\ell_\infty}^2.
\end{eqnarray*}
By matrix Bernstein inequality, with probability at least $1-m_{\max}^{-\alpha}$,
\begin{eqnarray*}
&&\Big\|\frac{1}{n}\sum_{i=1}^n\bS_2\Big(m_1m_2\langle\bM,\bX_i \rangle\bX_i^{\top}-\bM^{\top}\Big)\Big\|\\
&\leq& C\|\bM\|_{\ell_\infty}\|\bS_2\|\sqrt{\frac{\alpha m_1^2m_2\log m_{\max}}{n}}+C\Big(m_1m_2\|\bM\|_{\ell_\infty}\|\bS_2\|_{1,\infty}+\|\bS_2\|\|\bM\|\Big)\frac{\alpha\log m_{\max}}{n}.
\end{eqnarray*}
Denote the above event by $\calE_5$. We conclude that, conditional on the event $\bigcap_{k=1}^5\calE_k$,
\begin{eqnarray*}
\|\bS_2\bS_1^{\top}\|\leq C\bigg(\|\bM\|_{\ell_\infty}\sigma_{\xi}\frac{\alpha^2(3\alpha+7)(m_1^3m_2^2m_{\max})^{1/2}\log^{3/2}m_{\max}\log^{1/2}n}{n}\\
+\sigma_{\xi}^2\frac{\alpha^2(3\alpha+7)(m_1^3m_2^2m_{\max})^{1/2}\log^{3/2}m_{\max}\log^{1/2}n}{n}\\
+\alpha(3\alpha+7)\|\bM\|_{\ell_\infty}^2\frac{(m_1^3m_2^2m_{\max})^{1/2}\log m_{\max}}{n}\bigg).
\end{eqnarray*}
Here, to simplify the bounds, we assumed that $n\geq C\alpha(\sqrt{m_1m_2}\log m_{\max}+m_1\log^2 m_{\max})$.

Next we bound $\|(\bS_1+\bS_2)\bM^{\top}\|$. To fix ideas, we only deal with $\bS_1\bM^{\top}$ which can be written as
$$
\bS_1\bM^{\top}=\bDelta_1\bM^{\top}+{\bf \Xi}_1\bM^{\top}.
$$
Clearly, they can be each controlled by matrix Bernstein inequalities in an identical fashion as above. Indeed, we can write
$$
\bDelta_1\bM^{\top}=\frac{1}{n}\sum_{i=1}^n\Big(m_1m_2\langle\bM,\bX_i\rangle\bX_i-\bM\Big)\bM^{\top},
$$
and
$$
{\bf \Xi}_1\bM^{\top}=\frac{1}{n}\sum_{i=1}^nm_1m_2\xi_i\bX_i\bM^{\top}.
$$
By Lemma~\ref{lemma:matBern-bounded} and Lemma~\ref{lemma:matBern-unbound}, we obtain that with probability at least $1-m_{\max}^{-\alpha}$,
\begin{eqnarray*}
\max\{\|\bS_1\bM^{\top}\|, \|\bS_2\bM^{\top}\|\}\leq C\|\bM\|_{\ell_\infty}\|\bM\|\Big(\sqrt{\frac{\alpha m_1^2m_2\log m_{\max}}{n}}+\frac{\alpha m_1m_2\log m_{\max}}{n}\Big)\\
+C\|\bM\|\sigma_{\xi}\sqrt{\frac{\alpha m_1^2m_2\log m_{\max}}{n}}+C\|\bM\|_{\ell_\infty}\sigma_{\xi}\frac{\alpha m_1^{3/2}m_2\log m_{\max}}{n}.
\end{eqnarray*}
Denote this event by $\calE_6$.

We now bound $(n-1)^{-1}\Big\|\Big(n^{-1}m_1^2m_2^2\sum_{i=1}^nY_i\widetilde{Y}_i\bX_i\widetilde{\bX}_i\Big)-\bM\bM^{\top}\Big\|$. Write
\begin{eqnarray*}
&&\frac{m_1^2m_2^2}{n}\sum_{i=1}^nY_i\widetilde{Y}_i\bX_i\widetilde{\bX}_i^{\top}-\bM\bM^{\top}\\
&=&\frac{m_1^2m_2^2}{n}\sum_{i=1}^n\Big(\langle\bM,\bX_i\rangle\langle\bM,\widetilde{\bX}_i \rangle\bX_i\widetilde{\bX}_i^{\top}\Big)-\bM\bM^{\top}
+\frac{m_1^2m_2^2}{n}\sum_{i=1}^n\xi_i\langle\bM,\widetilde{\bX}_i\rangle\bX_i\widetilde{\bX}_i^{\top}\\
&&\qquad +\frac{m_1^2m_2^2}{n}\sum_{i=1}^n\widetilde{\xi}_i\langle\bM, \bX_i\rangle\bX_i\widetilde{\bX}_i^{\top}+\frac{m_1^2m_2^2}{n}\sum_{i=1}^{n}\xi_i\widetilde{\xi}_i\bX_i\widetilde{\bX}_i^{\top}.
\end{eqnarray*}
Clearly, all the four terms can be controlled by matrix Bernstein inequalities. To fix ideas, we consider only the last term. Indeed,
$$
\big\|\mathbb{E}m_1^4m_2^4\xi^2\widetilde{\xi}^2\bX\widetilde{\bX}^{\top}\widetilde{\bX}\bX^{\top}\big\|= m_1^3m_2^3\sigma_{\xi}^4,
$$
and based on properties of  Orlicz norms, see (\ref{eq:orlicznorm}),
$$
\big\|\|m_1^2m_2^2\xi\widetilde{\xi}\bX\widetilde{\bX}^{\top}\|\big\|_{\psi_1}\leq m_1^2m_2^2\|\xi\|_{\psi_2}\|\widetilde{\xi}\|_{\psi_2}\lesssim m_1^2m_2^2\sigma_{\xi}^2.
$$
By matrix Bernstein inequality (Lemma~\ref{lemma:matBern-unbound}), the following bound holds with probability at least $1-m_{\max}^{-\alpha}$ for $\alpha\geq 1$,
$$
\Big\|\frac{m_1^2m_2^2}{n}\sum_{i=1}^{n}\xi_i\widetilde{\xi}_i\bX_i\widetilde{\bX}_i^{\top}\Big\|\leq C\Big(m_1^{3/2}m_2^{3/2}\sigma_{\xi}^2\sqrt{\frac{\alpha\log m_{\max}}{n}}+m_1^2m_2^2\sigma_{\xi}^2\frac{\alpha\log^2m_{\max}}{n}\Big).
$$
We conclude that with probability at least $1-m_{\max}^{-\alpha}$,
\begin{eqnarray*}
\Big\|\frac{1}{n-1}\Big(\frac{m_1^2m_2^2}{n}\sum_{i=1}^nY_i\widetilde{Y}_i\bX_i\widetilde{\bX}_i^{\top}-\bM\bM^{\top}\Big)\Big\|\leq Cm_1^{3/2}m_2^{3/2}\big(\sigma_{\xi}^2+\|\bM\|_{\ell_\infty}^2\big)\sqrt{\frac{\alpha\log m_{\max}}{n^3}}\\
+Cm_1^2m_2^2\big(\sigma_{\xi}^2+\|\bM\|_{\ell_\infty}^2\big)\frac{\alpha\log^2m_{\max}}{n^2},
\end{eqnarray*}
where we used the fact $\sigma_{\xi}\|\bM\|_{\ell_\infty}\leq \frac{1}{2}\big(\|\bM\|_{\ell_\infty}^2+\sigma_{\xi}^2\big)$. Denote this event by $\calE_7$. 

To sum up, conditional on the event $\bigcap_{k=1}^7\calE_k$,
\begin{eqnarray*}
\big\|\widetilde{\bN}_1-\bN_1\big\|\leq C\|\bM\|_{\ell_\infty}\sigma_{\xi}\frac{\alpha^2(3\alpha+7)(m_1^3m_2^2m_{\max})^{1/2}\log^{3/2}m_{\max}\log^{1/2}n}{n}\\
+C\sigma_{\xi}^2\frac{\alpha^2(3\alpha+7)(m_1^3m_2^2m_{\max})^{1/2}\log^{3/2}m_{\max}\log^{1/2}n}{n}\\
+C\alpha(3\alpha+7)\|\bM\|_{\ell_\infty}^2\frac{(m_1^3m_2^2m_{\max})^{1/2}\log m_{\max}}{n}\\
+C\|\bM\|_{\ell_\infty}\|\bM\|\Big(\sqrt{\frac{\alpha m_1^2m_2\log m_{\max}}{n}}+\frac{\alpha m_1m_2\log m_{\max}}{n}\Big)\\
+C\|\bM\|\sigma_{\xi}\sqrt{\frac{\alpha m_1^2m_2\log m_{\max}}{n}}+C\|\bM\|_{\ell_\infty}\sigma_{\xi}\frac{\alpha m_1^{3/2}m_2\log m_{\max}}{n},
\end{eqnarray*}
assuming that  $n\geq C\alpha(\sqrt{m_1m_2}\log m_{\max}+m_1\log^2m_{\max})$. The bound can be further simplified to
\begin{eqnarray*}
\|\widetilde{\bN}_1-\bN_1\|&\leq& C\alpha^2(3\alpha+7)\sigma_{\xi}^2\frac{(m_1^3m_2^2m_{\max})^{1/2}\log^2m_{\max}\log n}{n}+C\|\bM\|\sigma_{\xi}\sqrt{\frac{\alpha m_1^2m_2\log m_{\max}}{n}}\\
&&+C\alpha(3\alpha+7)\|\bM\|_{\ell_\infty}^2\frac{(m_1^3m_2^2m_{\max})^{1/2}\log m_{\max}}{n}+C\|\bM\|_{\ell_\infty}\|\bM\|\sqrt{\frac{\alpha m_1^2m_2\log m_{\max}}{n}}.
\end{eqnarray*}
The proof is then completed by adjusting the constant $C$, replacing $m_1$, $m_2$ and $m_{\max}$ with $d_1$, $(d_2\ldots d_k)$ and $d_1 \vee (d_1\ldots d_k/d_1)$ respectively.

\subsection{Proof of Theorem~\ref{th:power}}
Theorem \ref{th:general} is a consequence of Theorem \ref{th:power}, and therefore, we present the proof of Theorem \ref{th:power} first. We divide the proof into two steps: we first bound the error of $U_j^{(\rm iter)}$ as an estimate of $U_j$, and then show how these bounds translate into bounds on the estimation error $\|\breve{\bT}-\bT\|_{\ell_2}$. Recall that 
$$
\bT=\bT\bigtimes_{j=1}^k U_j U_j^{\top},
$$
which will be used repeatedly. Denote by $P_{U_j}=U_jU_j^{\top}$ the projection onto the column space of $U_j$. 

\paragraph*{Upper bound of spectral estimation.} We begin with the first step by establishing upper bounds for $\big\|\big(U_j^{(\rm iter)}\big)\big(U_j^{(\rm iter)}\big)^{\top}-U_jU_j^{\top}\big\|$. Recall that $\{U_j^{(\rm iter)}: j=1,\ldots,k; {\rm iter}=0,1,\ldots,{\rm iter}_{\max}\}$ denote the sequence of spectral power iterations with initial value $\{U_1^{(0)},\ldots, U_k^{(0)}\}$. To this end,
define
$$
E_{\rm iter}:=\max\Big\{\|U_j^{(\rm iter)}(U_j^{(\rm iter)})^{\top}-U_jU_j^{\top}\|: 1\leq j\leq k\Big\},
$$
for ${\rm iter}=0,1,\ldots,{\rm iter}_{\max}$. Note that $E_0\leq \frac{1}{2}$.

Recall that $U_j^{\rm (iter+1)}$ are the top $r_j$ left singular vectors of
\begin{align*}
\calM_j\big(\hat{\bT}^{\rm init}\times_{j'<j} U_{j'}^{(\rm iter+1)}\times_{j'>j} U_{j'}^{\rm (iter)}\big)=\calM_j\big(&\bT\times_{j'<j}U_{j'}^{(\rm iter+1)}\times_{j'>j} U_{j'}^{\rm (iter)}\big)\\
+&\calM_j\big((\hat{\bT}^{\rm init}-\bT)\times_{j'<j} U_{j'}^{(\rm iter+1)}\times_{j'>j} U_{j'}^{\rm (iter)}\big),
\end{align*}
where $U_j$ are the top $r_j$ left singular vectors of $\calM_j\big(\bT\times_{j'<j} U_{j'}^{(\rm iter+1)}\times_{j'>j} U_{j'}^{\rm (iter)}\big)$.  Moreover,
\begin{eqnarray*}
&&\sigma_{r_j}\Big(\calM_j\big(\bT\times_{j'<j} U_{j'}^{(\rm iter+1)}\times_{j'>j} U_{j'}^{\rm (iter)}\big)\Big)\\
&=&\sigma_{r_j}\Big(\calM_j(\bT)(\bigotimes_{j'\neq j} P_{U_{j'}})\big(\bigotimes_{j'<j} U_{j'}^{\rm (iter+1)}
\bigotimes_{j'>j} U_{j'}^{\rm (iter)}\big)\Big)\\
&=&\sigma_{r_j}\Big(\calM_j(\bT)(\bigotimes_{j'\neq j} U_{j'})\big(\bigotimes_{j'\neq j}U_{j'}\big)^{\top}\big(\bigotimes_{j'<j} U_{j'}^{\rm (iter+1)}\bigotimes_{j'>j} U_{j'}^{\rm (iter)}\big)\Big)\\
&\geq& \sigma_{r_j}\Big(\calM_j(\bT)\big(\bigotimes_{j'\neq j}U_{j'}\big)\Big)\sigma_{\min}\Big(\big(\bigotimes_{j'\neq j}U_{j'}\big)^{\top}\big(\bigotimes_{j'<j} U_{j'}^{\rm (iter+1)}
\bigotimes_{j'>j} U_{j'}^{\rm (iter)}\big)\Big)\\
&=&\sigma_{r_j}\big(\calM_j(\bT)\big)\sigma_{\min}\Big(\bigotimes_{j'<j}(U_{j'}^{\top}U_{j'}^{\rm(iter+1)})\bigotimes_{j'>j}(U_{j'}^{\top}U_{j'}^{\rm(iter)})\Big)\\
&\geq& \Lambda_{\min}(\bT)\prod_{j'<j}\sigma_{\min}\big(U_{j'}^{\top}U_{j'}^{\rm (iter+1)}\big)\prod_{j'>j}\sigma_{\min}\big(U_{j'}^{\top}U_{j'}^{\rm(iter)}\big)\\
&\geq& \Lambda_{\min}(\bT)\prod_{j'<j}\sqrt{1-\|U_{j'}^{\rm(iter+1)}(U_{j'}^{\rm (iter+1)})^{\top}-U_{j'}U_{j'}^{\top}\|}\prod_{j'>j}\sqrt{1-\|U_{j'}^{\rm(iter)}(U_{j'}^{\rm (iter)})^{\top}-U_{j'}U_{j'}^{\top}\|}\\
&\geq& \Lambda_{\min}(\bT)(1-E_{\rm iter+1})^{(j-1)/2}(1-E_{\rm iter})^{(k-j)/2},
\end{eqnarray*}
where we used the fact
\begin{eqnarray*}
\sigma_{\min}(U_{j'}^{\top}U_{j'}^{\rm (iter)})=\sqrt{\sigma_{\min}(U_{j'}^{\top}U_{j'}^{\rm(iter)}(U_{j'}^{\rm(iter)})^{\top}U_{j'})}\geq \sqrt{1-\|U_{j'}^{\rm(iter)}(U_{j'}^{\rm(iter)})^{\top}-U_{j'}U_{j'}^{\top}\|}\\
\geq \sqrt{1-E_{\rm iter}}.
\end{eqnarray*}
Next, we control $\big\|\calM_j\big((\hat\bT^{\rm init}-\bT)\times_{j'<j}U_{j'}^{\rm(iter+1)}\times_{j'>j}U_{j'}^{\rm(iter)}\big)\big\|$. We write
\begin{align*}
\calM_j\big((\hat\bT^{\rm init}-\bT)\times_{j'<j}&U_{j'}^{\rm(iter+1)}\times_{j'>j}U_{j'}^{\rm(iter)}\big)\\
=&\calM_j\big((\hat\bT^{\rm init}-\bT)\times_{j'<j}(P_{U_{j'}}+P_{U_{j'}}^{\perp})U_{j'}^{\rm(iter+1)}\times_{j'>j}(P_{U_{j'}}+P_{U_{j'}}^{\perp})U_{j'}^{\rm(iter)}\big)\\
=:&\calM_j\big((\hat\bT^{\rm init}-\bT)\times_{j'<j}(P_{U_{j'}}U_{j'}^{\rm(iter+1)})\times_{j'>j}(P_{U_{j'}}U_{j'}^{\rm(iter)})\big)+\hat{\bR}.
\end{align*}
We shall make use of the following lemma whose proof is relegated to the Appendix.
\begin{lemma}\label{lemma:hatR}
Let $\bT\in\Theta(r_1,\ldots,r_k)$ and $\xi$ be subgaussian in that there exists a $\sigma_{\xi}>0$ such that for all $s\in\RR$,
$$
\EE(\exp\{s\xi\})\leq \exp\{s^2\sigma_{\xi}^2/2\}.
$$
There exist absolute constants $C_1,C_2,C_3>0$ such that if
$$
n\geq C_1k\alpha\big(\beta(\bT)\kappa(\bT)\big)^{2(k-1)}d_{\max}\log (d_{\max}),
$$
for any fixed $\alpha>1$, then the following bounds hold with probability at least $1-d^{-\alpha}_{\max}$,
\begin{align*}
\big\|\calM_j\big((\hat\bT^{\rm init}-\bT)\times_{j'<j}&(P_{U_{j'}}U_{j'}^{\rm(iter+1)})\times_{j'>j}(P_{U_{j'}}U_{j'}^{\rm(iter)})\big)\big\|\\
\leq& C_2\big(\|\bT\|_{\ell_\infty}\vee \sigma_{\xi}\big)\Big(d_j\vee \frac{r_1\ldots r_k}{r_j}\Big)^{1/2}\sqrt{\frac{\alpha k d_1\ldots d_{k}\log (d_{\max})}{n}},
\end{align*}
and
\begin{align*}
\|\hat{\bR}\|
\leq (E_{\rm iter}\vee E_{\rm iter+1})C_{3}\alpha k^{k+4}&r_{\max}(\bT)^{(k-2)/2}\big(\|\bT\|_{\ell_\infty}\vee \sigma_{\xi}\big)\\
\times&\max\bigg\{\sqrt{\frac{kd_{\max}d_1\ldots d_k}{n}},\frac{kd_1\ldots d_k}{n}\bigg\}\log^{k+2}d_{\max}.
\end{align*}
\end{lemma}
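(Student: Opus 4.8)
The plan is to establish the two displayed bounds separately: the first by a direct matrix Bernstein argument applied to a \emph{contracted} unfolding (where the contracted modes are small, of size $r_{j'}$), and the second by reducing to the tensor spectral norm bound of Theorem~\ref{th:init}.

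\textbf{First bound.} Since the columns of $P_{U_{j'}}U_{j'}^{\rm(iter+1)}$ lie in the column span of $U_{j'}$, I would write $P_{U_{j'}}U_{j'}^{\rm(iter+1)}=U_{j'}W_{j'}$ with $W_{j'}:=U_{j'}^{\top}U_{j'}^{\rm(iter+1)}$, $\|W_{j'}\|\le 1$, and similarly $P_{U_{j'}}U_{j'}^{\rm(iter)}=U_{j'}W_{j'}'$ for $j'>j$. By associativity of the marginal multiplication, $(\hat\bT^{\rm init}-\bT)\times_{j'<j}(P_{U_{j'}}U_{j'}^{\rm(iter+1)})\times_{j'>j}(P_{U_{j'}}U_{j'}^{\rm(iter)})=\big[(\hat\bT^{\rm init}-\bT)\times_{j'\neq j}U_{j'}\big]\times_{j'<j}W_{j'}\times_{j'>j}W_{j'}'$, and $\calM_j$ turns the trailing multiplications into right multiplication by a Kronecker product of operator-norm-$\le1$ matrices; hence it suffices to bound $\|\calM_j((\hat\bT^{\rm init}-\bT)\times_{j'\neq j}U_{j'})\|=\|\calM_j(\hat\bT^{\rm init}-\bT)(\bigotimes_{j'\neq j}U_{j'})\|$. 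This is a sum of $n$ i.i.d.\ centered random $d_j\times\prod_{j'\neq j}r_{j'}$ matrices; splitting $Y_i=T(\omega_i)+\xi_i$, I would apply the bounded matrix Bernstein inequality (Lemma~\ref{lemma:matBern-bounded}) to the $\bT$-part and the subgaussian one (Lemma~\ref{lemma:matBern-unbound}) to the noise part. Using $\sum_{a}\|U_{j'}(a,\cdot)\|_{\ell_2}^2=r_{j'}$ and the orthonormality of $\bigotimes_{j'\neq j}U_{j'}$, the two variance proxies come out as $\lesssim (d_1\cdots d_k/n)(\|\bT\|_{\ell_\infty}^2\vee\sigma_\xi^2)\prod_{j'\neq j}r_{j'}$ and $\lesssim (d_1\cdots d_k/n)(\|\bT\|_{\ell_\infty}^2\vee\sigma_\xi^2)d_j$, which gives exactly the claimed main term $(\|\bT\|_{\ell_\infty}\vee\sigma_\xi)(d_j\vee r_1\cdots r_k/r_j)^{1/2}\sqrt{\alpha k d_1\cdots d_k\log d_{\max}/n}$. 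For the Bernstein ``uniform-norm'' term I would bound a single summand through the coherence inequality $\|U_{j'}(a,\cdot)\|_{\ell_2}^2\le\mu(\bT)r_{j'}/d_{j'}$, and then use Proposition~\ref{pr:coherence} ($\mu(\bT)\le\beta^2(\bT)\kappa^2(\bT)$) together with the hypothesis $n\ge C_1 k\alpha(\beta(\bT)\kappa(\bT))^{2(k-1)}d_{\max}\log d_{\max}$ to check that this term is dominated by the main term.

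\textbf{Second bound.} I would expand $U_{j'}^{\rm(iter+1)}=P_{U_{j'}}U_{j'}^{\rm(iter+1)}+P_{U_{j'}}^{\perp}U_{j'}^{\rm(iter+1)}$ (and the analogue at iteration $\mathrm{iter}$) in each of the $k-1$ contracted modes, so that $\hat\bR$ is the sum of the $2^{k-1}-1$ ``mixed'' terms carrying at least one $P_{U_{j'}}^{\perp}U_{j'}^{(\cdot)}$ factor. A $\sin\Theta$-type inequality gives $\|P_{U_{j'}}^{\perp}U_{j'}^{\rm(iter)}\|\le\|U_{j'}^{\rm(iter)}(U_{j'}^{\rm(iter)})^{\top}-U_{j'}U_{j'}^{\top}\|\le E_{\rm iter}$ (resp.\ $\le E_{\rm iter+1}$), while the parallel factors have operator norm $\le1$. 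For a fixed mixed term with nonempty perp-set $S^{c}$, I would write its spectral norm as $\sup_{u,\bv}\langle\hat\bT^{\rm init}-\bT,\,u\otimes(\bv\times_{j'\neq j}M_{j'})\rangle$ over unit $u\in\RR^{d_j}$ and unit $(k-1)$-tensors $\bv$ of dimensions $r_{j'}$, $M_{j'}$ being the matrix inserted in mode $j'$; expanding $\bv$ in the canonical basis along all modes but one fixed $j_0\in S^{c}$ writes $\bv$ as a sum of at most $r_{\max}(\bT)^{k-2}$ rank-one tensors whose coefficient vectors have $\ell_2$-norms summing to at most $r_{\max}(\bT)^{(k-2)/2}$ (Cauchy--Schwarz). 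Bounding each rank-one contraction by $\|\hat\bT^{\rm init}-\bT\|$ times the product of the relevant vector norms, and using that the mode-$j_0$ factor contributes a factor $\le E_{\rm iter}\vee E_{\rm iter+1}$, shows each mixed term is $\le(E_{\rm iter}\vee E_{\rm iter+1})\,r_{\max}(\bT)^{(k-2)/2}\|\hat\bT^{\rm init}-\bT\|$. Summing the $\le 2^{k-1}$ mixed terms and inserting the bound of Theorem~\ref{th:init} for $\|\hat\bT^{\rm init}-\bT\|$ yields the stated estimate, with $2^{k-1}k^{k+3}$ absorbed into $k^{k+4}$. Throughout, everything is carried out on the high-probability events of Theorem~\ref{th:init}, Lemma~\ref{lemma:matBern-bounded}, Lemma~\ref{lemma:matBern-unbound} and the Chernoff bound \eqref{eq:nubound}, with a final union bound.

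\textbf{Main obstacle.} The delicate step is the reduction in the second bound from a partial-contraction/matrix-unfolding norm to the genuine tensor spectral norm $\|\hat\bT^{\rm init}-\bT\|$: one must leave \emph{a perp mode} un-expanded so that the $E$ factor survives, and one must verify that the canonical-basis expansion of $\bv$ costs precisely $r_{\max}(\bT)^{(k-2)/2}$ and no more. In the first bound the substantive work is the two variance computations and checking, via the spikiness/coherence conversion of Proposition~\ref{pr:coherence}, that the sample-size hypothesis swallows every lower-order Bernstein term; this is routine but has to be done separately for the $\bT$-part and the noise part.
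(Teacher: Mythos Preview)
Your proposal is essentially the paper's own proof. For the first bound you do exactly what the paper does: factor $P_{U_{j'}}U_{j'}^{(\cdot)}=U_{j'}W_{j'}$ with $\|W_{j'}\|\le1$, reduce to $\|\calM_j((\hat\bT^{\rm init}-\bT)\times_{j'\neq j}U_{j'})\|$, split into signal and noise, apply matrix Bernstein, compute the two variances, and kill the uniform-norm term via Proposition~\ref{pr:coherence} and the sample-size hypothesis. For the second bound your inline ``expand $\bv$ along all but one perp mode'' argument is precisely the content of the paper's Lemma~\ref{lemma:tensornorm}, which the paper states and proves separately and then combines with Theorem~\ref{th:init}.

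The one inaccuracy is in the bookkeeping: the paper decomposes $\hat\bR$ by \emph{telescoping} into only $k-1$ terms (for each $s\neq j$, the $s$-th term carries $P_{U_s}^{\perp}U_s^{(\cdot)}$ in mode $s$, the parallel part in modes $j'<s$, and the \emph{full} $U_{j'}^{(\cdot)}$ in modes $j'>s$), whereas you expand into all $2^{k-1}-1$ mixed terms. Both are valid, but your claim that ``$2^{k-1}k^{k+3}$ is absorbed into $k^{k+4}$'' is false ($2^{k-1}/k\to\infty$), so with your expansion the constant $C_3$ would have to depend on $k$; switching to the telescoping fixes this and recovers the stated $k^{k+4}$ with an absolute $C_3$.
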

By Lemma~\ref{lemma:hatR}, we conclude that
\begin{eqnarray*}
&&\big\|\calM_j\big((\hat\bT^{\rm init}-\bT)\times_{j'<j}U_{j'}^{\rm(iter+1)}\times_{j'>j}U_{j'}^{\rm(iter)}\big)\big\|\\
&\leq& C_2\big(\|\bT\|_{\ell_\infty}\vee \sigma_{\xi}\big)\Big(d_j\vee \frac{r_1\ldots r_k}{r_j}\Big)^{1/2}\sqrt{\frac{\alpha k d_1\ldots d_{k}\log (d_{\max})}{n}}
+(E_{\rm iter}\vee E_{\rm iter+1})C_{3}\alpha\\
&&\times k^{k+4}r_{\max}(\bT)^{(k-2)/2}\big(\|\bT\|_{\ell_\infty}\vee \sigma_{\xi}\big)\max\bigg\{\sqrt{\frac{kd_{\max}d_1\ldots d_k}{n}},\frac{kd_1\ldots d_k}{n}\bigg\}\log^{k+2}d_{\max},
\end{eqnarray*}
with probability at least $1-d^{-\alpha}_{\max}$. 

Applying Wedin's $\sin\Theta$ theorem \citep{wedin1972perturbation}, we conclude that with probability at least $1-d^{-\alpha}_{\max}$,
\begin{eqnarray*}
&&\big\|U_j^{\rm (iter+1)}(U_j^{\rm (iter+1)})^\top-U_jU_j^\top\big\|\\
&\leq& \frac{2\big\|\calM_j\big((\hat\bT^{\rm init}-\bT)\times_{j'<j}(P_{U_{j'}}U_{j'}^{\rm(iter+1)})\times_{j'>j}(P_{U_{j'}}U_{j'}^{\rm(iter)})\big)\big\|}{\Lambda_{\min}(\bT)(1-E_{\rm iter+1})^{(j-1)/2}(1-E_{\rm iter})^{(k-j)/2}}
+(E_{\rm iter}\vee E_{\rm iter+1})\\
&&\times\frac{C_{3}\alpha k^{k+4}r_{\max}(\bT)^{(k-2)/2}\big(\|\bT\|_{\ell_\infty}\vee \sigma_{\xi}\big)}{\Lambda_{\min}(\bT)(1-E_{\rm iter+1})^{(j-1)/2}(1-E_{\rm iter})^{(k-j)/2}}\max\bigg\{\sqrt{\frac{kd_{\max}d_1\ldots d_k}{n}},\frac{kd_1\ldots d_k}{n}\bigg\}\log^{k+2}d_{\max}.
\end{eqnarray*}
It is easy to check that if $\max\big\{E_{\rm iter}, E_{\rm iter+1}\big\}\leq \frac{1}{2}$ and 
\begin{eqnarray*}
n\geq C_4\max\bigg\{\alpha^22^kk^{2(k+4)}r_{\max}(\bT)^{k-2}\Lambda_{\min}^{-2}(\bT)\big(\|\bT\|_{\ell_\infty}\vee \sigma_{\xi}\big)^2kd_{\max}(d_1\ldots d_k)\log^{2(k+2)}d_{\max},\nonumber\\
\alpha 2^{k/2}k^{k+5}r_{\max}(\bT)^{(k-2)/2}\Lambda_{\min}^{-1}(\bT)\big(\|\bT\|_{\ell_\infty}\vee \sigma_{\xi}\big)d_1\ldots d_k\log^{k+2}d_{\max}\bigg\}
\end{eqnarray*}
for some large enough constant $C_4>0$, then 
\begin{eqnarray*}
&&\big\|U_j^{\rm (iter+1)}(U_j^{\rm (iter+1)})^\top-U_jU_j^\top\big\|\\
&\leq& \frac{1}{2}\big(E_{\rm iter}\vee E_{\rm iter+1}\big)
+C_22^{k/2}\frac{(\|\bT\|_{\ell_\infty}\vee \sigma_{\xi})}{\Lambda_{\min}(\bT)}\Big(d_j\vee \frac{r_1\ldots r_k}{r_j}\Big)^{1/2}\sqrt{\frac{\alpha k d_1\ldots d_{k}\log (d_{\max})}{n}}.
\end{eqnarray*}
Note that this bound applies to all $j=1,\ldots, k$. Therefore,
$$
E_{\rm iter+1}\leq \frac{1}{2}\big(E_{\rm iter}\vee E_{\rm iter+1}\big)+C_22^{k/2}\frac{(\|\bT\|_{\ell_\infty}\vee \sigma_{\xi})}{\Lambda_{\min}(\bT)}\Big(d_{\max}\vee \frac{r_1\ldots r_k}{\min_{1\leq j\leq k}r_j}\Big)^{1/2}\sqrt{\frac{\alpha k d_1\ldots d_{k}\log (d_{\max})}{n}}.
$$
It is easy to show that after ${\rm iter}_{\max}=Ck\log d_{\max}$, under the lower bound on $n$, we get
\begin{align*}
E_{\rm iter_{\max}}\leq C_22^{k/2}\frac{(\|\bT\|_{\ell_\infty}\vee \sigma_{\xi})}{\Lambda_{\min}(\bT)}\Big(d_{\max}\vee \frac{r_1\ldots r_k}{\min_{1\leq j\leq k}r_j}\Big)^{1/2}\sqrt{\frac{\alpha k d_1\ldots d_{k}\log (d_{\max})}{n}}.
\end{align*}

\paragraph*{Upper bound of $\|\breve\bT-\bT\|_{\ell_2}$.} We are now in position to prove the upper bound of $\|\breve{\bT}-\bT\|_{\ell_2}$ where
$$
\breve{\bT}=\hat{\bT}^{\rm init}\times_{j=1}^k U_j^{(\rm iter)}\big(U_j^{\rm(iter)}\big)^{\top},
$$
for ${\rm iter}\geq {\rm iter}_{\rm max}$.
Write
\begin{eqnarray*}
\|\breve{\bT}-\bT\|_{\ell_2}&=&\big\|\hat{\bT}^{\rm init}\times_{j=1}^kU_j^{(\rm iter)}\big(U_j^{\rm(iter)}\big)^{\top}-\bT\times_{j=1}^k U_j(U_j)^{\top}\big\|_{\ell_2}\\
&\leq&\big\|\big(\hat{\bT}^{\rm init}-\bT\big)\times_{j=1}^kU_j^{(\rm iter)}\big(U_j^{\rm(iter)}\big)^{\top}\big\|_{\ell_2}\\
&&+\sum_{j=1}^{k}\big\|\bT\times_{j'<j}U_{j'}^{\rm(iter)}\big(U_{j'}^{\rm(iter)}\big)^{\top}\times_j\big(U_j^{\rm(iter)}(U_j^{\rm(iter)})^{\top}-U_jU_j^{\top}\big)\times_{j'>j}U_{j'}(U_{j'})^{\top}\big\|_{\ell_2}.
\end{eqnarray*}
We apply the following lemma whose proof is relegated to the Appendix.
\begin{lemma}\label{lemma:TUiter-U}
There exists a constant $C_1,C_2>0$ depending on $k$ only such that 
for all $1\leq j\leq k$,  if
\begin{align*}
n\geq C_1(k)\max\bigg\{\alpha^2 r_{\max}(\bT)^{k-2}\Lambda_{\min}^{-2}(\bT)\big(\|\bT\|_{\ell_\infty}\vee \sigma_{\xi}\big)^2kd_{\max}(d_1\ldots d_k)\log^{2(k+2)}d_{\max},\nonumber\\
\alpha r_{\max}(\bT)^{(k-2)/2}\Lambda_{\min}^{-1}(\bT)\big(\|\bT\|_{\ell_\infty}\vee \sigma_{\xi}\big)d_1\ldots d_k\log^{k+2}d_{\max},\\
\alpha \kappa(\bT)^2\Lambda_{\min}^{-2}(\bT)\big(\|\bT\|_{\ell_\infty}\vee \sigma_{\xi}\big)^2\big(d_{\max}\vee r_{\max}(\bT)^{k-1}\big)d_1\ldots d_k\log d_{\max}\bigg\}
\end{align*}
the following bound holds with probability at least $1-d_{\max}^{-\alpha}$ for any $\alpha\geq 1$,
\begin{align*}
\big\|\bT\times_{j'<j}U_{j'}^{\rm(iter)}\big(U_{j'}^{\rm(iter)}\big)^{\top}\times_j\big(U_j^{\rm(iter)}(U_j^{\rm(iter)})^{\top}-U_jU_j^{\top}\big)\times_{j'>j}U_{j'}(U_{j'})^{\top}\big\|_{\ell_2}\\
\leq C_2(k)\big(\|\bT\|_{\infty}\vee \sigma_{\xi}\big)\Big(d_{\max}r_{\max}(\bT)\vee \frac{r_1\ldots r_k}{\min_{1\leq j\leq k}r_j/r_{\max}(\bT)}\Big)^{1/2}\sqrt{\frac{\alpha k d_1\ldots d_k\log (d_{\max})}{n}}.
\end{align*}
\end{lemma}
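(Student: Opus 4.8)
The plan is to flatten the tensor in question along mode $j$, reduce the claim to a spectral‑norm estimate, and then control that estimate by comparing the flattened tensor to the matrix that actually defines $U_j^{(\rm iter)}$. Write $P_{j'}=U_{j'}U_{j'}^{\top}$, $P_{j'}^{(\rm iter)}=U_{j'}^{(\rm iter)}(U_{j'}^{(\rm iter)})^{\top}$, and let $\bW$ be the tensor whose $\ell_2$ norm is to be bounded. Using $\bT=\bT\times_{j'}P_{j'}$ for every $j'$ (so the factors $\times_{j'>j}P_{j'}$ are redundant) and flattening along mode $j$ gives $\calM_j(\bW)=(P_j^{(\rm iter)}-P_j)\calM_j(\bT')$ with $\bT':=\bT\times_{j'<j}P_{j'}^{(\rm iter)}$. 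Since $\calM_j(\bT')=\calM_j(\bT)\bigl(\bigotimes_{j'<j}P_{j'}^{(\rm iter)}\otimes I\bigr)$ has column space inside $\mathrm{range}(P_j)$, this equals $-(I-P_j^{(\rm iter)})\calM_j(\bT')$, a matrix of rank at most $r_j$; hence $\|\bW\|_{\ell_2}\le\sqrt{r_j}\,\|(I-P_j^{(\rm iter)})\calM_j(\bT')\|$, and I would bound the spectral norm on the right.

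Next I would introduce $\hat M_j:=\calM_j(\hat{\bT}^{\rm init}\times_{j'<j}U_{j'}^{(\rm iter)}\times_{j'>j}U_{j'}^{(\rm iter-1)})$ and $M_j:=\calM_j(\bT\times_{j'<j}U_{j'}^{(\rm iter)}\times_{j'>j}U_{j'}^{(\rm iter-1)})=\calM_j(\bT)G$, where $G$ is the Kronecker product of the orthonormal‑column matrices $U_{j'}^{(\rm iter)}$ ($j'<j$) and $U_{j'}^{(\rm iter-1)}$ ($j'>j$); then $\rank(M_j)\le r_j$ and $P_j^{(\rm iter)}$ is the projection onto the top‑$r_j$ left singular space of $\hat M_j$, equivalently of $\hat M_jG^{\top}=\calM_j(\hat{\bT}'')$ with $\hat{\bT}'':=\hat{\bT}^{\rm init}\times_{j'<j}P_{j'}^{(\rm iter)}\times_{j'>j}P_{j'}^{(\rm iter-1)}$. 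With $\bT'':=\bT\times_{j'<j}P_{j'}^{(\rm iter)}\times_{j'>j}P_{j'}^{(\rm iter-1)}$, I would use the splitting
$$
(I-P_j^{(\rm iter)})\calM_j(\bT')=(I-P_j^{(\rm iter)})\calM_j(\hat{\bT}'')-(I-P_j^{(\rm iter)})\calM_j(\hat{\bT}''-\bT'')-(I-P_j^{(\rm iter)})\bigl(\calM_j(\bT'')-\calM_j(\bT')\bigr).
$$
The first term has norm $\sigma_{r_j+1}(\hat M_j)\le\sigma_{r_j+1}(M_j)+\|\hat M_j-M_j\|=\|\hat M_j-M_j\|$ (Weyl's inequality and $\rank(M_j)\le r_j$); the second also has norm $\le\|\hat M_j-M_j\|$ since $\calM_j(\hat{\bT}''-\bT'')=(\hat M_j-M_j)G^{\top}$. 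Because $\hat M_j-M_j=\calM_j\bigl((\hat{\bT}^{\rm init}-\bT)\times_{j'<j}U_{j'}^{(\rm iter)}\times_{j'>j}U_{j'}^{(\rm iter-1)}\bigr)$, splitting $U_{j'}^{(\rm iter)}=P_{j'}U_{j'}^{(\rm iter)}+(I-P_{j'})U_{j'}^{(\rm iter)}$ and invoking Lemma~\ref{lemma:hatR} (its first bound for the aligned part, its second for the remainder, which carries a factor $E_{\rm iter}\vee E_{\rm iter-1}$) gives, under the stated lower bound on $n$, $\|\hat M_j-M_j\|\le C(k)(\|\bT\|_{\ell_\infty}\vee\sigma_\xi)\bigl(d_j\vee r_1\cdots r_k/r_j\bigr)^{1/2}\sqrt{\alpha k d_1\cdots d_k\log d_{\max}/n}$, exactly as in the first step of the proof of Theorem~\ref{th:power}.

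The hard part is the last term. Using $\bT'=\bT'\times_{j'>j}P_{j'}$ and telescoping over the modes $j'>j$,
$$
\calM_j(\bT'')-\calM_j(\bT')=-\sum_{\ell>j}\calM_j\Bigl(\bT'\times_{j<j''<\ell}P_{j''}^{(\rm iter-1)}\times_{\ell}\bigl((I-P_{\ell}^{(\rm iter-1)})P_{\ell}\bigr)\times_{j''>\ell}P_{j''}\Bigr),
$$
and since $\calM_j(\bT)$ --- hence $\calM_j(\bT')$ --- has column space in $\mathrm{range}(P_j)$, each summand $\bS$ obeys $\|(I-P_j^{(\rm iter)})\calM_j(\bS)\|\le\|(I-P_j^{(\rm iter)})P_j\|\,\|\calM_j(\bT)\|\,\|(I-P_{\ell}^{(\rm iter-1)})P_{\ell}\|\le\|P_j^{(\rm iter)}-P_j\|\,\Lambda_{\max}(\bT)\,\|P_{\ell}^{(\rm iter-1)}-P_{\ell}\|$. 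Hence the last term is at most $(k-1)\Lambda_{\max}(\bT)\bar E^2$, where $\bar E=C2^{k/2}\Lambda_{\min}^{-1}(\bT)(\|\bT\|_{\ell_\infty}\vee\sigma_\xi)\bigl(d_{\max}\vee r_1\cdots r_k/\min_j r_j\bigr)^{1/2}\sqrt{\alpha kd_1\cdots d_k\log d_{\max}/n}$ is the $O(\log d_{\max})$‑iteration spectral error bound on $E_{\rm iter_{\max}}$ established in the first step of the proof of Theorem~\ref{th:power}. Thus $(k-1)\Lambda_{\max}(\bT)\bar E^2=C(k)\kappa(\bT)\Lambda_{\min}^{-1}(\bT)(\|\bT\|_{\ell_\infty}\vee\sigma_\xi)^2\bigl(d_{\max}\vee r_1\cdots r_k/\min_j r_j\bigr)\alpha k d_1\cdots d_k\log d_{\max}/n$, and the third term of the sample‑size hypothesis, $n\ge C(k)\alpha\kappa(\bT)^2\Lambda_{\min}^{-2}(\bT)(\|\bT\|_{\ell_\infty}\vee\sigma_\xi)^2(d_{\max}\vee r_{\max}(\bT)^{k-1})d_1\cdots d_k\log d_{\max}$, is precisely what forces this to be $\le C(k)(\|\bT\|_{\ell_\infty}\vee\sigma_\xi)\bigl(d_{\max}\vee r_1\cdots r_k/\min_j r_j\bigr)^{1/2}\sqrt{\alpha kd_1\cdots d_k\log d_{\max}/n}$. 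Adding the three bounds, multiplying by $\sqrt{r_j}$, and using $d_j\vee r_1\cdots r_k/r_j\le d_{\max}\vee r_1\cdots r_k/\min_j r_j$ and $r_j\le r_{\max}(\bT)$ gives the stated inequality. The only genuine obstacle is this last term: it is the sole source of a factor $\Lambda_{\max}(\bT)$, and the argument closes only because every such factor multiplies $\bar E^2$ rather than $\bar E$, so that a $\kappa(\bT)^2$ (not $\kappa(\bT)$) sample‑size requirement suffices.
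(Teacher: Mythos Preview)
Your proof is correct and follows essentially the same route as the paper's: reduce to a spectral-norm bound via the low-rank structure, replace $\bT$ by $\hat{\bT}^{\rm init}$, bound the main piece by $\sigma_{r_j+1}$ of the matrix that defines $U_j^{(\rm iter)}$, control the noise pieces via Lemma~\ref{lemma:hatR}, and bound the residual by a quadratic term $\Lambda_{\max}(\bT)\bar E^2$ that the third sample-size condition absorbs. Your bookkeeping is in fact slightly tidier than the paper's --- you keep the ${\rm iter}$ versus ${\rm iter}-1$ distinction from Algorithm~\ref{alg:power} explicit (the paper's proof writes $\bigotimes_{j'\neq j}P_{U_{j'}^{(\rm iter)}}$ throughout), and you get rank $\le r_j$ rather than $\le 2r_j$ by using the column-space inclusion before the difference of projections --- but these are cosmetic improvements, not a different argument.
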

By Lemma~\ref{lemma:TUiter-U}, we obtain
\begin{eqnarray*}
&&\|\breve{\bT}-\bT\|_{\ell_2}\\
%&\leq&\big\|\big(\hat{\bT}^{\rm init}-\bT\big)\times_{j=1}^kU_j^{(\rm iter)}\big(U_j^{\rm(iter)}\big)^{\top}\big\|_{\ell_2}+\sum_{j=1}^k\Lambda_{\max}(\bT)\sqrt{2r_j}\big\|U_j^{\rm(iter)}(U_j^{\rm(iter)})^{\top}-U_jU_j^{\top}\big\|\\
&\leq& \big\|\big(\hat{\bT}^{\rm init}-\bT\big)\times_{j=1}^kU_j^{(\rm iter)}\big(U_j^{\rm(iter)}\big)^{\top}\big\|_{\ell_2}\\
&&+C_2(k)\big(\|\bT\|_{\ell_\infty}\vee \sigma_{\xi}\big)\Big(d_{\max}r_{\max}(\bT)\vee \frac{r_1\ldots r_k}{\min_{1\leq j\leq k}r_j/r_{\max}(\bT)}\Big)^{1/2}\sqrt{\frac{\alpha k d_1\ldots d_k\log (d_{\max})}{n}}.
\end{eqnarray*}
It remains to bound the first term on the rightmost hand side. Note that
\begin{eqnarray*}
&&\big\|\big(\hat{\bT}^{\rm init}-\bT\big)\times_{j=1}^kU_j^{(\rm iter)}\big(U_j^{\rm(iter)}\big)^{\top}\big\|_{\ell_2}\\
&\leq& \big\|\big(\hat{\bT}^{\rm init}-\bT\big)\times_{j=1}^kU_j\big(U_j\big)^{\top}\big\|_{\ell_2}\\
&&+\sum_{j=1}^k\big\|(\hat{\bT}^{\rm init}-\bT)\times_{j'<j}U_{j'}U_{j'}^{\top}\times_j\big(U_j^{(\rm iter)}\big(U_j^{\rm(iter)}\big)^{\top}-U_jU_j^{\top}\big)\times_{j'>j}U_{j'}^{(\rm iter)}\big(U_{j'}^{\rm(iter)}\big)^{\top}\big\|_{\ell_2}.
\end{eqnarray*}
In the light of Lemma \ref{lemma:hatR},
\begin{eqnarray*}
&&\big\|\big(\hat{\bT}^{\rm init}-\bT\big)\times_{j=1}^kU_j\big(U_j\big)^{\top}\big\|_{\ell_2}=\Big\|\calM_1\Big(\big(\hat{\bT}^{\rm init}-\bT\big)\times_{j=1}^kU_j\big(U_j\big)^{\top}\Big)\Big\|_{\ell_2}\\
&\leq& \sqrt{r_1}\Big\|\calM_1\Big(\big(\hat{\bT}^{\rm init}-\bT\big)\times_{j=1}^kU_j\big(U_j\big)^{\top}\Big)\Big\|\\
&\leq& C_2\big(\|\bT\|_{\ell_\infty}\vee \sigma_{\xi}\big)\big(d_1r_1\vee r_1\ldots r_k\big)^{1/2}\sqrt{\frac{\alpha kd_1\ldots d_k\log (d_{\max})}{n}},
\end{eqnarray*}
with probability at least $1-d_{\max}^{-\alpha}$. % Recall that ${\rm iter}\ge {\rm iter_{\max}}$. 
Similarly as Lemma~\ref{lemma:hatR}, we obtain for each $j=1,\ldots,k$ that
 \begin{eqnarray*}
&&\big\|(\hat{\bT}^{\rm init}-\bT)\times_{j'<j}U_{j'}U_{j'}^{\top}\times_j\big(U_j^{(\rm iter)}\big(U_j^{\rm(iter)}\big)^{\top}-U_jU_j^{\top}\big)\times_{j'>j}U_{j'}^{(\rm iter)}\big(U_{j'}^{\rm(iter)}\big)^{\top}\big\|_{\ell_2}\\
&\leq& \Big\|\calM_j\Big((\hat{\bT}^{\rm init}-\bT)\times_{j'<j}U_{j'}U_{j'}^{\top}\times_{j'>j}U_{j'}^{\rm(iter)}\big(U_{j'}^{\rm(iter)}\big)^{\top}\Big)\Big\|\big\|U_j^{(\rm iter)}\big(U_j^{\rm(iter)}\big)^{\top}-U_jU_j^{\top}\big\|_{\ell_2}\\
&\leq& C_1\alpha k^{k+4}(\|\bT\|_{\ell_\infty}\vee \sigma_{\xi})r_{\max}(\bT)^{(k-2)/2}\max\bigg\{\sqrt{\frac{kd_{\max}d_1\ldots d_k}{n}}, \frac{kd_1\ldots d_k}{n}\bigg\}\log^{k+2}d_{\max}\times\\
&&\qquad 2^{k/2}\frac{(\|\bT\|_{\ell_\infty}\vee \sigma_{\xi})}{\Lambda_{\min}(\bT)}\Big(r_{\max}(\bT)d_{\max}\vee \frac{r_1\ldots r_k}{\min_{1\leq j\leq k}r_j/r_{\max}(\bT)}\Big)^{1/2}\sqrt{\frac{\alpha kd_1\ldots d_k\log (d_{\max})}{n}}.
 \end{eqnarray*}
 Clearly, if the sample size
 \begin{eqnarray*}
 n\geq C_1\max\bigg\{\alpha 2^{k/2}k^{k+5}\Lambda_{\min}^{-1}(\bT)(\|\bT\|_{\ell_\infty}\vee \sigma_{\xi})r_{\max}(\bT)^{(k-2)/2}d_1\ldots d_k\log^{k+2}d_{\max},\\
 \alpha^22^kk^{2k+9}\Lambda_{\min}^{-2}(\bT)(\|\bT\|_{\ell_\infty}\vee \sigma_{\xi})^2r_{\max}(\bT)^{k-2}d_{\max}(d_1\ldots d_k)\log^{2k+4}d_{\max}\bigg\},
 \end{eqnarray*}
then
 \begin{eqnarray*}
&&\big\|(\hat{\bT}^{\rm init}-\bT)\times_{j'<j}U_{j'}U_{j'}^{\top}\times_j\big(U_j^{(\rm iter)}\big(U_j^{\rm(iter)}\big)^{\top}-U_jU_j^{\top}\big)\times_{j'>j}U_{j'}^{(\rm iter)}\big(U_{j'}^{\rm(iter)}\big)^{\top}\big\|_{\ell_2}\\
&\leq& C_2(\|\bT\|_{\ell_\infty}\vee \sigma_{\xi})\Big(d_{\max}r_{\max}(\bT)\vee \frac{r_1\ldots r_k}{\min_{1\leq j\leq k}r_j/r_{\max}(\bT)}\Big)^{1/2}\sqrt{\frac{\alpha  kd_1\ldots d_k\log (d_{\max})}{n}}.
 \end{eqnarray*}
Therefore, we conclude that with probability at least $1-d_{\max}^{-\alpha}$,
$$
\frac{\|\breve{\bT}-\bT\|_{\ell_2}}{(d_1\ldots d_k)^{1/2}}\leq C_2(k)\big(\|\bT\|_{\ell_\infty}\vee \sigma_{\xi}\big)\Big(d_{\max}r_{\max}(\bT)\vee \frac{r_1\ldots r_k}{\min_{1\leq j\leq k}r_j/r_{\max}(\bT)}\Big)^{1/2}\sqrt{\frac{\alpha k\log (d_{\max})}{n}}.
$$
Bounds under general $\ell_p$ norm follows immediately from the fact that
$$
\frac{\|\breve{\bT}-\bT\|_{\ell_p}}{(d_1\ldots d_k)^{1/p}}\leq\frac{\|\breve{\bT}-\bT\|_{\ell_2}}{(d_1\ldots d_k)^{1/2}},
$$
by Cauchy-Schwartz inequality.

\subsection{Proof of Theorem~\ref{th:general}}
The proof of Theorem~\ref{th:general} is based on Theorem~\ref{th:power} and Corollary~\ref{co:main} where $\Lambda_{\min}(\bT)$ is assumed to satisfy
\begin{align}
\Lambda_{\min}(\bT)\geq c\alpha^{3/2}\big(\|\bT\|_{\ell_\infty}\vee \sigma_{\xi}\big)\log^{k+2}d_{\max}\bigg(\kappa(\bT)r_{\max}(\bT)^{(k-2)/2}\sqrt{\frac{d_{\max}d_1\ldots d_k}{n}}\nonumber\\
+\frac{(d_1\ldots d_k)^{3/4}}{n^{1/2}}+r_{\max}(\bT)^{(k-2)/2}\frac{d_1\ldots d_k}{n}\bigg).\label{eq:Lambdamin}
\end{align}
We now prove Theorem~\ref{th:general} in two separate cases depending on whether or not \eqref{eq:Lambdamin} holds.

{\it Case 1:} if the lower bound (\ref{eq:Lambdamin}) on $\Lambda_{\min}$ holds, then $U_j^{(0)}$ has exactly rank $r_j$ with overwhelming probability. We can apply Corollary~\ref{co:main} to immediately get, with probability at least $1-d_{\max}^{-\alpha}$,
$$
\frac{\|\hat{\bT}-\bT\|_{\ell_p}}{(d_1\ldots d_k)^{1/p}}\leq C_3\big(\sigma_{\xi}\vee \|\bT\|_{\ell_\infty}\big)\sqrt{\frac{\alpha(r_{\max}(\bT)d_{\max}\vee r_{\max}(\bT)^k)\log d_{\max}}{n}}
$$
for all $1\leq p\leq 2$.

{\it Case 2:} if the lower bound (\ref{eq:Lambdamin}) does not hold, meaning that
\begin{align*}
\Lambda_{\min}(\bT)\leq & c\alpha^{3/2}\big(\|\bT\|_{\ell_\infty}\vee \sigma_{\xi}\big)\log^{k+2}d_{\max}\times\nonumber\\
&\times\bigg(\kappa(\bT)r_{\max}(\bT)^{(k-2)/2}\sqrt{\frac{d_{\max}d_1\ldots d_k}{n}}+\frac{(d_1\ldots d_k)^{3/4}}{n^{1/2}}+r_{\max}(\bT)^{(k-2)/2}\frac{d_1\ldots d_k}{n}\bigg).
\end{align*}
Then, we can write
\begin{align*}
\|\hat\bT-\bT\|_{\ell_2}=&\|\hat\bT^{\rm init}\bigtimes_{j=1}^k P_{\hat U_j}-\bT\|_{\ell_2}\\
\leq& \big\|\big(\hat\bT^{\rm init}-\bT\big)\bigtimes_{j=1}^k P_{\hat U_j}\big\|_{\ell_2}+\big\|\bT\bigtimes_{j=1}^k P_{\hat U_j}-\bT\big\|_{\ell_2}\\
\leq& \big\|\big(\hat\bT^{\rm init}-\bT\big)\bigtimes_{j=1}^k P_{\hat U_j}\big\|_{\ell_2}+2\kappa(\bT)r_{\max}(\bT)^{1/2}\Lambda_{\min}(\bT).
\end{align*}
Observe that $\big(\hat\bT^{\rm init}-\bT\big)\bigtimes_{j=1}^k P_{\hat U_j}$ has multilinear ranks at most $(r_{\max}(\bT),\ldots,r_{\max}(\bT))$. By Theorem~\ref{th:init},
\begin{align*}
\|\hat\bT-\bT\|_{\ell_2}\leq& r_{\max}(\bT)^{(k-1)/2}\big\|\hat\bT^{\rm init}-\bT\big\|+2\kappa(\bT)r_{\max}(\bT)^{1/2}\Lambda_{\min}(\bT)\\
\leq& c\alpha^{3/2}\kappa(\bT)\big(\|\bT\|_{\ell_\infty}\vee \sigma_{\xi}\big)\log^{k+2}d_{\max}\bigg(\kappa(\bT)r_{\max}(\bT)^{(k-1)/2}\sqrt{\frac{d_{\max}d_1\ldots d_k}{n}}\\
&\hskip 100pt +r_{\max}(\bT)^{1/2}\frac{(d_1\ldots d_k)^{3/4}}{n^{1/2}}+r_{\max}(\bT)^{(k-1)/2}\frac{d_1\ldots d_k}{n}\bigg),
\end{align*}
which completes the proof.

\subsection{Proof of Theorem~\ref{thm:minimax}}
Assume that, without loss of generality, $d_1=d_{\max}=\max\{d_1,\ldots,d_k\}$. Define the set of matrices
$$
\calA:=\bigg\{A\in\mathbb{R}^{d_1\times r_1}: A(i,j)\in\Big\{0, \gamma(M\wedge \sigma_{\xi})\sqrt{\frac{r_0d_{\max}}{n}}\Big\}, i=1,\ldots,d_1; j=1,\ldots,r_1\bigg\}
$$
for some constant $\gamma>0$. By Varshamov-Gilbert bound (see \cite{koltchinskii2011nuclear}), there exists a subset $\widetilde{\calA}\subset \calA$ with ${\rm Card}(\widetilde{\calA})\geq 2^{r_0d_{\max}/8}$ such that for any $A_1\neq A_2\in \widetilde{\calA}$,
$$
\|A_1-A_2\|_{\ell_p}\geq \Big(\frac{r_0d_{\max}}{8}\Big)^{1/p}\gamma(M\wedge \sigma_{\xi})\sqrt{\frac{r_0d_{\max}}{n}}
$$
and the sparsity
$$
\|A_1\|_{\ell_0}=\|A_2\|_{\ell_0}=\frac{r_{0}d_{\max}}{2}.
$$
We then construct a subset of block low rank tensors
$$
\calT(r_0,2)=\bigg\{\bT(A)=\Big(A|\ldots|A|{\bf 0}\Big)\otimes {1}_{d_3}\otimes\ldots\otimes {1}_{d_k}\in\mathbb{R}^{d_1\times\ldots\times d_k}: A\in\widetilde{\calA}\bigg\}
$$
where ${\bf 0}$ represents a $d_1\times (d_2-r_0\floor{d_2/r_0})$ zero matrix and $1_{d_3}=(1,1,\ldots,1)^{\top}\in\mathbb{R}^{d_3}$ is an all-one vector. Clearly, by the construction, we have $\max_{1\leq j\leq k}r_j(\bT)\leq r_0$ for all $\bT\in\calT(r_0,2)$ and $\|\bT\|_{\ell_\infty}/\|\bT\|_{\ell_2}\leq 2(d_1\ldots d_k)^{-1/2}$. Moreover, for any $\bT_1\neq \bT_2\in \calT(r_0,2)$, we have
$$
\frac{1}{(d_1\ldots d_k)^{1/p}}\|\bT_1-\bT_2\|_{\ell_p}\geq \frac{1}{10}\gamma\big(M\wedge \sigma_{\xi}\big)\sqrt{\frac{r_0d_{\max}}{n}}.
$$
Denote by $d_{\rm KL}(\bT_1,\bT_2)$ the Kullback-Leibler divergence between $\PP_{\bT_1}$ and $\PP_{\bT_2}$. By a standard argument, we get
\begin{eqnarray*}
d_{\rm KL}(\PP_{\bT_1},\PP_{\bT_2})&=&\EE_{\PP_{\bT_1}}\log\frac{\PP_{\bT_1}}{\PP_{\bT_2}}\Big(\omega_1,Y_1,\omega_2,Y_2,\ldots,\omega_n,Y_n\Big)\\
&=&\EE_{\PP_{\bT_1}}\sum_{i=1}^n\Big(-\frac{\big(Y_i-T_1(\omega_i)\big)^2}{2\sigma_{\xi}^2}+\frac{\big(Y_i-T_2(\omega_i)\big)^2}{2\sigma_{\xi}^2}\Big)\\
&=&\EE\sum_{i=1}^n\frac{\big(T_1(\omega_i)-T_2(\omega)\big)^2}{2\sigma_{\xi}^2}\\
&=&\frac{n}{2d_1\ldots d_k\sigma_{\xi}^2}\|\bT_1-\bT_2\|_{\ell_2}^2.
\end{eqnarray*}
It follows that if $\bT_1,\bT_2\in \calT(r_0,2)$, then
\begin{eqnarray*}
d_{\rm KL}\big(\PP_{\bT_1}, \PP_{\bT_2}\big)&\leq& \frac{n}{\sigma_{\xi}^2d_1\ldots d_k}\big(\|\bT_1\|_{\ell_2}^2+\|\bT_2\|_{\ell_2}^2\big)\\
&\leq& \frac{2n}{\sigma_{\xi}^2}\gamma^2(M\wedge \sigma_{\xi})^2\frac{r_0d_{\max}}{n}\\
&\leq& \log{\rm Card}\big(\calT(r_0, 2)\big),
\end{eqnarray*}
where the last inequality holds by taking the constant $\gamma$ small enough. By Fano's lemma \citep{tsybakov2008intro},
$$
\inf_{\tilde{\bT}}\ \sup_{\bT\in\calT(r_0,2)}\ \PP_{\bT}\Big(\frac{1}{(d_1\ldots d_k)^{p}}\|\tilde{\bT}-\bT\|_{\ell_p}\geq C_1\big(M\wedge \sigma_{\xi}\big)\sqrt{\frac{r_0d_{\max}}{n}}\Big)\geq C_2
$$
for some absolute constants $C_1,C_2>0$.

On the other hand, we can consider another set of tensors
$$
\calB:=\Big\{\bB\in\RR^{r_0\times \ldots\times r_0}: B(i_{j_1},\ldots,i_{j_k})\in\Big\{0, \gamma\big(M\wedge \sigma_{\xi}\big)\sqrt{\frac{r_0^k}{n}}\Big\}, 1\leq i_{j_1},\ldots,i_{j_k}\leq r_0\Big\}
$$
for some constant $\gamma>0$. By Varshamov-Gilbert bound, there exists a subset $\widetilde{\calB}\subset \calB$ with ${\rm Card}(\widetilde{\calB})\geq 2^{r_0^k/8}$ such that for any $\bB_1\neq \bB_2\in\widetilde{\calB}$,
$$
\|\bB_1-\bB_2\|_{\ell_p}\geq \Big(\frac{r_0^k}{8}\Big)^{1/p}\gamma\big(M\wedge \sigma_{\xi}\big)\sqrt{\frac{r_0^k}{n}}
$$
with the sparsity
$$
\|\bB_1\|_{\ell_0}=\|\bB_2\|_{\ell_0}=\frac{r_0^k}{2}.
$$
Following the same analysis, we obtain
$$
\inf_{\tilde{\bT}}\ \sup_{\bT\in\calT(r_0,2)}\ \PP_{\bT}\bigg(\frac{1}{(d_1\ldots d_k)^{p}}\|\tilde{\bT}-\bT\|_{\ell_p}\geq C_1\big(\|\bT\|_{\ell_\infty}\wedge \sigma_{\xi}\big)\sqrt{\frac{r_0^k}{n}}\bigg)\geq C_2
$$
which concludes the proof in view of the fact $\calT(r_0,2)\subset \Theta(r_0,2)\subset \Theta(r_0,\beta_0)$ for any $\beta_0\geq 2$.

\bibliographystyle{plainnat}
\bibliography{refer}

\appendix

\section{Matrix Bernstein Inequalities}
In the proof, we made repeated use of several versions of matrix Bernstein inequalities which we include here for completeness.

\begin{lemma}\label{lemma:matBern-bounded}\citep{tropp2012user}
Let $\bX_1,\ldots,\bX_n\in\mathbb{R}^{m_1\times m_2}$ be random matrices with zero mean. Suppose that for some $U>0$, $\max_{1\leq i\leq n}\|\bX_i\|\leq U$ a.s. Let 
$$
\sigma^2:=\max\Big\{\Big\|\sum_{i=1}^n\mathbb{E}\bX_i\bX_i^{\top}\Big\|, \Big\|\sum_{i=1}^n\mathbb{E}\bX_i^{\top}\bX_i\Big\|\Big\}.
$$ Then, for all $t\geq 0$, the following bound holds with probability at least $1-e^{-t}$,
$$
\Big\|\frac{\bX_1+\ldots+\bX_n}{n}\Big\|\leq 2\max\bigg\{\sigma\frac{\sqrt{t+\log(m_1+m_2)}}{n}, U\frac{t+\log(m_1+m_2)}{n}\bigg\}.
$$
\end{lemma}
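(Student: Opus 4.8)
The plan is to follow the matrix Laplace transform method of Ahlswede--Winter in the form refined by \cite{tropp2012user}. First I would pass to the self-adjoint setting by Hermitian dilation: for $\bA\in\RR^{m_1\times m_2}$ set $\mathcal{D}(\bA)=\bigl(\begin{smallmatrix} 0 & \bA\\ \bA^\top & 0\end{smallmatrix}\bigr)\in\RR^{(m_1+m_2)\times(m_1+m_2)}$ and put $\bY_i:=\mathcal{D}(\bX_i)$. Since $\|\mathcal{D}(\bA)\|=\|\bA\|$ and the spectrum of $\mathcal{D}(\bA)$ is symmetric about $0$, controlling $\|\sum_i\bX_i\|$ amounts to controlling $\lambda_{\max}(\sum_i\bY_i)$. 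Moreover $\bY_i^2=\mathrm{blockdiag}(\bX_i\bX_i^\top,\,\bX_i^\top\bX_i)$, so $\|\sum_i\E\bY_i^2\|=\sigma^2$ while $\|\bY_i\|\le U$ almost surely and each $\bY_i$ has mean zero.

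Next I would invoke the master tail bound for sums of independent zero-mean Hermitian matrices: with $d:=m_1+m_2$,
$$
\PP\Bigl(\lambda_{\max}\bigl(\textstyle\sum_i\bY_i\bigr)\ge s\Bigr)\le\inf_{\theta>0}e^{-\theta s}\,\tr\exp\Bigl(\textstyle\sum_i\log\E e^{\theta\bY_i}\Bigr),
$$
which follows from Markov's inequality applied to $\tr e^{\theta\sum_i\bY_i}$ together with Lieb's concavity theorem (the map $\bZ\mapsto\tr\exp(\bH+\log\bZ)$ is concave on positive definite $\bZ$), used inductively to strip off one independent summand at a time. The key analytic input---and the step I expect to be the main obstacle---is the operator-order bound on the matrix moment generating function: for a zero-mean Hermitian $\bY$ with $\|\bY\|\le U$ and $0<\theta<3/U$ one has $\log\E e^{\theta\bY}\preceq g(\theta)\,\E\bY^2$ in the Loewner order, where $g(\theta):=(e^{\theta U}-\theta U-1)/U^2\le\theta^2/\bigl(2(1-\theta U/3)\bigr)$. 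This comes from the scalar inequality $e^{\theta x}\le 1+\theta x+(e^{\theta U}-\theta U-1)(x/U)^2$ valid for $|x|\le U$, promoted to matrices by the spectral transfer rule, followed by taking expectations (the linear term vanishes), the bound $\mathbf{I}+\bA\preceq e^{\bA}$, and operator monotonicity of $\log$.

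Combining these gives $\PP(\lambda_{\max}(\sum_i\bY_i)\ge s)\le d\exp(-\theta s+g(\theta)\sigma^2)$, and the choice $\theta=s/(\sigma^2+Us/3)$ yields the matrix Bernstein inequality $\PP(\|\sum_i\bX_i\|\ge s)\le d\exp\bigl(-\tfrac{s^2/2}{\sigma^2+Us/3}\bigr)$. Finally I would set the right-hand side equal to $e^{-t}$, i.e.\ require $\tfrac{s^2/2}{\sigma^2+Us/3}\ge t+\log d$; splitting into the regimes where $\sigma^2$ versus $Us/3$ dominates the denominator shows this holds once $s\ge 2\max\{\sigma\sqrt{t+\log d},\,U(t+\log d)\}$, the factor $2$ absorbing the case analysis, and dividing through by $n$ produces exactly the stated estimate. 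Everything past the Loewner moment generating function bound is routine bookkeeping; the genuine obstruction is that scalar Bernstein arguments do not transfer directly to noncommuting matrices, which is precisely what Lieb's theorem and the spectral transfer rule are invoked to circumvent.
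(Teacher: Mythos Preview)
The paper does not supply its own proof of this lemma; it is stated in the appendix with a citation to \cite{tropp2012user} and used as a black box throughout. Your argument correctly reproduces the standard matrix Laplace transform proof from that reference---Hermitian dilation, Lieb's concavity to obtain subadditivity of the matrix cumulant generating function, the Bernstein-type Loewner bound on $\log\E e^{\theta\bY}$, and inversion of the resulting tail---so there is nothing to compare and no gap to flag.
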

Lemma~\ref{lemma:matBern-bounded} applies to bounded random variables. Another version of the matrix Bernstein inequality deals with the case when $\|\bX\|$ is unbounded but has an exponential tail. 

\begin{lemma}\label{lemma:matBern-unbound}\citep{minsker2011some}
Let $\bX_1,\ldots,\bX_n\in\mathbb{R}^{m_1\times m_2}$ be random matrices with zero mean. Suppose that $\max_{1\leq i\leq n}\big\|\|\bX_i\|\big\|_{\psi_\alpha}\leq U^{(\alpha)}<\infty$ for some $\alpha\geq 1$. Then there exists a universal constant $C>0$ such that for all $t>0$, the following bound holds with probability at least $1-e^{-t}$,
$$
\Big\|\frac{\bX_1+\ldots+\bX_n}{n}\Big\|\leq C\max\bigg\{\sigma\frac{\sqrt{t+\log(m_1+m_2)}}{n}, U^{(\alpha)}\Big(\log\frac{\sqrt{n}U^{(\alpha)}}{\sigma}\Big)\frac{t+\log(m_1+m_2)}{n}\bigg\}.
$$
\end{lemma}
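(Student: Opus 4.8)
The bound follows from a standard \emph{truncation} argument that reduces the claim to the bounded matrix Bernstein inequality already recorded in Lemma~\ref{lemma:matBern-bounded}; this is the route of \citet{minsker2011some}, which I sketch here. Write $d=m_1+m_2$ and $U=U^{(\alpha)}$. Fix a truncation level $\tau>0$ to be chosen last, and split $\bX_i=\bX_i'+\bX_i''$ with $\bX_i'=\bX_i\mathbf{1}\{\|\bX_i\|\le\tau\}$ and $\bX_i''=\bX_i\mathbf{1}\{\|\bX_i\|>\tau\}$. Since $\mathbb{E}\bX_i=0$, we have $\mathbb{E}\bX_i'=-\mathbb{E}\bX_i''$, hence
\[
\sum_{i=1}^n\bX_i \;=\; \underbrace{\sum_{i=1}^n\bigl(\bX_i'-\mathbb{E}\bX_i'\bigr)}_{=:\,\bS_{\mathrm{bd}}}\;+\;\underbrace{\sum_{i=1}^n\bX_i''}_{=:\,\bS_{\mathrm{tl}}}\;-\;\underbrace{\sum_{i=1}^n\mathbb{E}\bX_i''}_{=:\,\bS_{\mathrm{bs}}},
\]
and it suffices to control the three pieces separately and combine by a union bound.

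\textbf{The bounded piece.} The matrices $\bX_i'-\mathbb{E}\bX_i'$ have zero mean and $\|\bX_i'-\mathbb{E}\bX_i'\|\le 2\tau$ almost surely. Moreover $\bX_i'\bX_i'^{\top}=\bX_i\bX_i^{\top}\mathbf{1}\{\|\bX_i\|\le\tau\}\preceq\bX_i\bX_i^{\top}$, so $\mathbb{E}(\bX_i'-\mathbb{E}\bX_i')(\bX_i'-\mathbb{E}\bX_i')^{\top}\preceq\mathbb{E}\bX_i'\bX_i'^{\top}\preceq\mathbb{E}\bX_i\bX_i^{\top}$, and similarly for the transposed products; thus the variance parameter of $\{\bX_i'-\mathbb{E}\bX_i'\}$ is at most $\sigma^{2}$. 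Applying Lemma~\ref{lemma:matBern-bounded} gives, for any $s>0$, with probability at least $1-\tfrac13 e^{-s}$,
\[
\Bigl\|\tfrac1n\bS_{\mathrm{bd}}\Bigr\|\;\le\;2\max\Bigl\{\sigma\tfrac{\sqrt{s+\log 3+\log d}}{n},\;2\tau\tfrac{s+\log 3+\log d}{n}\Bigr\}.
\]

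\textbf{The tail pieces.} The hypothesis $\bigl\|\,\|\bX_i\|\,\bigr\|_{\psi_\alpha}\le U$ yields $\mathbb{P}(\|\bX_i\|>u)\le 2\exp(-(u/U)^{\alpha})$ for all $u>0$. On the event $\mathcal G=\{\max_{1\le i\le n}\|\bX_i\|\le\tau\}$ one has $\bS_{\mathrm{tl}}=\mathbf 0$, while $\mathbb{P}(\mathcal G^{c})\le 2n\exp(-(\tau/U)^{\alpha})$ by a union bound. For the deterministic term, integrating the tail estimate gives $\|\mathbb{E}\bX_i''\|\le\mathbb{E}\bigl[\|\bX_i\|\mathbf{1}\{\|\bX_i\|>\tau\}\bigr]\le C_0(U+\tau)\exp(-(\tau/U)^{\alpha})$ once $\tau\ge U$ (an incomplete-Gamma estimate), so $\tfrac1n\|\bS_{\mathrm{bs}}\|\le 2C_0\tau\exp(-(\tau/U)^{\alpha})$ deterministically.

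\textbf{Choice of $\tau$; the crux.} One then takes $\tau\asymp U\bigl(\log\tfrac{\sqrt n\,U}{\sigma}\bigr)^{1/\alpha}$, so that $\exp(-(\tau/U)^{\alpha})$ is a fixed negative power of $\sqrt n\,U/\sigma$. This single choice must achieve three things at once: push the deterministic correction $\tfrac1n\|\bS_{\mathrm{bs}}\|$ and the probability $\mathbb{P}(\mathcal G^{c})$ down to the order of the sub-Gaussian fluctuations of $\bS_{\mathrm{bd}}$ (after adjusting absolute constants and relabelling the confidence parameter $s$ as $t$), while inflating the effective uniform bound only from $U$ to $\tau\asymp U\log(\sqrt n\,U/\sigma)$ — which is exactly the factor multiplying $(t+\log d)/n$ in the statement. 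Combining the three bounds via the union bound and absorbing constants into a single $C>0$ gives the assertion. The genuinely non-mechanical point — and the main obstacle — is precisely this balancing of the truncation level: it must be large enough that the discarded mass (both $\bS_{\mathrm{bs}}$ and the event $\mathcal G^c$) is no larger than the $\sigma\sqrt{t+\log d}/n$ term, yet small enough that the Bernstein ``linear'' term $2\tau(t+\log d)/n$ does not overshoot; carrying out the tail-integral estimate for $\mathbb{E}[\|\bX_i\|\mathbf{1}\{\|\bX_i\|>\tau\}]$ and feeding it back into this trade-off is where the real work lies, the rest being the routine bookkeeping of a union bound over the three events.
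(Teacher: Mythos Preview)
The paper does not prove this lemma at all: it is stated in the appendix as a citation of \citet{minsker2011some}, alongside the bounded version cited from Tropp, purely for reference in the main proofs. So there is no ``paper's own proof'' to compare against. Your truncation-plus-bounded-Bernstein strategy is indeed the standard route to this kind of result and is correct in outline.

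That said, one step in your sketch is not quite right as written. You propose $\tau\asymp U\bigl(\log(\sqrt{n}\,U/\sigma)\bigr)^{1/\alpha}$ and claim this single choice makes $\mathbb{P}(\mathcal G^{c})\le 2n\exp\bigl(-(\tau/U)^{\alpha}\bigr)$ small enough to be absorbed into the final $e^{-t}$ failure probability. But with that $\tau$ one has $\exp\bigl(-(\tau/U)^{\alpha}\bigr)\asymp \sigma/(\sqrt{n}\,U)$, so $\mathbb{P}(\mathcal G^{c})\lesssim \sqrt{n}\,\sigma/U$, which need not be small at all, let alone of order $e^{-t}$. The truncation level must also carry the confidence parameter: one needs $(\tau/U)^{\alpha}\gtrsim t+\log n$ to kill the union-bound factor $n$ and reach probability $e^{-t}$. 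The usual remedy is to take $\tau$ of order $U\bigl(t+\log n+\log(\sqrt n U/\sigma)\bigr)^{1/\alpha}$ (or equivalently to run the argument once for the deterministic bias $\bS_{\mathrm{bs}}$ with the $\log(\sqrt n U/\sigma)$ choice and separately for the event $\mathcal G^{c}$ with a $t$-dependent choice, then take the worse of the two). After that adjustment the bookkeeping goes through, and since $(t+\log n)^{1/\alpha}\le t+\log n$ for $\alpha\ge 1$, the resulting linear term is still dominated by the form in the statement up to constants.
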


\section{Proof of Lemma~\ref{lemma:Dvjk}}
The proof follows the same argument as that for Lemma 12 of \cite{yuan2015tensor}.
Denote the aspect ratio for a block $A_1\times\ldots A_k\subset [d_1]\times\ldots\times [d_k]$,
$$
h(A_1\times \ldots\times A_k)=\min\Big\{\nu: |A_j|^2\leq \nu\prod_{j=1}^k|A_j|, j=1,2,\ldots,k\Big\}.
$$
We bound the entropy of a single block. Let
\begin{eqnarray*}
\mathfrak{D}_{\nu, \ell}^{\rm (block)}=\Big\{\sgn(u_1(a_1))\ldots\sgn(u_k(a_k)){\bf 1}\big\{(a_1,\ldots,a_k)\in A_1\times\ldots\times A_k\big\}:\\
h(A_1\times\ldots A_k)\leq \nu, \prod_{j=1}^k|A_j|=\ell\Big\}.
\end{eqnarray*}
By definition, we obtain
$$
\max\big(|A_1|^2,\ldots, |A_k|^2\big)\leq \nu|A_1||A_2|\ldots |A_k|\leq \nu\ell.
$$
By dividing $\mathfrak{D}_{\nu,\ell}^{(\rm block)}$ into subsets according to $(\ell_1,\ldots,\ell_k)=(|A_1|,\ldots,|A_k|)$, we find
$$
\big|\mathfrak{D}_{\nu,\ell}^{\rm (block)}\big|\leq \sum_{\ell_1\ldots\ell_k=\ell, \max_{j}\ell_j\leq \sqrt{\nu\ell}} 2^{\ell_1+\ldots+\ell_k}{d_1\choose \ell_1}\ldots{d_k\choose \ell_k}.
$$
By the Stirling formula, for $j=1,2,\ldots,k$,
$$
{d_j \choose \ell_j}\leq \frac{d_j^{\ell_j}}{(\ell_j!)}\leq \Big(\frac{d_j}{\ell_j}\Big)^{\ell_j}e^{\ell_j}\frac{1}{\sqrt{2\pi \ell_j}},
$$
then
$$
\log\Big[\sqrt{2\pi\ell_j}2^{\ell_j}{d_j\choose \ell_j}\Big]\leq \ell_j L(\ell_j, 2d_{\max})\leq \sqrt{\nu\ell}L(\sqrt{\nu\ell},2d_{\max})
$$
where $L(x,y):=\max\{1,\log(ey/x)\}$. Let $\ell=\prod_{j=1}^{m}p_j^{v_j}$ with distinct prime factors $p_j$. Since $(v_j+1)v_j/(2p_j^{v_j/2})$ is upper bounded by $2.66$ for $p_j=2$, by $1.16$ for $p_j=3$ and by $1$ for $p_j\geq 5$,
we get
\begin{eqnarray*}
\big|\big\{(\ell_1,\ldots,\ell_k): \ell_1\ldots\ell_k=\ell\big\}\big|=\prod_{j=1}^m{v_j+1\choose k-1}\leq \prod_{j=1}^{m}{v_j+1\choose 2}^{k/2}\\\leq (2.66\times 1.16)^{k/2}(\sqrt{\ell})^{k/2}
\leq \prod_{j=1}^k\big(2\sqrt{2\pi\ell_j}\big)^{k/2},\quad \forall \prod_{j=1}^k\ell_j=\ell.
\end{eqnarray*}
Therefore,
\begin{eqnarray*}
\big|\mathfrak{D}_{\nu,\ell}^{\rm(block)}\big|\leq \frac{\exp\Big(k\sqrt{\nu \ell}L(\sqrt{\nu\ell}, 2d_{\max})\Big)}{\prod_{j=1}^k\sqrt{2\pi \ell_j}}\prod_{j=1}^k\big(2\sqrt{2\pi \ell_j}\big)^{k/2},\quad \forall (\ell_1\ldots\ell_k)=\ell\\
\leq 2^{k^2/2}(2\pi)^{k(k-2)/4}\ell^{(k-2)/4}\exp\Big(k\sqrt{\nu\ell} L(\sqrt{\nu \ell}, 2d_{\max})\Big)\\
\leq 2^{k^2/2}(2\pi)^{k(k-2)/4}\exp\Big(2k\sqrt{\nu\ell}L(\sqrt{\nu\ell}, 2d_{\max})\Big).
\end{eqnarray*}

Due to the constraint $b_1+b_2+\ldots+b_k=s$ in defining $\mathfrak{B}^{\star}_{\nu,m_{\star}}$, for any $\bY\in\mathfrak{B}^{\star}_{\nu,m_{\star}}$, $\bD_{s}(\bY)$ is composed of at most $i^{\star}:={s+k-1\choose k-1}$ blocks. Since the sum of the sizes of the blocks is bounded by $2^{q}$, we obtain
\begin{eqnarray*}
\big|\mathfrak{D}_{\nu,s,q}\big|\leq \sum_{\ell_1+\ldots+\ell_{i^{\star}}\leq 2^{q}} \prod_{i=1}^{i^{\star}}\big|\mathfrak{D}_{\nu,\ell_i}^{\rm(block)}\big|\leq \sum_{\ell_1+\ldots+\ell_{i^{\star}}\leq 2^{q}} (2\pi)^{i^{\star}k(k-2)/4}2^{i^{\star}k^2/2}\exp\Big(2k\sum_{i=1}^{i^{\star}}\sqrt{\nu\ell_i}L(\sqrt{\nu\ell_i}, 2d_{\max})\Big)\\
\leq 2^{i^{\star}k^2/2}(2^q)^{i^{\star}}(2\pi)^{i^{\star}k(k-2)/4}\max_{\ell_1+\ldots+\ell_{i^{\star}}\leq 2^q}\exp\Big(2k\sum_{i=1}^{i^{\star}}\sqrt{\nu\ell_i}L(\sqrt{\nu\ell_i}, 2d_{\max})\Big).
\end{eqnarray*}
As shown in \cite{yuan2015tensor}, $\sum_{i=1}^{i^{\star}}\sqrt{\ell_i}L(\sqrt{\nu\ell_i},2d_{\max})\leq \sqrt{i^{\star}2^q}\big(L(\sqrt{\nu2^q},2d_{\max})+\log(\sqrt{i^{\star}})\big)$, we obtain
\begin{eqnarray*}
\log \big|\mathfrak{D}_{\nu,s,q}\big|\leq i^{\star}\log(2^q)+i^{\star}k(k-2)/2+i^{\star}k^2/2+2k\sqrt{i^{\star}\nu2^q}L\big(\sqrt{\nu 2^q}, 2d_{\max}\sqrt{i^{\star}}\big).
\end{eqnarray*}
Since $i^{\star}={s+k-1\choose k-1}\leq s^k$, it follows that 
\begin{eqnarray*}
\log \big|\mathfrak{D}_{\nu,s,q}\big|\leq qs^k\log2+2k^2s^{k}\sqrt{\nu 2^q} L\big(\sqrt{\nu 2^q}, d_{\max}s^{k/2}\big).
\end{eqnarray*}

\section{Proof of Proposition~\ref{pr:hosvd}}
Recall that $\hat{U}_j^{\rm HOSVD}$ is the top-$r_j$ left singular vectors of $\calM_j\big(\hat{\bT}^{\rm init}\big)$ which can be written as
$$
\calM_j\big(\hat{\bT}^{\rm init}\big)=\calM_j\big(\bT\big)+\calM_j\big(\hat{\bT}^{\rm init}-\bT\big),
$$
and $U_j$ is the top-$r_j$ left singular vectors of $\calM_j(\bT)$. It suffices to study the upper bound of $\big\|\calM_j\big(\hat{\bT}^{\rm init}-\bT\big)\big\|$. Write
\begin{eqnarray}
\calM_j\big(\hat{\bT}^{\rm init}-\bT\big)=\frac{d_1\ldots d_k}{n}\sum_{i=1}^n \xi_i\calM_j(\be_{\omega_i})+\Big(\frac{d_1\ldots d_k}{n}\sum_{i=1}^n\big<\bT,\calM_j(\be_{\omega_i}) \big>\calM_j(\be_{\omega_i})-\bT\Big),
\label{eq:mjhatT-T}
\end{eqnarray}
whose upper bound in operator norm can be derived by matrix Bernstein inequality. For instance, it is easy to check that
$$
\max\Big\{\big\|\mathbb{E}\xi^2\calM_j(\be_\omega)\calM_j^{\top}(\be_{\omega})\big\|, \big\|\EE\xi^2\calM_j^{\top}(\be_{\omega})\calM_j(\be_{\omega})\big\|\Big\}\leq \frac{\sigma_{\xi}^2}{d_1\ldots d_k}\Big(d_j\vee \frac{d_1\ldots d_k}{d_j}\Big),
$$
and
$$
\big\|\|\xi\calM_j(\be_{\omega})\|\big\|_{\psi_2}\leq \|\xi\|_{\psi_2}\lesssim {\sigma_{\xi}}.
$$
By matrix Bernstein inequality (Lemma~\ref{lemma:matBern-bounded} and Lemma~\ref{lemma:matBern-unbound}), the following bound holds with probability at least $1-d_{\max}^\alpha$ for $\alpha\geq 1$
\begin{eqnarray*}
\Big\|\frac{d_1\ldots d_k}{n}\sum_{i=1}^n \xi_i\calM_j(\be_{\omega_i})\Big\|
\leq C_1\sigma_{\xi}\sqrt{\big(d_j\vee (d_1\ldots d_k/d_j)\big)\frac{\alpha kd_1\ldots d_k\log (d_{\max})}{n}}\\
+C_1\sigma_{\xi}\frac{\alpha kd_1\ldots d_k\log(d_{\max})}{n}.
\end{eqnarray*}
Similar bounds can be obtained for the second term in (\ref{eq:mjhatT-T}) and we conclude that with probability at least $1-d_{\max}^{-\alpha}$,
\begin{eqnarray*}
\big\|\calM_j\big(\hat{\bT}^{\rm init}-\bT\big)\big\|\leq C_1\big(\sigma_{\xi}\vee \|\bT\|_{\ell_\infty}\big)\sqrt{\big(d_j\vee (d_1\ldots d_k/d_j)\big)\frac{\alpha kd_1\ldots d_k\log (d_{\max})}{n}}\\
+C_1\big(\sigma_{\xi}\vee \|\bT\|_{\ell_\infty}\big)\frac{\alpha kd_1\ldots d_k\log(d_{\max})}{n}.
\end{eqnarray*}
The claim follows, again, by applying the Wedin's $\sin\Theta$ theorem.

\section{Proof of Proposition~\ref{pr:coherence}}
It is not hard to see that
\begin{eqnarray*}
\|\bA\|_{\ell_\infty}&=&\max_{(i_1,\ldots,i_k)\in [d_1]\times\cdots\times[d_k]}\left|\langle \bA, e_{i_1}\otimes\cdots\otimes e_{i_k}\rangle\right|\\
&=&\max_{(i_1,\ldots,i_k)\in [d_1]\times\cdots\times[d_k]}\left|\langle \bA, (U_1U_1^\top e_{i_1})\otimes\cdots\otimes (U_kU_k^\top e_{i_k})\rangle\right|\\
&\le&\|\bA\|_{\ell_2}\left(\max_{i_1\in[d_1]}\|U_1^\top e_{i_1}\|_{\ell_2}\right)\cdots \left(\max_{i_k\in[d_k]}\|U_k^\top e_{i_k}\|_{\ell_2}\right)\\
&\le&\|\bA\|_{\ell_2}\mu^{k/2}(\bA)\sqrt{r_1(\bA)\cdots r_k(\bA)\over d_1\cdots d_k},
\end{eqnarray*}
so that
$$
\beta(\bA)\le r_1^{1/2}(\bA)\cdots r_k^{1/2}(\bA)\mu^{k/2}(\bA).
$$
Here $e_{i_j}$ is the $i_j$-th canonical basis of an Euclidean space $\RR^{d_j}$.

On the other hand, we show $\mu(U_1)\leq \kappa^2(\bA)\beta^2(\bA)$. Denote by $A_1=\calM_1(\bA)$ and $C_1=\calM_1(\bC)$ where $\bC\in\RR^{r_1(\bA)\times\ldots\times r_k(\bA)}$ denotes the core tensor of $\bA$. Then
$$
A_1=U_1C_1\Big(\bigotimes_{j>1}U_j\Big)^{\top}\in\mathbb{R}^{d_1\times (d_2\ldots d_k)}.
$$
For any integer $1\leq i\leq d_1$, we denote by $A_1(i,\cdot)$ and $U_1(i,\cdot)$ the $i$-th row of $A_1$ and $U_1$ respectively. Then,
$$
(d_2\ldots d_k)\|\bA\|_{\ell_\infty}^2\geq \|A_1(i,\cdot)\|_{\ell_2}^2=U_1(i,\cdot)C_1C_1^{\top}U_1(i,\cdot)^{\top}\geq \sigma_{\min}^2(A_1)\|P_{U_1}e_i\|_{\ell_2}^2.
$$
As a result,
$$
\|P_{U_1}e_i\|_{\ell_2}^2\leq \frac{d_2\ldots d_k\|\bA\|_{\ell_\infty}^2}{\sigma_{\min}^2(A_1)}\leq \frac{\beta^2(\bA)}{d_1}\frac{\|\bA\|_{\ell_2}^2}{\sigma_{\min}^2(A_1)}\leq \kappa^2(\bA)\beta^2(\bA)\frac{r_1(\bA)}{d_1}
$$
implying that
$$
\|P_{U_1}e_i\|_{\ell_2}\leq \kappa(\bA)\beta(\bA)\sqrt{\frac{r_1(\bA)}{d_1}},\quad 1\leq i\leq d_1,
$$
which concludes the proof.

\section{Proof of Lemma~\ref{lemma:hatR}}
Without loss of generality, consider $j=1$ and our goal is to prove the upper bound of 
$$
\big\|\calM_1\big((\hat{\bT}^{\rm init}-\bT)\times_{j'>1}(P_{U_{j'}}U_{j'}^{\rm(iter)})\big)\big\|,
$$
and $\|\hat{\bR}\|$, where $\hat{\bR}$ can be explicitly expressed as
\begin{equation}\label{eq:hatRineq1}
\hat{\bR}=\sum_{s=1, s\neq j}^{k}\calM_1\Big((\hat{\bT}^{\rm init}-\bT)\times_{j'<s}(P_{U_{j'}}U_{j'}^{\rm(iter)})\times_s(P_{U_{s}}^{\perp}U_{s}^{\rm(iter)})\times_{j'>s}U_{j'}^{\rm(iter)}\Big).
\end{equation}
\paragraph{Proof of first claim} Observe that $P_{U_{j'}}U_{j'}^{\rm(iter)}=U_{j'}\big(U_{j'}^{\top}U_{j'}^{\rm(iter)}\big)$ and $\|U_{j'}^{\top}U_{j'}^{\rm(iter)}\|\leq 1$. Therefore, it suffices to prove the upper bound of 
$$
\Big\|\calM_1\big((\hat{\bT}^{\rm init}-\bT)\times_{j'>1} U_{j'}\big)\Big\|.
$$
We write 
\begin{eqnarray*}
\calM_1\big((\hat{\bT}^{\rm init}-\bT)\times_{j'>1} U_{j'}\big)=\calM_1(\hat{\bT}^{\rm init}-\bT)\Big(\bigotimes_{j'>1}U_{j'}\Big)=:\bDelta+{\bf \Xi}.
\end{eqnarray*}
Here
$$
\bDelta:=\frac{d_1\ldots d_k}{n}\sum_{i=1}^n\langle\bT,\be_{\omega_i} \rangle\calM_1(\be_{\omega_i})\Big(\bigotimes_{j'>1}U_{j'}\Big)-\calM_1(\bT)\Big(\bigotimes_{j'>1}U_{j'}\Big),
$$
and 
$$
{\bf \Xi}:=\frac{d_1\ldots d_k}{n}\sum_{i=1}^n\xi_i\calM_1(\be_{\omega_i})\Big(\bigotimes_{j'>1}U_{j'}\Big),
$$
where each term can be bounded by matrix Bernstein inequalities.

 For notational simplicity, we write $\bX_i$ in short of $\calM_1(\be_{\omega_i})\in\mathbb{R}^{d_1\times (d_2\ldots d_k)}$ and $\bM=\calM_1(\bT)\in\RR^{d_1\times (d_2\ldots d_k)}$. It is easy to check the following bounds
$$
\Big\|\mathbb{E}\xi_i^2\big(\bigotimes_{j'>1}U_{j'}\big)^{\top}\bX_i^{\top}\bX_i\big(\bigotimes_{j'>1}U_{j'}\big)\Big\|\leq \frac{d_1\sigma_{\xi}^2}{d_1\ldots d_k},
$$
and
$$
\big\|\mathbb{E}\xi_i^2\bX_i\big(\bigotimes_{j'>1}U_{j'}\big)\big(\bigotimes_{j'>1}U_{j'}\big)^{\top}\bX_i^{\top}\big\|\leq \frac{\sigma_{\xi}^2}{d_1\ldots d_k}\tr\Big(\big(\bigotimes_{j'>1}U_{j'}\big)\big(\bigotimes_{j'>1}U_{j'}\big)^{\top}\Big)\leq \frac{r_1\ldots r_k\sigma_{\xi}^2}{r_1d_1\ldots d_k}.
$$
Moreover,
$$
\Big\|\big\|\xi\bX\big(\bigotimes_{j'>1}U_{j'}\big)\big\|\Big\|_{\psi_2}\leq \|\xi\|_{\psi_2}\big\|\bX\big(\bigotimes_{j'>1}U_{j'}\big)\big\|\lesssim \sigma_{\xi}\big(\beta_0\kappa(\bT)\big)^{k-1}\Big(\frac{r_1\ldots r_k}{d_1\ldots d_k}\Big)^{1/2}\Big(\frac{d_1}{r_1}\Big)^{1/2},
$$
where we used the fact that $\bT\in\Theta(r_1,\ldots,r_k)$ and Proposition~\ref{pr:coherence} such that
$$
\big\|\big(U_{j'}\big)_{i\cdot}\big\|_{\ell_2}\leq \beta(\bT)\kappa(\bT)\sqrt{\frac{r_{j'}}{d_{j'}}}\quad \forall\ 1\leq j'\leq k, 1\leq i\leq d_{j'}.
$$
By matrix Bernstein inequality (Lemma~\ref{lemma:matBern-bounded} and Lemma~\ref{lemma:matBern-unbound}), the following bound holds with probability at least $1-d_{\max}^{-\alpha}$,
\begin{align*}
\|{\bf \Xi}\|\leq C_1\sigma_{\xi}\max\bigg\{(d_{1}\vee r_2\ldots r_k)^{1/2}&\frac{(\alpha kd_1\ldots d_k\log (d_{\max}))^{1/2}}{n^{1/2}},\\
&\big(\beta(\bT)\kappa(\bT)\big)^{k-1}\frac{\alpha k(r_1\ldots r_kd_1\ldots d_k)^{1/2}\log (d_{\max})}{n}\Big(\frac{d_1}{r_1}\Big)^{1/2}\bigg\}.
\end{align*}
where the first term dominates if $n\geq C_2k\alpha \big(\beta(\bT)\kappa(\bT)\big)^{2(k-1)}d_1\log (d_{\max})$ for some absolute constants $C_1,C_2>0$.

In a similar fashion, we can obtain the upper bound of $\|\bDelta\|$. Therefore, we conclude that if $n\geq C_1k\alpha\big(\beta(\bT)\kappa(\bT)\big)^{2(k-1)}d_1\log (d_{\max})$, then with probability at least $1-d_{\max}^{-\alpha}$ that
$$
\Big\|\calM_1\big((\hat{\bT}^{\rm init}-\bT)\times_{j'>1}U_{j'}\big)\Big\|\leq C_2\big(\|\bT\|_{\ell_\infty}\vee \sigma_{\xi}\big)(d_1\vee r_2\ldots r_k)^{1/2}\sqrt{\frac{\alpha kd_1\ldots d_{k}\log (d_{\max})}{n}},
$$
for some absolute constants $C_1,C_2>0$. 

\paragraph{Proof of the second claim} Recall the representation of $\hat{\bR}$ as (\ref{eq:hatRineq1}), it suffices to prove the upper bound, for each $s\in[k]$ and $s\neq j$,  of 
$$
\Big\|\calM_1\Big((\hat{\bT}^{\rm init}-\bT)\times_{j'<s}\big(P_{U_{j'}}U_{j'}^{(\rm iter)}\big)\times_s\big(P_{U_s}^{\perp}U_s^{\rm (iter)}\big)\times_{j'>s}U_{j'}^{\rm(iter)}\Big)\Big\|,
$$
which is a matrix of size $d_1\times (r_2r_3\ldots r_k)$. To this end, we need the following simple fact.
\begin{lemma}\label{lemma:tensornorm}
For a tensor $\bA\in\RR^{d_1\times\ldots d_k}$ with multilinear ranks $(r_1,\ldots,r_k)$, the following bound holds for all $j=1,2,\ldots,k$,
\begin{eqnarray*}
\big\|\calM_j(\bA)\big\|\leq \|\bA\|\sqrt{\frac{(r_1\ldots r_k)/r_j}{\max_{j'\neq j}r_{j'}}}.
\end{eqnarray*}
\end{lemma}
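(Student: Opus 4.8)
The plan is to peel off one mode of the tensor using the variational formula for the matrix operator norm, reduce the claim to a bound between the $\ell_2$ norm and the spectral norm of a tensor of order $k-1$, and then prove that bound. Without loss of generality take $j=1$ and write $\bM=\calM_1(\bA)\in\RR^{d_1\times(d_2\cdots d_k)}$. Since $\|\bM\|=\max_{u\in\RR^{d_1},\,\|u\|_{\ell_2}=1}\|\bM^{\top}u\|_{\ell_2}$, I would fix a maximizing unit vector $u$ and reshape the vector $\bM^{\top}u\in\RR^{d_2\cdots d_k}$ into a $(k-1)$th order tensor $\bB\in\RR^{d_2\times\cdots\times d_k}$; by the flattening convention this is exactly the contraction $\bB=\bA\times_1 u^{\top}$, and $\|\bM\|=\|\bB\|_{\ell_2}$. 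Two properties of $\bB$ are then immediate: (i) $\|\bB\|\le\|\bA\|$, because for unit vectors $v_2,\ldots,v_k$ one has $\langle\bB,v_2\otimes\cdots\otimes v_k\rangle=\langle\bA,u\otimes v_2\otimes\cdots\otimes v_k\rangle\le\|\bA\|$; and (ii) $r_{\ell}(\bB)\le r_{\ell}(\bA)=r_{\ell}$ for $2\le\ell\le k$, since each mode-$\ell$ fiber of $\bB$ is a linear combination of mode-$\ell$ fibers of $\bA$, so the column space of $\calM_{\ell}(\bB)$ sits inside that of $\calM_{\ell}(\bA)$.

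The remaining ingredient is the auxiliary fact: for any order-$m$ tensor $\bC$ with multilinear ranks $(\rho_1,\ldots,\rho_m)$ and any mode $\ell_0$, $\|\bC\|_{\ell_2}^2\le\|\bC\|^2\prod_{\ell\ne\ell_0}\rho_{\ell}$. To prove it I would compress every mode $\ell\ne\ell_0$: let $V_{\ell}\in\RR^{d_\ell\times\rho_\ell}$ have orthonormal columns spanning the column space of $\calM_{\ell}(\bC)$, so that $\bC=\bG\times_{\ell\ne\ell_0}V_{\ell}$ with $\bG:=\bC\times_{\ell\ne\ell_0}V_{\ell}^{\top}$. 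Orthonormality of the columns of the $V_{\ell}$ gives $\|\bC\|_{\ell_2}=\|\bG\|_{\ell_2}$ and, by the same argument as in (i), $\|\bG\|\le\|\bC\|$. For $\bG$, whose mode-$\ell$ dimension is now $\rho_\ell$ for $\ell\ne\ell_0$, I expand $\|\bG\|_{\ell_2}^2$ as the sum, over the $\prod_{\ell\ne\ell_0}\rho_\ell$ choices of indices in the modes $\ell\ne\ell_0$, of the squared Euclidean norm of the corresponding mode-$\ell_0$ fiber of $\bG$; each such fiber norm equals $\langle\bG,\,e_{i_1}\otimes\cdots\otimes w\otimes\cdots\otimes e_{i_m}\rangle$ for a suitable unit vector $w$ in slot $\ell_0$ and canonical basis vectors elsewhere, hence is at most $\|\bG\|$. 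This gives $\|\bG\|_{\ell_2}^2\le\|\bG\|^2\prod_{\ell\ne\ell_0}\rho_\ell$, and the auxiliary fact follows.

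To finish, apply the auxiliary fact to $\bB$ with $\ell_0$ chosen as the mode in $\{2,\ldots,k\}$ for which $r_{\ell_0}=\max_{\ell\ne1}r_{\ell}$; combining with (i), (ii) and $\|\bM\|=\|\bB\|_{\ell_2}$,
$$
\|\calM_1(\bA)\|^2\le\|\bB\|^2\!\!\!\prod_{\ell\in\{2,\ldots,k\},\,\ell\ne\ell_0}\!\!\!r_{\ell}(\bB)\;\le\;\|\bA\|^2\,\frac{(r_1\cdots r_k)/r_1}{\max_{\ell\ne1}r_{\ell}},
$$
and taking square roots yields the bound. I do not foresee a real obstacle here; the one place that must be handled correctly is the exponent in the auxiliary fact — one has to sum squared \emph{fiber} norms over all-but-one index rather than bounding entrywise, otherwise one loses an extra factor $\rho_{\ell_0}$, and it is precisely this choice that produces the $\max_{j'\ne j}r_{j'}$ in the denominator of the claimed inequality.
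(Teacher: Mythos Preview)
Your argument is correct. The route differs from the paper's only in packaging: the paper first passes to the core tensor $\bC\in\RR^{r_1\times\cdots\times r_k}$ (so that $\|\calM_1(\bA)\|=\|\calM_1(\bC)\|$ and $\|\bA\|=\|\bC\|$), writes $\|\calM_1(\bC)\|=\sup_{u,v}\langle\calM_1(\bC),u\otimes v\rangle$, splits $v$ into $r_3\cdots r_k$ blocks of length $r_2$ corresponding to the mode-$2$ matrix slices $\bC_{2,s}$, bounds each $u^\top\bC_{2,s}v_s$ by $\|\bC\|\,\|v_s\|$, and finishes with Cauchy--Schwarz on $(\|v_s\|)_s$. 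You instead contract mode~$1$ first to the $(k-1)$-tensor $\bB$, state and prove the clean auxiliary inequality $\|\bC\|_{\ell_2}^2\le\|\bC\|^2\prod_{\ell\ne\ell_0}\rho_\ell$, and apply it to $\bB$. Both proofs rest on the same two facts---contractions along unit vectors do not increase the tensor spectral norm, and one then counts $\prod_{\ell\notin\{1,\ell_0\}}r_\ell$ slices/fibers---so the underlying mechanism is identical; your formulation has the mild advantage that the auxiliary inequality is a reusable standalone statement.
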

\begin{proof}[Proof of Lemma~\ref{lemma:tensornorm}]
Let $\bC\in\RR^{r_1\times\ldots\times r_k}$ be the core tensor of $\bA$. Without loss of generality, consider $j=1$. It suffices to show that
$$
\big\|\calM_1(\bC)\big\|\leq \|\bC\|\sqrt{\frac{r_2r_3\ldots r_k}{\max_{2\leq j\leq k}r_j}}.
$$
Recall that $\calM_1(\bC)\in\RR^{r_1\times (r_2\ldots r_k)}$. 
 Let $u\in \mathbb{R}^{r_1}, v\in\mathbb{R}^{r_2\ldots r_k}$ with $\max\big\{\|u\|_{\ell_2},\|v\|_{\ell_2}\big\|\leq 1$. By definition,
\begin{eqnarray*}
\|\calM_1(\bC)\|:=\underset{u,v}{\sup}\ \big<\calM_1(\bC),u\otimes v \big>.
\end{eqnarray*}
Now let $\bC_{2,s}, s=1,2,\ldots, (r_3\ldots r_k) $ denote the mode-$2$ slices of $\bC$. In other words, all the matrices $\bC_{2,s}$ has size $r_1\times r_2$ and
$$
\bC_{2, s}=\Big(C(i_1,i_2,j_3,j_4,\ldots,j_k)\Big)_{i_1\in[r_1],i_2\in[r_2]}
$$
with 
$$
s=\Big(\sum_{q=3}^k (j_q-1)\big(\prod_{p=q+1}^kr_p\big)\Big)+1.
$$
Similarly, let $\{v_1,\ldots,v_{r_3\ldots r_k}\}\subset \mathbb{R}^{r_2}$ denote the corresponding segments of $v$ each has size $r_2$. Then
\begin{eqnarray*}
\|\calM_1(\bC)\|\leq \underset{u,v}{\sup} \sum_{s=1}^{r_3\ldots r_k}\|\bC_{2,s}\|\|v_{s}\|\leq \|\bC\| \underset{u,v}{\sup}\sum_{s=1}^{r_3\ldots r_k}\|v_s\|\leq \|\bC\|\sqrt{r_3\ldots r_k}\\
=\|\bC\|\sqrt{\frac{r_2r_3\ldots r_k}{r_2}}
\end{eqnarray*}
where the last inequality is due to Cauchy-Schwarz inequality. Similar bounds can be attained through other matricizations of $\bC$ and we end up with the bound
$$
\big\|\calM_1(\bC)\big\|\leq \|\bC\|\sqrt{\frac{r_2r_3\ldots r_k}{\max_{2\leq j\leq k}r_j}}.
$$
For general $j=1,2,\ldots,k$, the conclusion becomes
$$
\big\|\calM_j\big(\bA\big)\big\|\leq \|\bA\|\sqrt{\frac{(r_1\ldots r_k)/r_j}{\max_{j'\neq j}r_{j'}}}.
$$
\end{proof}

Based on Lemma~\ref{lemma:tensornorm}, we conclude that
\begin{eqnarray*}
\Big\|\calM_1\Big((\hat{\bT}^{\rm init}-\bT)\times_{j'<s}\big(P_{U_{j'}}U_{j'}^{(\rm iter)}\big)\times_s\big(P_{U_s}^{\perp}U_s^{\rm (iter)}\big)\times_{j'>s}U_{j'}^{\rm(iter)}\Big)\Big\|\\
\leq\big\|\big(U_s^{\perp}\big)^{\top}U_s^{\rm(iter)}\big\|\sqrt{\frac{(r_1\ldots r_k)/r_1}{\max_{j'\neq 1}r_{j'}}}\|\hat{\bT}^{\rm init}-\bT\|\leq E_{\rm iter}\sqrt{\frac{(r_1\ldots r_k)/r_1}{\max_{j'\neq 1}r_{j'}}}\|\hat{\bT}^{\rm init}-\bT\|,
\end{eqnarray*}
where we used the fact
$$
\big\|\big(U_{s}^{\perp}\big)^{\top}U_s^{\rm(iter)}\big\|=\big\|\big(U_{s}^{\perp}\big)^{\top}\big(U_sU_s^{\top}-U_s^{\rm(iter)}(U_s^{\rm(iter)})^{\top}\big)U_s^{\rm(iter)}\big\|\leq E_{\rm iter}.
$$
By Theorem~\ref{th:init}, we know that the following bound holds with probability at least $1-d_{\max}^{-\alpha}$,
\begin{eqnarray*}
\Big\|\calM_1\Big((\hat{\bT}^{\rm init}-\bT)\times_{j'<s}\big(P_{U_{j'}}U_{j'}^{(\rm iter)}\big)\times_s\big(P_{U_s}^{\perp}U_s^{\rm (iter)}\big)\times_{j'>s}U_{j'}^{\rm(iter)}\Big)\Big\|\\
\leq E_{\rm iter} C_1\alpha k^{k+3}\sqrt{\frac{(r_1\ldots r_k)/r_1}{\max_{j'\neq 1}r_{j'}}}\big(\|\bT\|_{\ell_\infty}\vee \sigma_{\xi}\big)\max\bigg\{\sqrt{\frac{kd_{\max}d_1\ldots d_k}{n}},\quad \frac{kd_1\ldots d_k}{n}\bigg\}\log^{k+2}d_{\max},
\end{eqnarray*}
which obviously concludes the proof by observing that $\max_{1\leq j\leq k}r_j\leq r_{\max}(\bT)$.

\section{Proof of Lemma~\ref{lemma:TUiter-U}}
We need to obtain the upper bound 
\begin{align*}
&\Big\|(P_{U_{j}^{(\rm iter)}}-P_{U_j})\calM_j(\bT)\big(\bigotimes_{j'<j}P_{U_{j'}^{(\rm iter)}}\bigotimes_{j'>j}P_{U_{j'}}\big)\Big\|_{\ell_2}\\
=&
\Big\|(P_{U_{j}^{(\rm iter)}}-P_{U_j})\calM_j(\bT)\big(\bigotimes_{j'<j}P_{U_{j'}}P_{U_{j'}^{(\rm iter)}}\bigotimes_{j'>j}P_{U_{j'}}\big)\Big\|_{\ell_2}.
\end{align*}
Observe that $(I-P_{U_j})\calM_j(\bT)=0$, we write
\begin{align*}
&\Big\|(P_{U_{j}^{(\rm iter)}}-P_{U_j})\calM_j(\bT)\big(\bigotimes_{j'<j}P_{U_{j'}}P_{U_{j'}^{(\rm iter)}}\bigotimes_{j'>j}P_{U_{j'}}\big)\Big\|\\
\leq& \Big\|(P_{U_{j}^{(\rm iter)}}-I)\calM_j(\hat\bT^{\rm init})\big(\bigotimes_{j'<j}P_{U_{j'}}P_{U_{j'}^{(\rm iter)}}\bigotimes_{j'>j}P_{U_{j'}}\big)\Big\|\\
&\hskip 100pt+\Big\|(P_{U_{j}^{(\rm iter)}}-I)\calM_j(\hat\bT^{\rm init}-\bT)\big(\bigotimes_{j'<j}P_{U_{j'}}P_{U_{j'}^{(\rm iter)}}\bigotimes_{j'>j}P_{U_{j'}}\big)\Big\|\\
\leq& \Big\|(P_{U_{j}^{(\rm iter)}}-I)\calM_j(\hat\bT^{\rm init})\big(\bigotimes_{j'<j}P_{U_{j'}}P_{U_{j'}^{(\rm iter)}}\bigotimes_{j'>j}P_{U_{j'}}\big)\Big\|\\
&\hskip 100pt+C_2\big(\|\bT\|_{\ell_\infty}\vee \sigma_{\xi}\big)\Big(d_j\vee \frac{r_1\ldots r_k}{r_j}\Big)^{1/2}\sqrt{\frac{\alpha k d_1\ldots d_k\log d_{\max}}{n}}
\end{align*}
where the last inequality is due to Lemma~\ref{lemma:hatR}.  Moreover, we write
\begin{align*}
&\Big\|(P_{U_{j}^{(\rm iter)}}-I)\calM_j(\hat\bT^{\rm init})\big(\bigotimes_{j'<j}P_{U_{j'}}P_{U_{j'}^{(\rm iter)}}\bigotimes_{j'>j}P_{U_{j'}}\big)\Big\|\\
\leq& \Big\|(P_{U_{j}^{(\rm iter)}}-I)\calM_j(\hat\bT^{\rm init})\bigotimes_{j'\neq j}P_{U_{j'}^{(\rm iter)}}\Big\|\\
&\hskip 50pt+\Big\|(P_{U_{j}^{(\rm iter)}}-I)\calM_j(\hat\bT^{\rm init})\Big(\bigotimes_{j'\neq j}P_{U_{j'}^{(\rm iter)}}-\bigotimes_{j'<j}P_{U_{j'}}P_{U_{j'}^{(\rm iter)}}\bigotimes_{j'>j}P_{U_{j'}}\Big)\Big\|.
\end{align*}
Recall that $U_j^{(\rm iter)}$ are the top $r_j$ left singular vectors of $\calM_j(\hat\bT^{\rm init})\bigotimes_{j'\neq j}P_{U_{j'}^{(\rm iter)}}$. Therefore, 
\begin{align*}
\Big\|(P_{U_{j}^{(\rm iter)}}-I)\calM_j(\hat\bT^{\rm init})\bigotimes_{j'\neq j}P_{U_{j'}^{(\rm iter)}}\Big\|\leq& \sigma_{r_j+1}\Big(\calM_j(\hat\bT^{\rm init})\bigotimes_{j'\neq j}P_{U_{j'}^{(\rm iter)}}\Big)\\
\leq& \Big\|\calM_j(\hat\bT^{\rm init}-\bT)\bigotimes_{j'\neq j}P_{U_{j'}^{(\rm iter)}}\Big\|\\
\leq& C_1\big(\|\bT\|_{\ell_\infty}\vee \sigma_{\xi}\big)\Big(d_j\vee \frac{r_1\ldots r_k}{r_j}\Big)^{1/2}\sqrt{\frac{\alpha k d_1\ldots d_k\log d_{\max}}{n}}+\\
&+C_2 E_{{\rm iter}_{\max}}\alpha k^{k+4}r_{\max}(\bT)^{(k-2)/2}\big(\|\bT\|_{\ell_\infty}\vee \sigma_{\xi}\big)\times\\
&\times\max\Big\{\sqrt{\frac{kd_{\max}d_1\ldots d_n}{n}}, \frac{kd_1\ldots d_k}{n}\Big\}\log^{k+2}d_{\max},
\end{align*}
where the last inequality is again due to Lemma~\ref{lemma:hatR}. We then apply Lemma~\ref{lemma:hatR} to the following term,
\begin{align*}
&\Big\|(P_{U_{j}^{(\rm iter)}}-I)\calM_j(\hat\bT^{\rm init})\Big(\bigotimes_{j'\neq j}P_{U_{j'}^{(\rm iter)}}-\bigotimes_{j'<j}P_{U_{j'}}P_{U_{j'}^{(\rm iter)}}\bigotimes_{j'>j}P_{U_{j'}}\Big)\Big\|\\
\leq& \Big\|(P_{U_{j}^{(\rm iter)}}-I)\calM_j(\hat\bT^{\rm init}-\bT)\Big(\bigotimes_{j'\neq j}P_{U_{j'}^{(\rm iter)}}-\bigotimes_{j'<j}P_{U_{j'}}P_{U_{j'}^{(\rm iter)}}\bigotimes_{j'>j}P_{U_{j'}}\Big)\Big\|\\
&\hskip 50pt+\Big\|(P_{U_{j}^{(\rm iter)}}-I)\calM_j(\bT)\Big(\bigotimes_{j'\neq j}P_{U_{j'}^{(\rm iter)}}-\bigotimes_{j'<j}P_{U_{j'}}P_{U_{j'}^{(\rm iter)}}\bigotimes_{j'>j}P_{U_{j'}}\Big)\Big\|\\
\leq& C_2 E_{{\rm iter}_{\max}}\alpha k^{k+4}r_{\max}(\bT)^{(k-2)/2}\big(\|\bT\|_{\ell_\infty}\vee \sigma_{\xi}\big)\max\Big\{\sqrt{\frac{kd_{\max}d_1\ldots d_n}{n}}, \frac{kd_1\ldots d_k}{n}\Big\}\log^{k+2}d_{\max}\\
&\hskip 50pt+\Big\|(P_{U_{j}^{(\rm iter)}}-I)\calM_j(\bT)\Big(\bigotimes_{j'\neq j}P_{U_{j'}^{(\rm iter)}}-\bigotimes_{j'<j}P_{U_{j'}}P_{U_{j'}^{(\rm iter)}}\bigotimes_{j'>j}P_{U_{j'}}\Big)\Big\|.
\end{align*}
Again by the fact $(I-P_{U_j})\calM_j(\bT)=0$, we have 
\begin{align*}
&\Big\|(P_{U_{j}^{(\rm iter)}}-I)\calM_j(\bT)\Big(\bigotimes_{j'\neq j}P_{U_{j'}^{(\rm iter)}}-\bigotimes_{j'<j}P_{U_{j'}}P_{U_{j'}^{(\rm iter)}}\bigotimes_{j'>j}P_{U_{j'}}\Big)\Big\|\\
=&\Big\|(P_{U_{j}^{(\rm iter)}}-P_{U_j})\calM_j(\bT)\Big(\bigotimes_{j'\neq j}P_{U_{j'}^{(\rm iter)}}-\bigotimes_{j'<j}P_{U_{j'}}P_{U_{j'}^{(\rm iter)}}\bigotimes_{j'>j}P_{U_{j'}}\Big)\Big\|\\
\leq& k\Lambda_{\max}(\bT)E_{{\rm iter}_{\max}}^2.
\end{align*}
To sum up, we obtain
\begin{align*}
&\Big\|(P_{U_{j}^{(\rm iter)}}-P_{U_j})\calM_j(\bT)\big(\bigotimes_{j'<j}P_{U_{j'}}P_{U_{j'}^{(\rm iter)}}\bigotimes_{j'>j}P_{U_{j'}}\big)\Big\|\\
\leq& C_2\big(\|\bT\|_{\ell_\infty}\vee \sigma_{\xi}\big)\Big(d_j\vee \frac{r_1\ldots r_k}{r_j}\Big)^{1/2}\sqrt{\frac{\alpha k d_1\ldots d_k\log d_{\max}}{n}}+k\Lambda_{\max}(\bT)E_{{\rm iter}_{\max}}^2+\\
&\hskip 50pt+C_2 E_{{\rm iter}_{\max}}\alpha k^{k+4}r_{\max}(\bT)^{(k-2)/2}\big(\|\bT\|_{\ell_\infty}\vee \sigma_{\xi}\big)\times\\
&\hskip 150pt\times\max\Big\{\sqrt{\frac{kd_{\max}d_1\ldots d_n}{n}}, \frac{kd_1\ldots d_k}{n}\Big\}\log^{k+2}d_{\max}.
\end{align*}
Since 
$$
E_{{\rm iter}_{\max}}\leq C_2 2^{k/2}\frac{\big(\|\bT\|_{\ell_\infty}\vee \sigma_{\xi}\big)}{\Lambda_{\min}(\bT)}\Big(d_{\max}\vee \frac{r_1\ldots r_k}{\min_{1\leq j\leq k} r_j}\Big)^{1/2}\sqrt{\frac{\alpha k d_1\ldots d_k\log d_{\max}}{n}},
$$
we conclude that, with probability at least $1-d_{\max}^{-\alpha}$,
\begin{align*}
&\Big\|(P_{U_{j}^{(\rm iter)}}-P_{U_j})\calM_j(\bT)\big(\bigotimes_{j'<j}P_{U_{j'}}P_{U_{j'}^{(\rm iter)}}\bigotimes_{j'>j}P_{U_{j'}}\big)\Big\|\\
\leq& C_2\big(\|\bT\|_{\ell_\infty}\vee \sigma_{\xi}\big)\Big(d_{\max}\vee \frac{r_1\ldots r_k}{\min_{1\leq j\leq k} r_j}\Big)^{1/2}\sqrt{\frac{\alpha k d_1\ldots d_k\log d_{\max}}{n}},
\end{align*}
as long as $n$ satisfies the requirement given in Lemma~\ref{lemma:TUiter-U}. Since 
$$
(P_{U_{j}^{(\rm iter)}}-P_{U_j})\calM_j(\bT)\big(\bigotimes_{j'<j}P_{U_{j'}}P_{U_{j'}^{(\rm iter)}}\bigotimes_{j'>j}P_{U_{j'}}\big)
$$
 has rank at most $2r_j$, we obtain
\begin{align*}
\Big\|(P_{U_{j}^{(\rm iter)}}-P_{U_j})\calM_j(\bT)\big(\bigotimes_{j'<j}P_{U_{j'}}P_{U_{j'}^{(\rm iter)}}\bigotimes_{j'>j}P_{U_{j'}}\big)\Big\|_{\ell_2}\\
\leq C_2\big(\|\bT\|_{\ell_\infty}\vee \sigma_{\xi}\big)\Big(d_{\max}r_{\max}(\bT)\vee \frac{r_1\ldots r_k}{\min_{1\leq j\leq k} r_j/r_{\max}(\bT)}\Big)^{1/2}\sqrt{\frac{\alpha k d_1\ldots d_k\log d_{\max}}{n}}.
\end{align*}
\end{document}